\newcommand\ie{i.\,e.\xspace}
\newcommand\eg{e.\,g.\xspace}
\newcommand{\ind}{\perp\!\!\!\perp} 
\DeclareMathOperator*{\argmax}{arg\,max}
\DeclareMathOperator*{\argmin}{arg\,min}
\newcommand*\diff{\mathop{}\!\mathrm{d}}
\newcommand{\abs}[1]{\left\lvert #1 \right\rvert}
\newcommand{\norm}[1]{\left\lVert#1\right\rVert}
\newcommand\equalhat{\mathrel{\stackon[1.5pt]{=}{\stretchto{%
    \scalerel*[\widthof{=}]{\wedge}{\rule{1ex}{3ex}}}{0.5ex}}}}
\newcommand{\cmark}{\textcolor{ForestGreen}{\ding{51}}}%
\newcommand{\xmark}{\textcolor{BrickRed}{\ding{55}}}%
\newcolumntype{P}[1]{>{\centering\arraybackslash}p{#1}}
\newtheorem{lemma}{Lemma}
\newtheorem{definition}{Definition}
\newtheorem{rem}{Remark}
\newenvironment{numtheorem}[1]
  {\innercustomtheorem}
  {\endinnercustomtheorem}
\definecolor{tabblue}{HTML}{2077B4}
\definecolor{tabred}{HTML}{FF0000}
\definecolor{yellow}{HTML}{FFFF88}
\definecolor{orange}{HTML}{FFCC99}
\newtcbox{\myovalbox}{colback=yellow,boxrule=0.1pt,arc=3pt,
  boxsep=0pt,left=0.5pt,right=0.5pt,top=0.5pt,bottom=0.5pt,}
\title{GDR-learners: Orthogonal Learning of Generative Models for Potential Outcomes}
\author{Valentyn Melnychuk \& Stefan Feuerriegel \\
    LMU Munich \& Munich Center for Machine Learning \\
    Munich, Germany\\
    \texttt{melnychuk@lmu.de} \\
}
\newcommand{\GDRlearners}{\emph{GDR-learners}\xspace}
\newcommand{\CDPOs}{CDPOs\xspace}
\begin{document}

\maketitle

\begin{abstract}
Various deep generative models have been proposed to estimate potential outcomes distributions from observational data. However, none of them have the favorable theoretical property of general Neyman-orthogonality and, associated with it, quasi-oracle efficiency and double robustness. In this paper, we introduce a general suite of generative Neyman-orthogonal (doubly-robust) learners that estimate the conditional distributions of potential outcomes. Our proposed generative doubly-robust learners (GDR-learners) are flexible and can be instantiated with many state-of-the-art deep generative models. In particular, we develop GDR-learners based on (a)~conditional normalizing flows (which we call \emph{GDR-CNFs}), (b)~conditional generative adversarial networks (\emph{GDR-CGANs}), (c)~conditional variational autoencoders (\emph{GDR-CVAEs}), and (d)~conditional diffusion models (\emph{GDR-CDMs}). Unlike the existing methods, our GDR-learners possess the properties of quasi-oracle efficiency and rate double robustness, and are thus asymptotically optimal. In a series of \mbox{(semi-)synthetic} experiments, we demonstrate that our GDR-learners are very effective and outperform the existing methods in estimating the conditional distributions of potential outcomes.  

\end{abstract}

\section{Introduction} 
Causal machine learning (ML) is widely used to predict potential outcomes (POs), namely, the outcome after an intervention. In medicine, for example, an accurate prediction of POs can guide the choice of the optimal treatment from several available treatment options \citep{feuerriegel2024causal}). The POs have a central role in the causal ML as they define various causal quantities, such as treatment effects \citep{curth2021nonparametric} or a policy value \citep{qian2011performance,frauen2025treatment}. 

Recently, many works have suggested departing from estimating simple \emph{conditional averages} of the POs (CAPOs) 
and rather aim at the \emph{whole conditional distributions} 
\citep{louizos2017causal,yoon2018ganite,zhang2021treatment,vanderschueren2023noflite,ma2024diffpo,wu2025po}
In this way, one can capture the inherent randomness of the PO (see Fig.~\ref{fig:motivation}), namely, its \emph{aleatoric uncertainty}. The aleatoric uncertainty is particularly important for reliable decision-making \citep{spiegelhalter2017risk,van2019communicating}: by knowing the whole distribution of the PO, decision-makers such as clinicians may be able to evaluate the probabilities of unwanted outcomes. In this work, we thus focus on learning the \textbf{conditional distributions of potential outcomes (\CDPOs)}.

Various state-of-the-art generative models have been developed (or can be adapted) to model the \CDPOs \citep{louizos2017causal,kocaoglu2018causalgan,yoon2018ganite,pawlowski2020deep,sauer2021counterfactual,zhang2021treatment,sanchez2022diffusion,ribeiro2023high,vanderschueren2023noflite,ma2024diffpo,wu2025po}. These works mainly differ in \textbf{(a)}~the underlying probabilistic models (\eg, variational autoencoders \citep{louizos2017causal,zhang2021treatment,ribeiro2023high}, diffusion models \citep{sanchez2022diffusion,ma2024diffpo,wu2025po}, etc.); and in \textbf{(b)}~their learning objectives (\eg, plug-in \citep{vanderschueren2023noflite} and inverse propensity of treatment weighted (IPTW) losses \citep{ma2024diffpo}). 

However, these works rarely focus on the optimality of the overall learning procedure. In fact, to the best of our knowledge, \textbf{none} of the methods fulfill \emph{general Neyman-orthogonality}\footnote{\citet{ma2024diffpo} suggested a Neyman-orthogonal learner for the \CDPOs, yet under the condition that the class of generative models includes the ground-truth \CDPOs.} \citep{foster2023orthogonal}. Nevertheless, Neyman-orthogonality is a desirable asymptotic optimality of the loss, and often comes with several favorable related properties such as \emph{quasi-oracle efficiency} and \emph{rate double robustness}.   

\begin{figure}[t]
    \centering
    \vspace{-0.5cm}
    \includegraphics[width=0.92\linewidth]{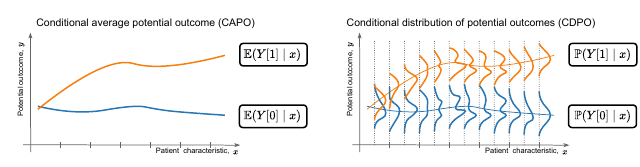}
    \vspace{-0.3cm}
    \caption{\textbf{Capturing the uncertainty in POs helps make reliable treatment decisions}. Unlike conditional average potential outcomes (CAPOs, \textit{left}), conditional distributions of potential outcomes (\CDPOs, \textit{right}) allow for quantifying the aleatoric uncertainty of the POs. As a result, they capture more information about the potential outcomes, such as heavy tails or multi-modalities. }
    \label{fig:motivation}
    \vspace{-0.4cm}
\end{figure}

In this paper, we \textit{introduce a novel suite of Neyman-orthogonal doubly-robust learners that learn the \CDPOs}. We refer to our learners as \textbf{\GDRlearners}. Our \GDRlearners proceed in two stages: in the first stage, the nuisance functions are estimated (i.e., a conditional outcome distribution and the propensity score); and, in the second stage, the target generative models are fitted with a doubly-robust target objective. In this way, the target generative model class can be chosen \emph{arbitrarily}. For example, one can even make restrictions due to the fairness or interpretability constraints.  

Furthermore, we show that, due to the Neyman-orthogonality and under some mild conditions, 
our \GDRlearners also have favorable asymptotic properties: \emph{quasi-oracle efficiency} and the \emph{rate double robustness}. Here, \emph{quasi-oracle efficiency} \citep{nie2021quasi} allows for learning of the target generative model as if the ground-truth nuisance functions are known (even when the nuisance functions are converging slowly). Further, \emph{rate double robustness} \citep{kang2007demystifying} compensates for a slow convergence speed of one of the nuisance functions with a fast convergence of another.  Hence, our \GDRlearners are in (some sense) asymptotically optimal. 


Our \GDRlearners can be instantiated with many state-of-the-art generative models. In this work, we introduce four variants of our \GDRlearners: 
(a)~conditional normalizing flows (\emph{GDR-CNFs}); (b)~conditional generative adversarial networks (\emph{GDR-CGANs}), (c)~conditional variational autoencoders (\emph{GDR-CVAEs}), and (d)~conditional diffusion models (\emph{GDR-CDMs}).

Altogether, our \textbf{contributions} are as follows:\footnote{Code is available at \url{https://github.com/Valentyn1997/gdr-learners}.} \textbf{(1)}~We introduce a novel framework of Neyman-orthogonal doubly-robust learners, called \GDRlearners, that aim at estimating the conditional distributions of potential outcomes (\CDPOs). Our \GDRlearners are thus both quasi-oracle efficient and doubly-robust. \textbf{(2)}~We instantiate our \GDRlearners on top of four generative models, namely, CNFs, CGANs, CVAEs, and CDMs. \textbf{(3)}~In several numerical experiments, we demonstrate the effectiveness of our Neyman-orthogonal learners in comparison with the existing methods.

\section{Related Work} \label{sec:related-work}

Here, we discuss the most relevant existing works that deal with learning the \CDPOs. We provide further relevant works in Appendix~\ref{app:extended-rw}.

\textbf{Meta-learners for CAPOs.} A wide range of meta-learners \citep{kunzel2019metalearners} were proposed for predicting the CAPO (see the overview in \citet{vansteelandt2025orthogonal}). They include different plug-in approaches \citep{johansson2016learning, shalit2017estimating,johansson2022generalization}, IPTW learners \citep{hassanpour2019counterfactual,assaad2021counterfactual}, and Neyman-orthogonal learners (\eg, DR-learner \citep{kennedy2023towards} and i-learner \citep{vansteelandt2025orthogonal}). Yet, our paper focuses on a different, much more challenging causal target: the whole \emph{conditional distribution of the POs (=\CDPOs)}.

\textbf{Distributional treatment effects.} Distributional treatment effects are related to the \CDPOs: They also go ``beyond the mean'' and target either (i)~some distributional aspects of the \CDPOs or (ii)~distributional distances between the \CDPOs. In~(i), for example, plug-in and doubly-robust learners were proposed for \emph{point-wise} cumulative distribution functions (CDFs) / quantiles \citep{zhou2022estimating,kallus2023robust,balakrishnan2025conservative,wu2023dnet,ham2024doubly}, and other general statistics \citep{kallus2023robust} of the \CDPOs. In~(ii), doubly-robust learning were implemented in \citep{park2021conditional,chikahara2022feature,naf2026causaldrf}.
Yet, both (i) and (ii) are not applicable in our setting. Specifically, methods in (i)~learn the \CDPOs only at \emph{specific grid points of the outcome space} and, therefore, do \emph{not generalize well} to the high-dimensional outcomes; and methods in (ii)~target at the entirely \emph{different} causal quantities (distributional distances between the \CDPOs). Conversely, in our work, we aim to learn the best projection of the ground-truth \CDPOs on some predefined generative model class without using any grid of points.

\textbf{Learning the \CDPOs with generative models.} Methods that estimate the \CDPOs from the observational data can be split into two major categories based on the main \emph{learning target}. Methods in category \textbf{(1)} learn the whole causal data-generating process (DGP) (and the \CDPOs along the way), while the methods in category \textbf{(2)} specifically target the \CDPOs. We review both in the following. $\Rightarrow$~\textit{Our contribution is located in category (2).}

\textbf{(1)~Generative modeling of the whole causal DGP.} Numerous methods were proposed to model the whole causal DGP (see the overview in \citet{komanduri2024from}). They differ in the generative models they employ (see Table~\ref{tab:methods-comparison}) and in the underlying causal assumptions. For example, \citet{louizos2017causal,zhang2021treatment} rely on the potential outcomes framework (= our setting); while \citet{kocaoglu2018causalgan,pawlowski2020deep,sauer2021counterfactual,sanchez2022diffusion,chao2023interventional,ribeiro2023high} assume a specific \emph{unconfounded} casual diagram or structural causal model. In principle, the latter methods can also be adapted to the potential outcomes framework (\eg, by clustering variables together \citep{anand2023causal}), yet all of the methods in this category are in fact \emph{plug-in learners}. More importantly, as we will show later, modeling the whole causal DGP is \emph{inefficient} if we are only interested in learning the \CDPOs.

\textbf{(2)~Generative modeling of the \CDPOs.} Works in this category are relevant to our method as they rely on the potential outcomes framework.\footnote{The potential outcomes framework makes minimal necessary assumptions for the \CDPOs identification but not for the whole causal DGP \citep{bareinboim2025structural}.} Here, the existing methods \citep{yoon2018ganite,vanderschueren2023noflite,ma2024diffpo,wu2025po} vary in the underlying generative (probabilistic) models and in the type of meta-learners (see Table~\ref{tab:methods-comparison}). \citet{ma2024diffpo} suggested an IPTW-learner based on conditional diffusion models that can be Neyman-orthogonal, but only under special conditions that the target model class includes the ground-truth \CDPOs, which we will clarify later (hence, we refer to it as ``partial'' Neyman-orthogonality).

\textbf{Research gap.} To the best of our knowledge, there is no general Neyman-orthogonal (doubly-robust) learner that (i)~targets the \CDPOs and (ii)~can be instantiated with any state-of-the-art generative model\footnote{Here, we consider deep parametric models that model the density or the generative process of the \CDPOs.} In particular, a method that has the favorable theoretical property of general Neyman-orthogonality and, associated with it,  quasi-oracle efficiency and rate double robustness is missing.

\section{Conditional distributions of potential outcomes (CDPOs)} \label{sec:cdpos}

\textbf{Notation.} Let capital letters $X, A, Y$ denote random variables and lowercase letters $x, a, y$ their realizations from the domains $\mathcal{X}, \mathcal{A}, \mathcal{Y}$. Let $\mathbb{P}(Z)$ represent the distribution of a random variable $Z$, and let $\mathbb{P}(Z = z)$ denote its density or probability mass function. We denote the $L_p$-norm of a random variable $f(Z)$ as $\norm{f}_{L_p} = (\mathbb{E}\abs{f(Z)}^p)^{1/p}$. The empirical average of the random $f(Z)$ is written as $\mathbb{P}_n\{{f(Z)}\} = n^{-1}\sum_{i=1}^n f(z_i)$, where $n$ is the sample size. The propensity score is defined as $\pi_a(x) = \mathbb{P}(A = a \mid X = x)$. Conditional densities of the outcome are denoted by $\xi_a(y \mid x) = \mathbb{P}(Y = y \mid X = x, A = a)$. For other conditional distributions and expectations, we often use shorter forms, for example, $\mathbb{E}(Y \mid x) = \mathbb{E}(Y \mid X = x)$. Throughout this work, we utilize the Neyman–Rubin potential outcomes framework \citep{rubin1974estimating}. Specifically, $Y[a]$ denotes the \emph{potential outcome} under the intervention $\mathrm{do}(A = a)$.

\textbf{Problem setup and causal estimand.} To estimate the \CDPOs, we use an observational sample $\mathcal{D}=\{(x_i, a_i, y_i)\}_{i=1}^n$ sampled i.i.d. from $\mathbb{P}(X, A, Y)$. Here, $X \in \mathcal{X}\subseteq\mathbb{R}^{d_x}$ are high‑dimensional pre-treatment covariates, $A\in\{0,1\}$ is a binary treatment, and $Y \in \mathcal{Y}\subseteq\mathbb{R}^{d_y}$ is a continuous outcome. For example, in cancer therapies, $Y$ can be a tumor size, $A$ indicates whether radiotherapy is administered, and $X$ contains patient covariates such as age and sex. We are then interested in estimating \emph{the conditional distribution of potential outcomes} \CDPOs, namely $\mathbb{P}(Y[a] \mid V = v)$ (here $V \subseteq X$), with any generative model of choice. Sub-setting the covariates ($V \subset X$) allows for imposing fairness or interpretability constraints.

\begin{table*}[t]
    \vspace{-0.5cm}
    \caption{\textbf{Overview of existing generative methods for the estimation of the \CDPOs.}}
    \label{tab:methods-comparison}
    \begin{center}
        \vspace{-0.4cm}
        \scalebox{0.98}{
            \scriptsize
            \begin{tabular}{p{0.1cm}|p{6.5cm}|p{1.1cm}|p{1.1cm}|P{1.1cm}|P{1.1cm}}
                \toprule
                \multirow{3}{*}{\rotatebox[origin=b]{90}{Target}} & \multirow{3}{*}{Work}  & \multirow{3}{*}{\hspace{-0.2cm}\begin{tabular}{l} Generative \\ model \end{tabular} } & \multirow{3}{*}{\hspace{-0.2cm}\begin{tabular}{l} Learner \\ type \end{tabular} } & \multicolumn{2}{c}{\multirow{2}{*}{Neyman-orthogonality}} \\
                 & &  &  \\
                &&&&Partial & General\\
                \midrule
                \multirow{5}{*}{\rotatebox[origin=c]{90}{\textbf{\textcolor{BrickRed}{\xmark~(1) DGP}}}} & CEVAE \citep{louizos2017causal}; TEDVAE \citep{zhang2021treatment} & VAE & Plug-in & \xmark & \xmark \\
                & CausalGAN \citep{kocaoglu2018causalgan}; CGN \citep{sauer2021counterfactual} & GAN & Plug-in & \xmark & \xmark  \\
                 & DSCM \citep{pawlowski2020deep} & VAE/NF  &  Plug-in & \xmark & \xmark \\
                & Diff-SCM \citep{sanchez2022diffusion}; DCM \citep{chao2023interventional} & DM &  Plug-in & \xmark & \xmark \\
                & CausalHVAE \citep{ribeiro2023high} & VAE  &  Plug-in & \xmark & \xmark \\ 
                \midrule
                \multirow{5}{*}{\rotatebox[origin=c]{90}{\textbf{\textcolor{ForestGreen}{\cmark~(2) \CDPOs}}}} & GANITE \citep{yoon2018ganite} & GAN  & RA & \xmark & \xmark\\
                & NOFLITE \citep{vanderschueren2023noflite} & NF & Plug-in & \xmark & \xmark \\
                & DiffPO \citep{ma2024diffpo} & DM & IPTW & \cmark & \xmark \\
                & PO-Flow \citep{wu2025po} & NF/DM  & Plug-in & \xmark & \xmark \\
                \cmidrule{2-6}
                & \textbf{\GDRlearners} (\textbf{our paper}) & \textbf{any}  & \textbf{DR} & \cmark & \cmark \\
                \bottomrule
                \multicolumn{6}{p{13cm}}{Generative model: VAE: variational autoencoder; GAN: generative adversarial network; NF: normalizing flow; DM: diffusion model.} \\
                \multicolumn{6}{p{13cm}}{Learner type: RA: regression-adjusted; IPTW: inverse propensity of
treatment weighted; DR: doubly-robust. }
            \end{tabular}}
    \end{center}
    \vspace{-0.3cm}
\end{table*}

\textbf{Causal assumptions and identification.} To consistently estimate the \CDPOs, we rely on assumptions of the potential outcomes framework \citep{rubin1974estimating}, {that are standard for causal ML \citep{curth2021nonparametric,kennedy2023towards,morzywolek2023general,vansteelandt2025orthogonal,melnychuk2023normalizing}}. Specifically, we assume (i)~\emph{consistency}: $Y[A] = Y$; (ii)~\emph{strong overlap}: $\mathbb{P}(\varepsilon < \pi(X) < 1 - \varepsilon) = 1$ for some $\varepsilon > 0$, and (iii)~\emph{unconfoundedness}: $(Y[0],Y[1]) \ind A \mid X$. Then, under the assumptions (i)--(iii), the density of the \CDPOs is identifiable as follows:
\begin{equation} \label{eq:cdpos:id}
    \mathbb{P}(Y[a] = y\mid V = v) = \mathbb{E}\big[\mathbb{P}(Y = y \mid X, A = a) \mid V = v \big] = \mathbb{E} [\xi_a(y | X) \mid V = v],
\end{equation}
and, in the case of $V = X$, we have $\mathbb{P}(Y[a] = y\mid X = x) = \xi_a(y \mid x)$.

In this work, we aim to learn the \CDPOs with some generative model that takes the covariates $V \subseteq X$ as an input and models the distribution of $Y[a]$ with either (i)~an explicit density or (ii)~implicitly, as a data-generating process. 

{\textbf{Outlook.} In the following, we develop a theory of Neyman-orthogonal learning for the \CDPOs. More, specifically, we aim to derive a \emph{target risk} $\mathcal{L}$ for a generative model of choice $g_a$ with the following key property of a \emph{Neyman-orthogonality}:  
\begin{equation}
    \mathcal{D}_\eta \mathcal{D}_g\mathcal{L}(g_a^*, \eta)[g_a - g_a^*, \hat{\eta} - \eta] = 0 \quad \text{ for all $g_a \in \mathcal{G}$ and $\hat{\eta} \in \mathcal{H}$},
\end{equation}
where $\mathcal{D}_{\cdot} \mathcal{L}(\cdot)[\cdot]$ are path-wise derivatives, $g_a^*$ is the best approximating model in a model class $\mathcal{G}$, and $\eta, \hat{\eta} \in \mathcal{H}$ are the ground-truth and estimated nuisance functions, respectively. Thus, informally, Neyman-orthogonality means first-order insensitivity of the gradient of the target risk wrt. the misspecification of the nuisance functions. We will further show that for Neyman-orthogonal risks of the \CDPOs, two important properties hold: (a)~\emph{quasi-oracle efficiency} and (b)~\emph{double robustness}. Specifically, (a) the $L_2$ error of the fitted model, $\norm{g_a^* - \hat{g}_a}_{\mathcal{G}}$, only depends on the higher-order errors of the nuisance functions, $\norm{\eta - \hat{\eta}}_{\mathcal{H}}^2$; and (b) this higher-order error consists of both the errors of the propensity score and the conditional densities of the outcome: $\norm{\eta - \hat{\eta}}_{\mathcal{H}}^2 = \lVert\xi_a - \hat{\xi}_a \rVert_{L_4} \cdot \norm{\pi_a - \hat{\pi}_a}_{L_4}$.
We refer to Appendix~\ref{app:background-orth-learning} for exact definitions and further details.
}

\section{Na\"ive learning of CDPOs} 
\textbf{Target risk.} To learn the \CDPOs for the treatment $a$ with a generative model of choice, we want to find the \emph{best projection} (wrt. a distributional distance) of the ground-truth \CDPOs on the predefined generative model class $\mathcal{G} = \{g_a(y, z \mid v): \mathcal{Y} \times \mathcal{Z} \times \mathcal{V} \to \mathbb{R}^{+}\}$. 
Here, $g_a(\cdot)$ explicitly or implicitly contains an estimated conditional density of $Y[a]$, and $Z$ is an auxiliary latent variable with assumed density $\mathbb{P}(Z = z) = h_a(z)$.
Then, the best projection $g^*_a$ can be found by minimizing/maximizing the following target generative risk $\mathcal{L}$ wrt. $g_a$:  
\begin{equation}  \label{eq:target-risk-def}
    g_a^* = \argmin_{g_a \in \mathcal{G}}/\argmax_{g_a \in \mathcal{G}} \mathcal{L}(g_a), \quad \mathcal{L}(g_a)=  \mathbb{E} \big[ \underset{Z \sim \varepsilon_z }{\mathbb{E}}\log g_a(Y[a], Z \mid V) \big],
\end{equation}
where $\varepsilon_z$ is some sampling distribution defined on $\mathcal{Z}$ (can be $h_a(z)$ or other part of $g_a$). This target risk, due to the identifiability assumptions (i)-(iii), is equivalent to the following:
\begingroup\makeatletter\def\f@size{9}\check@mathfonts
\begin{equation} \label{eq:target-risk-id}
    \mathcal{L}(g_a)= \mathbb{E} \bigg[ \int_\mathcal{Y} \Big[\underset{Z \sim \varepsilon_z }{\mathbb{E}} \log g_a(y, Z \mid V) \Big] \, \xi_a(y | X) \diff{y} \bigg] = \mathbb{E}\left[\frac{\mathbbm{1}\{A = a\}}{\pi_a(X)} \, \underset{Z \sim \varepsilon_z }{\mathbb{E}} \log g_a(Y, Z \mid V)\right].
\end{equation}
\endgroup
We refer to Lemma~\ref{lemma:target-risks-id} in Appendix~\ref{app:proofs} for a proof of the equality above. The target risks in Eq.~\eqref{eq:target-risk-def}-\eqref{eq:target-risk-id} are general in the sense that they characterize many state-of-the-art generative models. For example, by varying $g_a$, $Z$, and $\varepsilon_z$, we can define (a)~conditional normalizing flows (CNFs) \citep{rezende2015variational,trippe2018conditional}; (b) conditional generative adversarial networks (CGANs) \citep{goodfellow2014generative, laria2022transferring}; (c) conditional variational autoencoders (CVAEs) \citep{kingma2014auto,oh2022cvae}; and  (d) conditional diffusion models (CDMs) \citep{ho2020denoising,lutati2023ocd}. We provide short descriptions of all the models and their risks in Table~\ref{tab:gen-models} and full definitions in Appendix~\ref{app:background-gm}.

\begin{table}[t]
    \centering
    \vspace{-0.5cm}
    \setlength{\fboxsep}{1pt}
    \caption{\textbf{Base generative models for our \GDRlearners.} Here, we consider the following $v$-conditional generative models $\mathcal{G}$ for estimating \CDPOs, i.e., $\mathbb{P}(Y[a]\mid v)$. We highlight the parts of the models $g_a$ that generate the estimated potential outcomes with \colorbox{orange}{orange}. We refer to Appendix~\ref{app:background-gm} for the details regarding the correspondence of different target risks to the minimization of certain distributional distances (last column). }
    \vspace{-0.3cm}
    \scalebox{0.85}{
    \begin{tabular}{l|l|l|l|l}
        \toprule
        Generative & \multirow{2}{*}{$g_a(y, z \mid  v)$}  & \multirow{2}{*}{$Z$} & \multirow{2}{*}{$\varepsilon_z$} &  Optimization of  $\mathcal{L} \Leftrightarrow $  \\
        model & & & & Projection of $\mathbb{P}(Y[a]\mid v)$ wrt. \\
        \midrule
        (a) CNF & \colorbox{orange}{$p_a(y \mid v)$} & $Z \in \mathbb{R}^{d_y}$ & $\emptyset$ &  $\max_{p_a} \mathcal{L} \Leftrightarrow$  KLD \\
        (b) CGAN & $d_a(y \mid v) \cdot \big(1 - d_a($\colorbox{orange}{$f_a(z \mid v)$}$ \mid v)\big)$ & $Z \in \mathbb{R}^{d_z}$ & $h_a(z)$ & $\min_{f_a} \max_{d_a} \mathcal{L} \Leftrightarrow$  JSD \\
        (c) CVAE & \colorbox{orange}{${p_a(y, z \mid  v)}$} $ /{q_a(z \mid y, v)}$ & $Z \in \mathbb{R}^{d_z}$ & $q_a(z \mid y, v)$ & $\max_{p_a,q_a} \mathcal{L} \Leftrightarrow $ KLD + IG\\
        (d) CDM &  \colorbox{orange}{${p_a(y, z \mid v)}$} $ /{q_a(z \mid y)}$ & $Z_{1:T} \in \mathbb{R}^{T \times d_y}$ & $q_a(z \mid y)$ & $\max_{p_a} \mathcal{L} \Leftrightarrow$ KLD + IG \\
        \bottomrule
        \multicolumn{5}{p{13cm}}{$p_a, q_a$: conditional densities; $d_a$: conditional discriminator; $f_a$: conditional generator} \\
        \multicolumn{5}{p{13cm}}{KLD: Kullback–Leibler divergence; JSD: Jensen–Shannon divergence; IG: inference gap} \\
    \end{tabular}}
    \vspace{-0.3cm}
    \label{tab:gen-models}
\end{table}

\textbf{Plug-in learners.} When $V = X$, our causal estimand coincides with the conditional outcome distribution $\xi_a(y\mid x)$, see Eq.~\eqref{eq:cdpos:id}. Hence, one might be tempted to na\"ively learn $\xi_a(y\mid x)$ with a following plug-in loss:
\begin{equation}
    \hat{\mathcal{L}}_\text{PI}(g_a)=  \mathbb{P}_n \big\{ \mathbbm{1}\{A = a\}\, \underset{Z \sim \varepsilon_z }{\mathbb{E}} \log g_a(Y, Z \mid X) \big\}.
\end{equation}
In treatment effect estimation literature, this approach is known as a \emph{plug-in learner} \citep{kunzel2019metalearners,morzywolek2023general,vansteelandt2025orthogonal}. Yet, the plug-in risk differs from our main target risk from Eq.~\eqref{eq:target-risk-def}. Although the minimization of two risks is asymptotically equivalent when the model class $\mathcal{G}$ includes the ground-truth $\xi_a$; they might differ drastically when \emph{the risks are regularized} or \emph{model class $\mathcal{G}$ is constrained}. In this case, the plug-in learner yields the best projection of $\xi_a$ only treated/untreated sub-populations (\ie, when $A = a$) \citep{vansteelandt2025orthogonal}; while our main target risk \emph{aims to project it well for the whole population}. Thus, the main target risk from Eq.~\eqref{eq:target-risk-def} is a better learning objective to learn $\mathbb{P}(Y[a] \mid x)$. 

\textbf{RA- and IPTW-learners.} Alternatively, we can employ the identification formulas in Eq.~\eqref{eq:target-risk-id}, which yield well-known two-stage regression-adjusted (RA) and inverse propensity of treatment weighted (IPTW) learners with the following losses:
\begingroup\makeatletter\def\f@size{9}\check@mathfonts
\begin{align}
    \hat{\mathcal{L}}_\text{RA}(g_a, \hat{\eta} =\hat{\xi}_a) &= \mathbb{P}_n \bigg\{\mathbbm{1}\{A = a\} \underset{Z \sim \varepsilon_z }{\mathbb{E}} \log g_a(Y, Z \mid V) +  \mathbbm{1}\{A \neq a\}\int_\mathcal{Y} \Big[\underset{Z \sim \varepsilon_z }{\mathbb{E}} \log g_a(y, Z \mid V) \Big] \, \hat{\xi}_a(y | X) \diff{y} \bigg\},\\
    \hat{\mathcal{L}}_\text{IPTW}(g_a, \hat{\eta} = \hat{\pi}_a) & = \mathbb{P}_n\left\{\frac{\mathbbm{1}\{A = a\}}{\hat{\pi}_a(X)} \, \underset{Z \sim \varepsilon_z }{\mathbb{E}} \log g_a(Y, Z \mid V)\right\},
\end{align}
\endgroup
where $\hat{\xi}_a$ and $\hat{\pi}_a$ are at the first stage nuisance functions $\hat{\eta}$: the conditional distribution of the outcome and the propensity score, respectively. Both the RA- and IPTW-learners offer better estimates of the target risk than the plug-in learner. However, they both depend on the nuisance functions, and the error of those propagates with the same order to the final target risk \citep{vansteelandt2025orthogonal}. This motivates our main method, which estimates the target risk so that it is \emph{first-order insensitive to the nuisance functions errors (= Neyman-orthogonality)}.

\section{Generative doubly-robust learners (GDR-learners)} \label{sec:gdr-learners}

\textbf{GDR-learners.} Here, we present a novel class of Neyman-orthogonal learners, namely generative doubly-robust learners (\GDRlearners) that are given by the following loss   
\begingroup\makeatletter\def\f@size{9}\check@mathfonts
\begin{align} \label{eq:gdr-learner}
        & \hat{\mathcal{L}}_\text{GDR}(g_a, \hat{\eta} = (\hat{\xi}_a, \hat{\pi}_a)) \\ & = \mathbb{P}_n\Bigg\{\frac{\mathbbm{1}\{A = a\}}{\hat{\pi}_a(X)} \, \underset{Z \sim \varepsilon_z }{\mathbb{E}} \log g_a(Y, Z \mid V) \,  + \, \left(1 - \frac{\mathbbm{1}\{A = a\}}{\hat{\pi}_a(X)} \right)\, \int_\mathcal{Y} \Big[\underset{Z \sim \varepsilon_z }{\mathbb{E}} \log g_a(y, Z \mid V) \Big] \, \hat{\xi}_a(y | X) \diff{y} \Bigg\}. \nonumber
\end{align}
\endgroup
We derived Eq.~\eqref{eq:gdr-learner} by using a one-step bias correction\footnote{{The core idea of the one-step bias correction is to use the efficient influence function of the target risk to correct any plug-in estimator (RA-learner can be seen as a plug-in estimator of the target risk). The efficient influence function describes how the infinitesimal perturbation of the data generation mechanism changes the target parameter. Thus, by using it in the bias correction, we yield an efficient estimator as we use the information from \emph{all the nuisance functions}, also those that are overlooked in the plug-in estimator (\eg, the propensity score is not used in the RA-learner).}} of the RA-learner (\ie, by following \citep{kennedy2024semiparametric, kennedy2023semiparametric,melnychuk2023normalizing}); see Lemma~\ref{lemma:bias-correction} in Appendix~\ref{app:proofs}. 

Our \GDRlearners then proceed in two-stages: first, the nuisance functions $\eta = ({\xi}_a, {\pi}_a) \in \mathcal{H}$ are estimated (\eg, with a generative model of choice); and second, the target generative model is fitted with the loss in Eq.~\eqref{eq:gdr-learner}.  

\subsection{Theoretical properties}

Our \GDRlearners, unlike the plug-in learner, estimate the desired target risk from Eq.~\eqref{eq:target-risk-def}, which is easy to see by putting the ground-truth nuisance functions. Unlike existing RA- and IPTW-learners, our \GDRlearners have several favorable asymptotical theoretical properties. \textbf{(1)}~First, they are first-order insensitive to the errors in the nuisance functions, namely, Neyman-orthogonal. This can be formalized with the following theorem.

\begin{numtheorem}{1}[Neyman-orthogonality]\label{theor:no}
    The risk given by our GDR-learners is Neyman-orthogonal (\ie, first-order insensitive to the nuisance function errors), namely:
    \begin{equation}
        \mathcal{D}_\eta \mathcal{D}_g\mathcal{L}_\text{\emph{GDR}}(g_a^*, \eta)[g_a - g_a^*, \hat{\eta} - \eta] = 0 \quad \text{ for all $g_a \in \mathcal{G}$ and $\hat{\eta} \in \mathcal{H}$},
    \end{equation}
    where $\mathcal{D}_{\cdot} \mathcal{L}(\cdot)[\cdot]$ are path-wise derivatives (see Appendix~\ref{app:background-orth-learning} for definitions).
\end{numtheorem}
\vspace{-0.5cm}
\begin{proof}
    See Appendix~\ref{app:proofs}.
\end{proof}

Furthermore, as a consequence of the Neyman-orthogonality, we can show that: \textbf{(2)} our \GDRlearners offer important properties of (a)~\emph{quasi-oracle efficiency} and (b)~\emph{double robustness} \citep{chernozhukov2018double,nie2021quasi,foster2023orthogonal,morzywolek2023general} (see definitions in Appendix~\ref{app:background-orth-learning}). That is, the following holds (see Appendix~\ref{app:proofs} for the full version). 

\begin{figure}
    \centering
    \vspace{-0.5cm}
    \includegraphics[width=\linewidth]{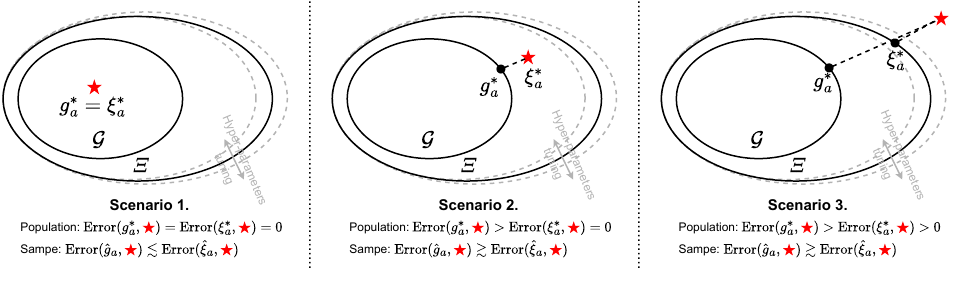}
    \vspace{-1.1cm}
    \caption{\textbf{Difference to the IPTW-learner.} Comparison of the IPTW-learners (= a variant of our \GDRlearners with the target model class $\xi_a^* \in \mathit{\Xi}$) and our original \GDRlearners (with the target model class $g_a^* \in \mathcal{G}$) when $V = X$ and $\mathcal{G} \subset \mathit{\Xi}$. Here, we show three scenarios depending on whether the ground-truth \CDPOs, $\textcolor{red}{\bigstar} = \mathbb{P}(Y[a] \mid x)$, belong to both $\mathcal{G}$ and $\mathit{\Xi}$, only $\mathit{\Xi}$, or neither of both. In \textbf{Scenarios 1} and \textbf{2}, both the IPTW-learner and our \GDRlearners are Neyman-orthogonal and, therefore, quasi-oracle efficient. In \textbf{Scenarios 3}, on the other hand, none of the learners are Neyman-orthogonal, as $\textcolor{red}{\bigstar} = \xi_a \notin \mathit{\Xi}$. However, in \textbf{Scenario 1}, our \GDRlearners have a smaller empirical error as $\mathcal{G} \subset \mathit{\Xi}$. Thus, we should prefer the IPTW-learners in \textbf{Scenario 2} and our \GDRlearners in \textbf{Scenario 1}. Note that the real scenario is unknown in practice. 
    }
    \vspace{-0.3cm}
    \label{fig:iptw-vs-dr}
\end{figure}

\begin{numtheorem}{2}[Quasi-oracle efficiency and double robustness] \label{theorem:qo-dr}
    Under mild model-dependent convexity conditions, the squared distance between the optimizer $g_a^*=\argmin_{g_a \in \mathcal{G}}/\argmax_{g_a \in \mathcal{G}} \mathcal{L}_\text{\emph{GDR}}(g_a, \eta) $ and the optimizer $\hat{g}_a=\argmin_{g_a \in \mathcal{G}}/\argmax_{g_a \in \mathcal{G}} \mathcal{L}_\text{\emph{GDR}}(g_a, \hat{\eta}) $ can be upper-bounded by the following:    
    \begin{align}
        \norm{g_a^* - \hat{g}_a}_{\mathcal{G}}^2 \lesssim \underbrace{{\mathcal{L}}_\text{\emph{GDR}}(\hat{g}_a, \hat{\eta}) - {\mathcal{L}}_\text{\emph{GDR}}(g_a^*, \hat{\eta})}_{(I)} + \underbrace{\lVert\xi_a - \hat{\xi}_a \rVert_{L_4}^2  \cdot \norm{\pi_a - \hat{\pi}_a}_{L_4}^2}_{(II)},
    \end{align}
    where $\norm{g_a}$ is a model-specific norm, (I)~is a standard optimization error term, and (II)~is a higher-order nuisance error term. This inequality implies that our GDR-learners are (a)~\textbf{quasi-oracle efficient} and (b)~\textbf{doubly-robust}.
\end{numtheorem}
\vspace{-0.5cm}
\begin{proof}
    See Appendix~\ref{app:proofs}.
\end{proof}

{\textbf{Note on the mildness of assumptions.} Theorem~\ref{theorem:qo-dr} relies on the mild model-dependent convexity conditions (see a more detailed formulation of Theorem~\ref{theorem:qo-dr-app} in Appendix~\ref{app:proofs}). For those to hold, we require (i)~all the densities of target models in $\mathcal{G}$ and the conditional densities of the outcome $\xi_a$ to be finite. Then, we assume that (ii)~the nuisance functions $\eta$ are H\"older smooth (standard estimability assumption \citep{kennedy2023towards}). Given (i) and (ii), we then show in Remark~\ref{rem:mildness} of Appendix~\ref{app:proofs} that the model-dependent convexity conditions hold asymptotically. Thus, they can be considered mild.}

\textbf{Interpretation.} Theorem~\ref{theorem:qo-dr} provides a foundation for the asymptotic optimality of our \GDRlearners: Without assuming any additional structure in the data-generating process (DGP) $\mathbb{P}(X, Y, A)$, our \GDRlearners are optimal in a min-max sense \citep{balakrishnan2023fundamental,jin2025structure}. Therefore, for example, it is not necessary to model parts of the DGP other than the nuisance functions $\xi_a$ and $\pi_a$. In practice, both properties (a) and (b) lend to the following interpretation. (a)~\emph{Quasi-oracle efficiency} ensures that, even when slow nuisance functions are converging slowly (namely, at least $o_\mathbb{P}(n^{-1/4})$), the minimization of the \GDRlearners loss is asymptotically equivalent to a minimization of the target risks with the ground-truth nuisance functions (\ie, from Eq.~\eqref{eq:target-risk-id}). On top of that, (b)~\emph{double robustness} allows for compensating for a slow convergence of one of the nuisance components with a faster convergence of another.

{\textbf{Are the $o_\mathbb{P}(n^{-1/4})$ rates achievable?} An important practical question arises on whether the rates required by Theorem~\ref{theorem:qo-dr}, can be achieved by state-of-the-art generative models.  As recently discovered by \citet{schulte2025adjustment}, deep neural networks can substantially improve the estimation of the nuisance functions for the ATE. This happens, as the neural networks can effectively address a low-dimensional manifold hypothesis and, thus, significantly improve the non-parametric convergence rates of \citet{stone1982optimal}. In our case of the \CDPOs, we need to learn the conditional distributions of the outcomes as one of the nuisance functions, which is arguably more complicated than learning the conditional expectations. Specifically, the conditional densities can be seen as functions from $\mathcal{X} \times \mathcal{Y}$ to $\mathbb{R}^+$ \citep{efromovich2007conditional}, and the non-parametric convergence rates of \citet{stone1982optimal} are now $O_{\mathbb{P}}(n^{-s/(2s + d_x + d_y)})$, where $s$ is the smoothness of the conditional densities of the outcome. However, again, we again might use low-dimensional manifold hypothesis and deep generative models to improve these rates, analogously to \citet{schulte2025adjustment}. The convergence rates of the state-of-the-art generative models for estimating conditional distributions are an active area of research \citep{kumar2025likelihood}.

}

\subsection{Partial and general Neyman-orthogonality}

\textbf{Remark on IPTW-learners.} As noted in \citep{vansteelandt2025orthogonal,ma2024diffpo}, when $V = X$, the IPTW-learners for the \CDPOs can also become Neyman-orthogonal if the model class $\mathcal{G}$ includes the ground-truth $\xi_a$. Yet, in this case, the IPTW-learners can be shown to be a  \emph{special case} of our \GDRlearners, where we set target model equal to one of the nuisance functions $\hat{g}_a = \hat{\xi}_a \in \mathit{\Xi}$, where $\mathit{\Xi}$ is a nuisance model class (see Remark~\ref{rem:iptw-vs-dr} in Appendix~\ref{app:proofs}). To see when this variant of our \GDRlearners (= IPTW-learners) outperform the original \GDRlearners,  we refer to Fig.~\ref{fig:iptw-vs-dr}. Therein, we compare the IPTW-learners (= \GDRlearners with a target model class $\mathit{\Xi}$) and comparable original \GDRlearners (= with a separate nuisance model class $\mathit{\Xi}$ for $\hat{\xi}_a$ and a target model class $\mathit{G} \subset \mathit{\Xi}$). Then, when $\mathcal{G} = \mathit{\Xi}$, both learners are asymptotically equivalent. Also, we see that our original \GDRlearners are guaranteed to have lower error in a scenario when both $\mathcal{G}$ and  $\mathit{\Xi}$ contain the ground-truth. 

\textbf{Restrictions of $\mathcal{G}$.} When $V = X$, our original \GDRlearners should also be preferred when the model class $\mathcal{G}$ has to be specifically \emph{restricted} (e.g., for interpretability or fairness reasons). Then, the IPTW-learners might lose their Neyman-orthogonality, but our \GDRlearners always preserve it. That is,  our \GDRlearners use more expressive $\mathit{\Xi}$ as a nuisance model class for $\hat{\xi}_a$ while the IPTW-learners have to restrict themselves to only $\hat{g}_a \in \mathcal{G}$. We later validate this in semi-synthetic experiments.

\begin{figure}
    \vspace{-0.5cm}
    \centering
    \includegraphics[width=\linewidth]{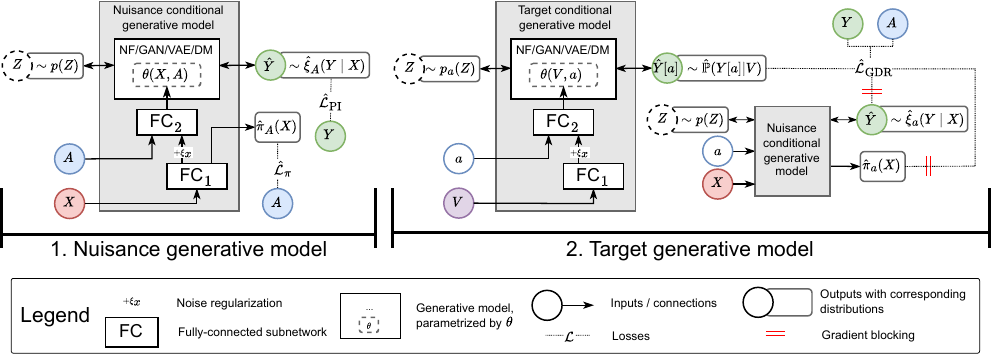}
    \vspace{-0.6cm}
    \caption{\textbf{Overview of our \GDRlearners.} Our \GDRlearners proceed in two stages. In the first stage, nuisance conditional generative models are trained to estimate the nuisance functions $\hat{\eta} = (\hat{\xi}_a, \hat{\pi}_a)$ jointly for $a \in \{0, 1\}$. In the second stage, target conditional generative models use the outputs of the nuisance conditional generative models to optimize the $\hat{\mathcal{L}}_{\text{GDR}}$ wrt. $g_0$ and $g_1$, see Eq.~\eqref{eq:gdr-learner}. } 
    \label{fig:gdr-learners}
    \vspace{-0.4cm}
\end{figure}

\subsection{Instantiations of GDR-learners}
\vspace{-0.1cm}

\textbf{Architecture.} We instantiate our \GDRlearners with four different state-of-the-art deep conditional generative models. That is, both the nuisance ($\hat{\eta} = (\hat{\xi}_a, \hat{\pi}_a)$) and the target models ($g_0, g_1$) are implemented with one of four deep generative models: (a)~conditional normalizing flows \citep{rezende2015variational,trippe2018conditional} (\textbf{\emph{GDR-CNFs}}); (b)~conditional generative adversarial networks \citep{goodfellow2014generative, laria2022transferring} (\textbf{\emph{GDR-CGANs}}), (c)~conditional variational autoencoders \citep{kingma2014auto,oh2022cvae} (\textbf{\emph{GDR-CVAEs}}), and (d)~conditional diffusion models \citep{ho2020denoising,lutati2023ocd} (\textbf{\emph{GDR-CDMs}}). To implement conditioning on $X$ and $V$ for all four generative models in a comparable way, we employed hypernetworks \citep{ha2017hypernetworks} or feature-wise linear modulations \citep{perez2018film}. We refer to Fig.~\ref{fig:gdr-learners} for an overview of a general architecture of our \GDRlearners instantiations.

\textbf{Training.} Our \GDRlearners are trained in two stages. \textbf{(1)} In the first stage, a nuisance conditional generative model aims to optimize the plug-in loss $\hat{\mathcal{L}}_{\text{PI}}$ together with minimizing a binary cross-entropy loss $\hat{\mathcal{L}}_{\text{BCE}}$ for both treatments $a=0$ and $1$. \textbf{(2)}~Then, in the second stage, the nuisance conditional generative model is frozen, and its outputs are used to train a target conditional generative model $\hat{g}_a$ wrt. to our \GDRlearners loss from Eq.~\eqref{eq:gdr-learner}. The target conditional generative model is also trained jointly for both $a = 0,1$.

\textbf{Implementation details.} To evaluate a second term in the Eq.~\eqref{eq:gdr-learner} (which requires integration wrt. $\hat{\xi}_a(\cdot \mid X)$), we employed MC-sampling with $n_{\text{MC}} = 1$. In this way, an estimator $\hat{\xi}_a$ needs to only provide a direct conditional sampling mechanism and not necessarily an explicit conditional density (this is the case for CGANs, CVAEs, and CDMs). To additionally stabilize the second-stage model training, we employed the exponential moving average (EMA) of model weights \citep{polyak1992acceleration} with a hyperparameter $\lambda = 0.995$. This also helped to heuristically ensure that $\mathcal{G} \subseteq \mathit{\Xi}$. We refer to Appendix~\ref{app:implementation} for other implementation details and the details on hyperparameter tuning.

\vspace{-0.1cm}
\section{Experiments}
\vspace{-0.1cm}

\textbf{Setup.} We now evaluate our \GDRlearners. For this, we used several (semi-)synthetic causal ML benchmarks with varying $n, d_y,$ and $d_x$. In this way, we can have access to the \emph{ground-truth counterfactuals}: either as the target \CDPOs, $\mathbb{P}(Y[a] \mid X = x)$, or, alternatively, as a joint POs sample $\{(x_i, y[0]_i, y[1]_i)\}_{i=1}^{n}$. 

\begin{wraptable}{r}{6.3cm}
    \centering
    \vspace{-0.8cm}
    \caption{Aggregated results for 77 semi-synthetic \textbf{ACIC 2016 experiments} in \emph{(a)~full} and \emph{(b)~linear} settings. Reported: the $\%$ of runs, where our \GDRlearners improve over other learners wrt. the out-sample log-prob. Detailed results are in Fig.~\ref{fig:res-acic-full} in Appendix~\ref{app:experiments}.}
    \vspace{-0.2cm}
    \scalebox{0.8}{\begin{tabu}{l|c|c|c|c}
\toprule
 & \multicolumn{2}{c|}{(a) full} & \multicolumn{2}{c}{(b) linear} \\
 \cmidrule{2-5} & $a = 0$ & $a = 1$ & $a = 0$ & $a = 1$ \\
Learner & CNFs & CNFs & CNFs & CNFs \\
\midrule
Plug-in & 45.97\% & 44.42\% & \textcolor{ForestGreen}{51.43\%} & \textcolor{ForestGreen}{54.81\%} \\
IPTW & 47.27\% & \textcolor{ForestGreen}{50.65\%} & \textcolor{ForestGreen}{61.82\%} & \textcolor{ForestGreen}{60.26\%} \\
RA & 8.05\% & 10.13\% & 22.34\% & 25.45\% \\
\bottomrule
\multicolumn{5}{l}{Higher $=$ better (improvement over the baseline} \\
\multicolumn{5}{l}{in more than 50\% of runs in \textcolor{ForestGreen}{green})}
\end{tabu}
}
    \label{tab:res-acic-perc}
    \vspace{-0.5cm}
\end{wraptable}

\textbf{Evaluation metrics.} Whenever the ground-truth \CDPOs are available, we report the empirical Wasserstein distance ($W_2$) based on $p=200$ samples of the ground-truth $\mathbb{P}(Y[a] \mid X = x)$ and the estimated generative model averaged over $\{x_i\}_{i=1}^n$. On the other hand, when only the joint potential outcomes sample is available (\eg, for the ACIC 2016 datasets), we use the average log-probability of the estimated POs conditional density (log-prob).

\textbf{Baselines.} We compare four instantiations of our \GDRlearners (namely, \textbf{\emph{GDR-CNFs}}, \textbf{\emph{GDR-CGANs}}, \textbf{\emph{GDR-CVAEs}}, and \textbf{\emph{GDR-CDMs}}) with the corresponding instantiations other learners (\ie, {plug-in}, {RA-}, and {IPTW-learners}). Here, we set $V = X$ for all the experiments, and, for a fair comparison, we implemented {plug-in} and {IPTW}-learners as single-stage models and {RA-} and \GDRlearners as two-stage models (see Appendix~\ref{app:implementation} for details).
This results in comparing \textbf{\emph{16 different models}} (four instantiations times four meta-learners). In this way, we cover \emph{all the relevant existing baselines} described in Table~\ref{tab:methods-comparison}. For example, \textbf{\emph{RA-CGANs}} correspond to GANITE \citep{yoon2018ganite}, \textbf{\emph{Plug-in CCNFs}} to NOFLITE \citep{vanderschueren2023noflite}, \textbf{\emph{IPTW-CDMs}} to DiffPO \citep{ma2024diffpo}, etc.

\begin{figure}
    \centering
    \vspace{-0.5cm}
    \includegraphics[width=\linewidth]{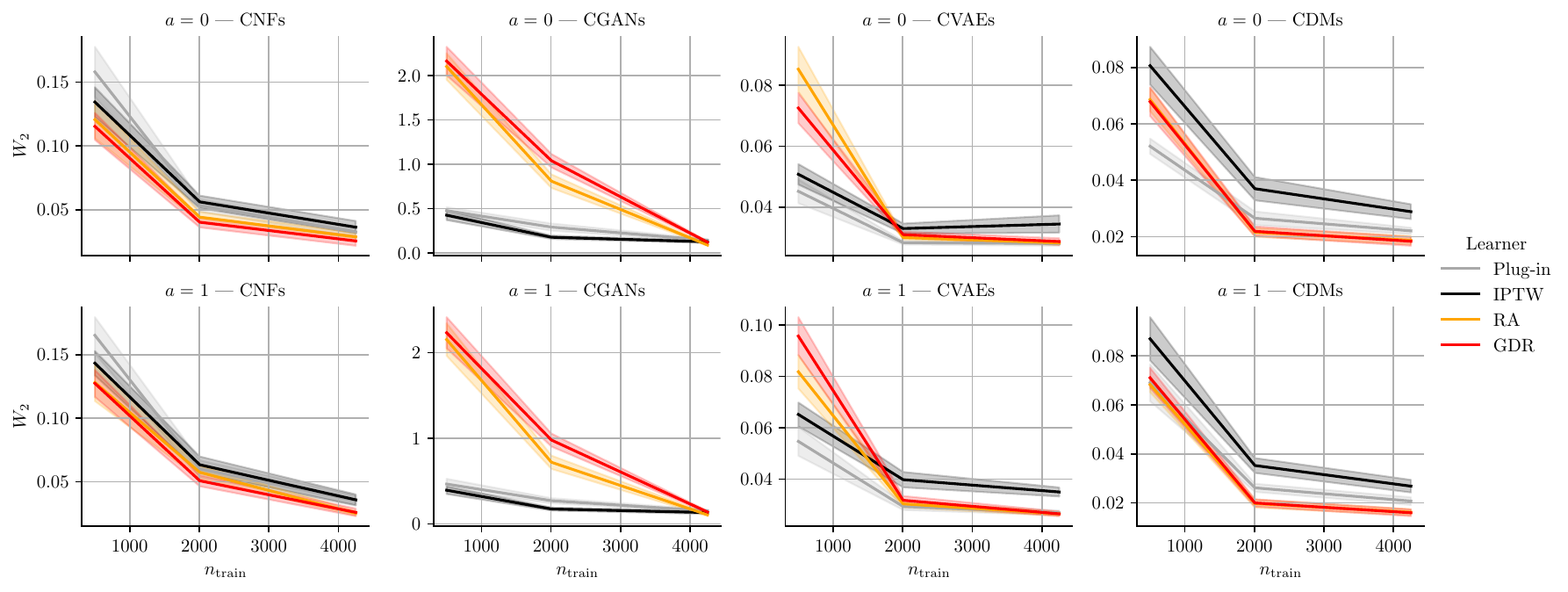}
    \vspace{-0.7cm}
    \caption{Results for the \textbf{synthetic experiments with varying size of training data} ($n_{\text{train}}$). Reported: mean out-sample $W_2 \, \pm$ se over 20 runs (lower is better).}
    \label{fig:res-synth}
    \vspace{-0.5cm}
\end{figure}


\textbf{Synthetic data.} We adopt a synthetic data generator from \citep{melnychuk2023normalizing} with $d_y=2$ and $d_x=2$.  
In this dataset, the covariates are sampled from the noisy moons data generator and then rotated by a random angle depending on the treatment. Thus, the ground-truth \CDPOs are available as the sampling processes. We then sample $n_\text{train} \in \{500; 2000; 4000 \}$ training and $n_\text{test} = 1000$ test datapoints. In Appendix~\ref{app:experiments}, we additionally report results for another popular semi-synthetic benchmark dataset, namely, the IHDP100 dataset \citep{hill2011bayesian}. \textbf{Results.} Results are shown in Fig.~\ref{fig:res-synth}. Therein, our \GDRlearners achieve the best performance when the dataset size grows (this is expected due to the asymptotic optimality properties). Furthermore, our \textbf{\emph{GDR-CDMs}} achieved the best performance overall, when $n_\text{train} \in \{2000; 4000 \}$.


\textbf{ACIC 2016 datasets.} The ACIC 2016 dataset collection  \citep{dorie2019automated} consists of 77 semi-synthetic datasets ($n = 4802, d_y=82, d_x=82$) with varying degree of overlap and \CDPOs entropy. Here, we do not have access to the ground-truth \CDPOs but only a sample from the POs joint distribution. Thus, the only viable evaluation metric is log-prob, and we can only evaluate models that provide explicit conditional density (namely, only CNFs). \textbf{Setup.}  As noted in the last remarks of Sec.~\ref{sec:gdr-learners}, when $V = X$ and the model classes for the nuisance functions and target models coincide, the IPTW-learners and our \GDRlearners are equivalent. Thus, our \GDRlearners are only guaranteed to outperform IPTW-learners when the target model class is restricted (see Sec.~\ref{sec:gdr-learners}). To simulate this, we conducted experiments in two settings: \emph{(a)~full}, where the target model is only smoothened with EMA of model weights (as in all the other experiments), and \emph{(b)~linear}, where the target model is artificially restricted to one linear layer in the conditioning hypernetwork. \textbf{Results.} We demonstrate aggregated results in Table~\ref{tab:res-acic-perc} and full results in Fig.~\ref{fig:res-acic-full} in Appendix~\ref{app:experiments}. Here, in the \emph{(a)~full setting}, our \GDRlearners perform similarly to the IPTW-learners as they both are Neyman-orthogonal. However, in the \emph{(b)~linear setting}, only our \GDRlearners are Neyman-orthogonal (as the target model is restricted), and they outperform both the plug-in and the IPTW-learners in the majority of the runs. Notably, our \GDRlearners rarely outperform RA-learners (even though RA-learners are not Neyman-orthogonal), which can be expected. The main reason here is that our \GDRlearners learners are only guaranteed to be \emph{quasi-oracle efficient wrt. $L_2$-norm} and not wrt. the log-prob, which is used in this benchmark (log-prob tends to overestimate the support of the distribution and may assign large negative values due to outliers). Still, our \GDRlearners are highly effective against plug-in and IPTW-learners in the restricted target model setting.


\textbf{HC-MNIST dataset.} To showcase our \GDRlearners on the high-dimensional data, we used the HC-MNIST dataset with $d_y = 1, d_x = 784 + 1, n = 70,000$ \citep{jesson2021quantifying} (= high-dimensional confounders setting). 
Here, the ground-truth \CDPOs are also available as the conditional normal distributions. \textbf{Results.} Results are in Table~\ref{tab:res-hc-mnist}. Here, our \GDRlearners consistently outperform other learners for the majority of the generative models and treatment arms. This supports the effectiveness of our \GDRlearners even in the high-dimensional confounding setting.

\begin{table}[t]
    \centering
    \vspace{-0.5cm}
    \caption{Results for the \textbf{HC-MNIST dataset}. Reported: median out-sample $W_2 \, \pm$ std over 20 runs.}
    \vspace{-0.3cm}
    \scalebox{0.66}{\begin{tabu}{l|cccc|cccc}
\toprule
 & \multicolumn{4}{c|}{$a = 0$} & \multicolumn{4}{c}{$a = 1$} \\
Learner & CNFs & CGANs & CVAEs & CDMs & CNFs & CGANs & CVAEs & CDMs \\
\midrule
Plug-in & 0.665 $\pm$ 0.018 & \underline{1.511 $\pm$ 0.243} & 1.333 $\pm$ 0.013 & 0.683 $\pm$ 11.272 & 0.653 $\pm$ 0.010 & \textbf{1.361 $\pm$ 0.179} & 1.305 $\pm$ 0.006 & 0.601 $\pm$ 86.159 \\
IPTW & 0.702 $\pm$ 0.013 & \textbf{1.456 $\pm$ 0.254} & \underline{1.327 $\pm$ 0.013} & \underline{0.678 $\pm$ 0.045} & 0.635 $\pm$ 0.011 & \underline{1.506 $\pm$ 0.486} & \underline{1.293 $\pm$ 0.010} & 0.595 $\pm$ 0.022 \\
RA & \textbf{0.603 $\pm$ 0.169} & 1.665 $\pm$ 0.171 & 1.337 $\pm$ 0.011 & 0.688 $\pm$ 0.052 & \underline{0.593 $\pm$ 0.175} & 1.562 $\pm$ 0.177 & 1.301 $\pm$ 0.013 & \underline{0.574 $\pm$ 0.032} \\
GDR & \underline{0.613 $\pm$ 0.205} & 1.909 $\pm$ 0.198 & \textbf{1.306 $\pm$ 0.107} & \textbf{0.660 $\pm$ 0.227} & \textbf{0.572 $\pm$ 0.212} & 1.715 $\pm$ 0.259 & \textbf{1.275 $\pm$ 0.155} & \textbf{0.572 $\pm$ 1.154} \\
\bottomrule
\multicolumn{5}{l}{Lower $=$ better (best in \textbf{bold}, second best \underline{underlined}) }
\end{tabu}
}
    \label{tab:res-hc-mnist}
    \vspace{-0.2cm}
\end{table}

\begin{table}[t]
    \centering
     \caption{{Qualitative results for the \textbf{colored MNIST dataset}. Shown: samples from fitted generative models for a digit intervention $a = 0$.}}
    \vspace{-0.3cm}
    \scalebox{0.85}{
    \color{black}\begin{tabu}{l|cccc}
        \toprule
        & \multicolumn{4}{c}{$a = 0$} \\
         Learner & CNFs & CGANs & CVAEs & CDMs \\
         \midrule 
         \raisebox{1.5\height}{Plug-in} & \includegraphics[width=0.2\linewidth,trim={90pt 180pt 90pt 180pt},clip]{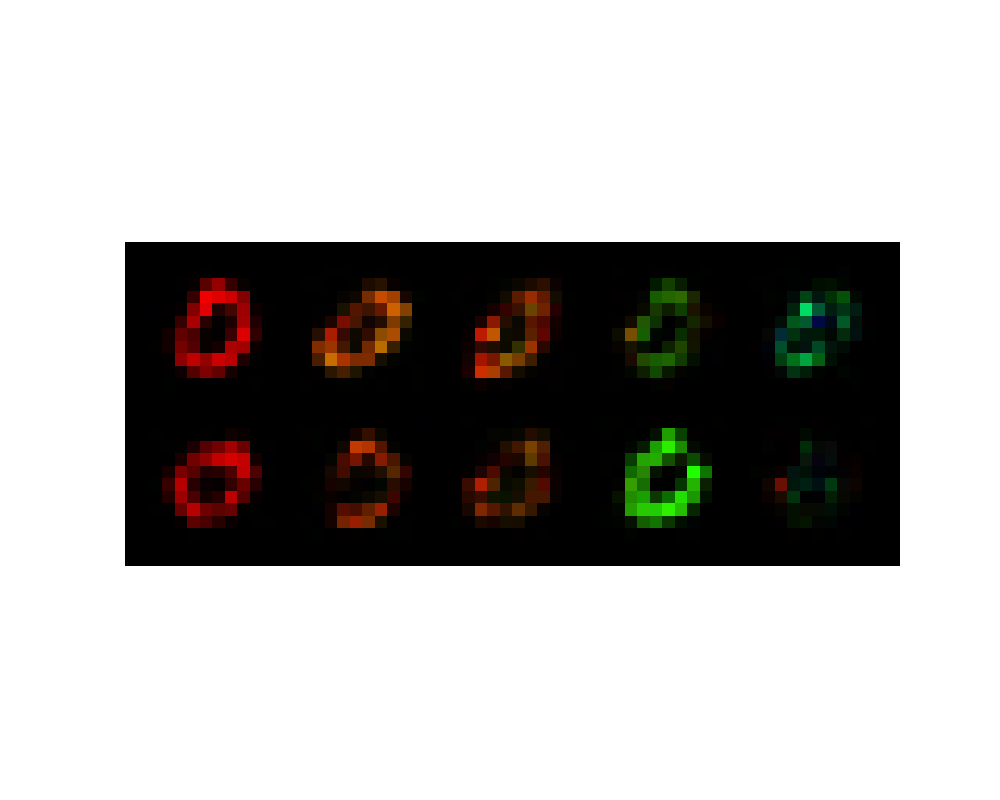} & \includegraphics[width=0.2\linewidth,trim={90pt 180pt 90pt 180pt},clip]{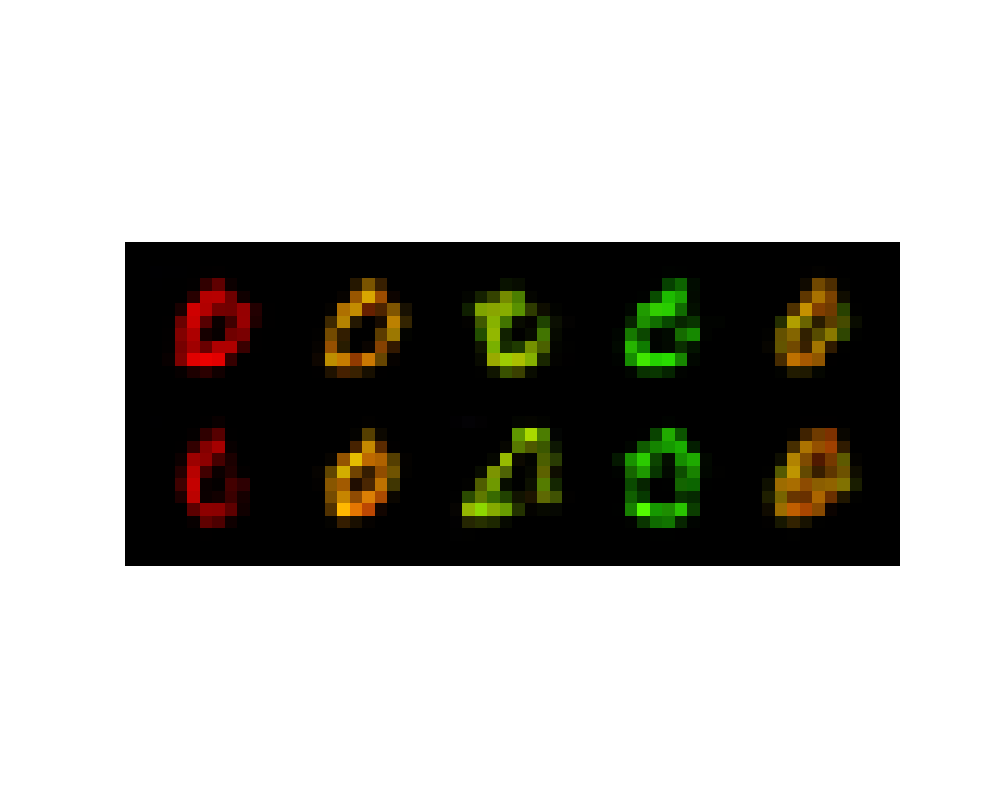} & \includegraphics[width=0.2\linewidth,trim={90pt 180pt 90pt 180pt},clip]{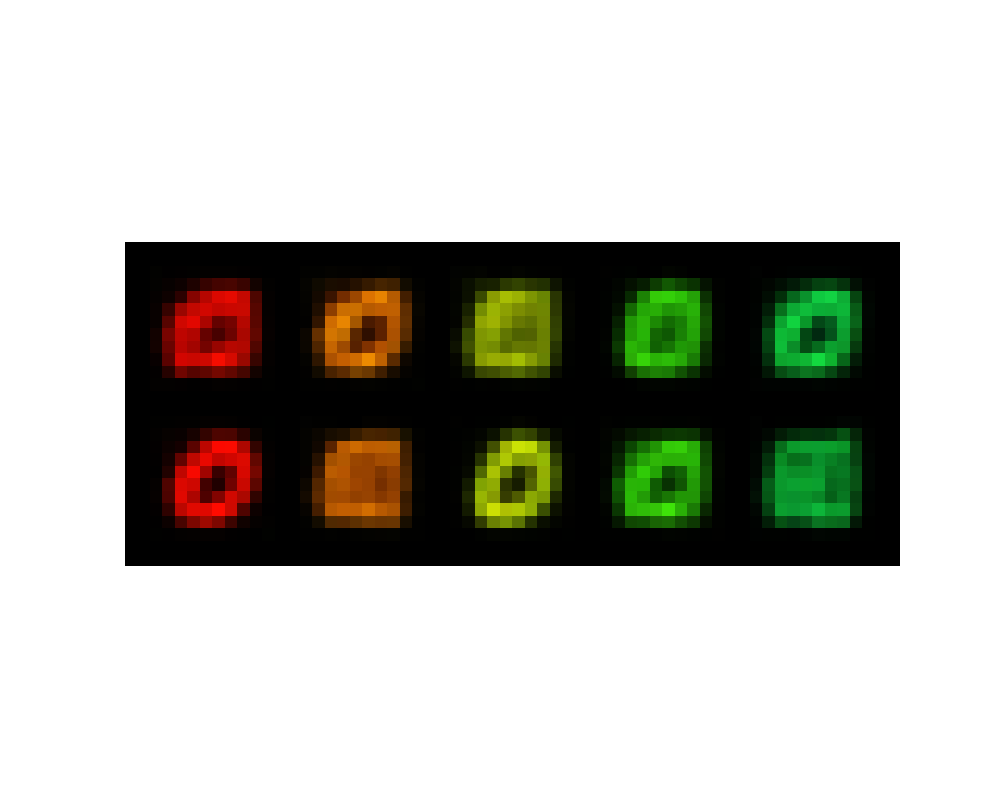} & \includegraphics[width=0.2\linewidth,trim={90pt 180pt 90pt 180pt},clip]{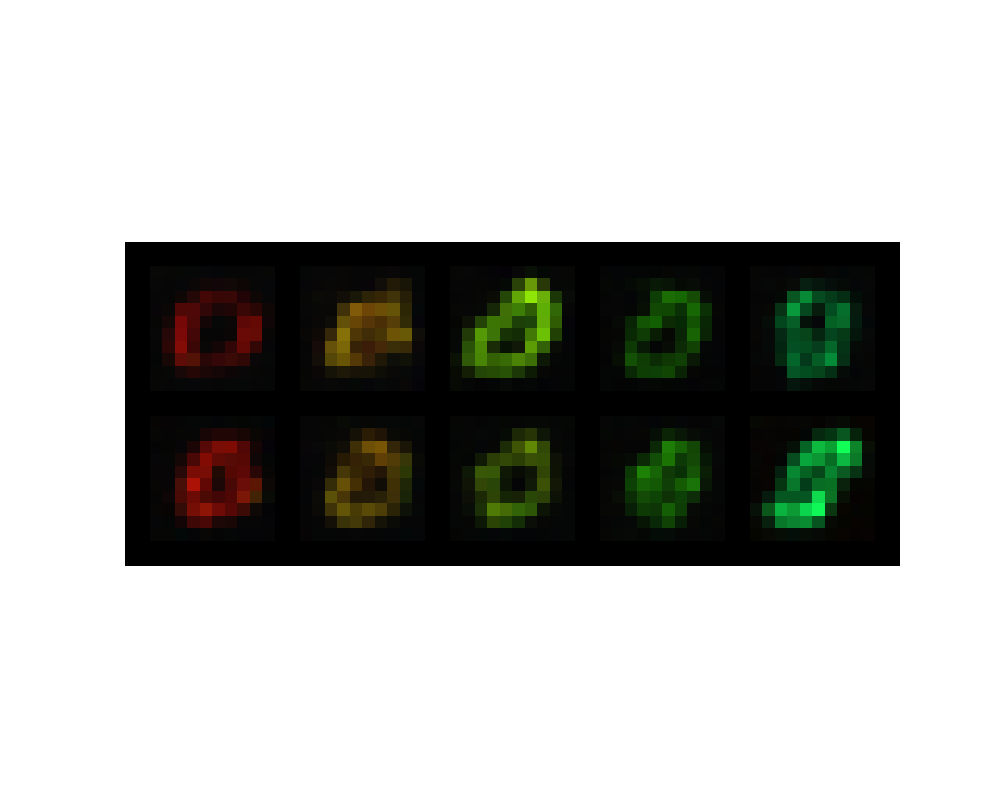}\\ 
         \midrule 
         \raisebox{1.5\height}{IPTW} & \includegraphics[width=0.2\linewidth,trim={90pt 180pt 90pt 180pt},clip]{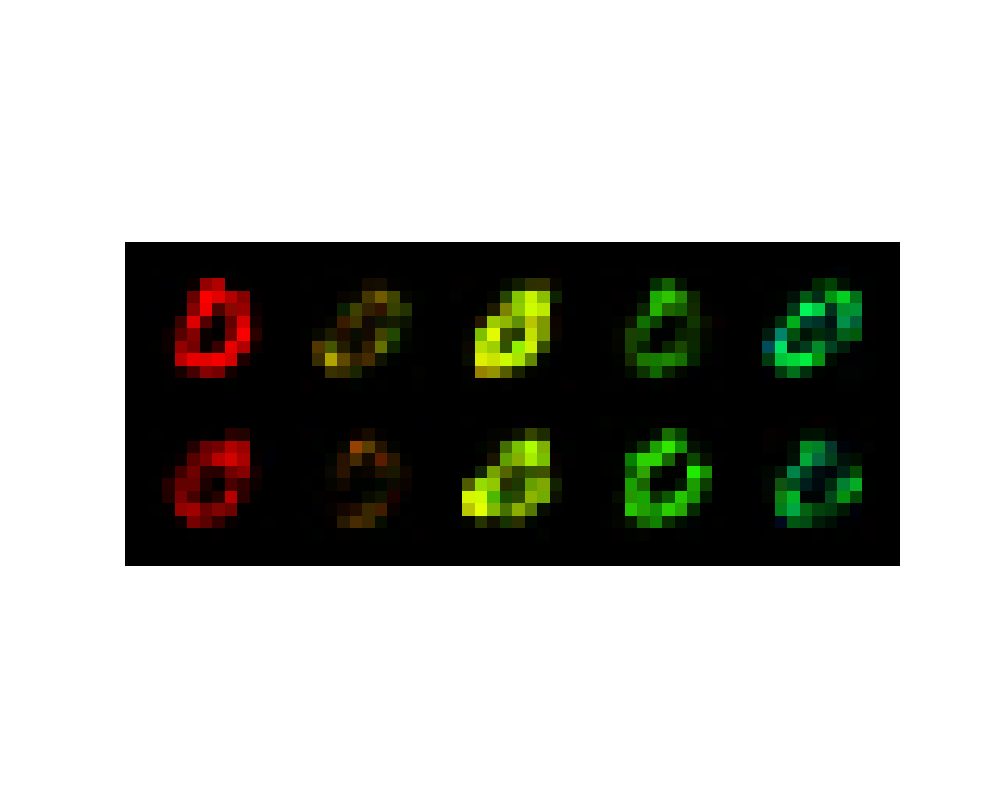} & \includegraphics[width=0.2\linewidth,trim={90pt 180pt 90pt 180pt},clip]{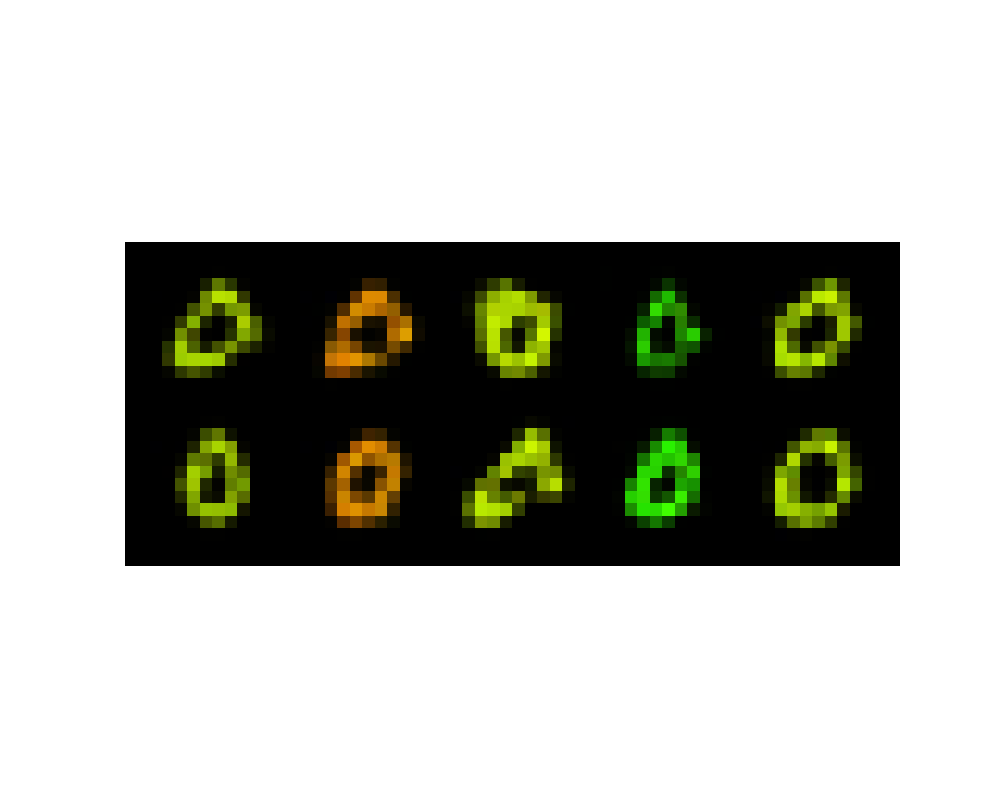} & \includegraphics[width=0.2\linewidth,trim={90pt 180pt 90pt 180pt},clip]{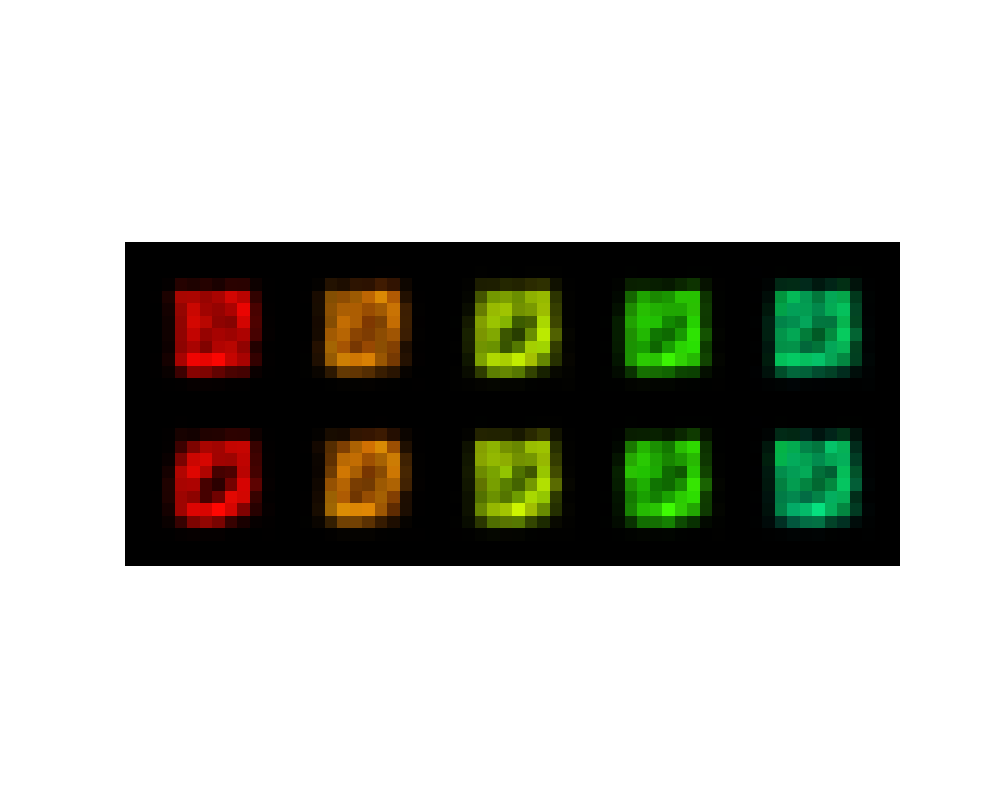} & \includegraphics[width=0.2\linewidth,trim={90pt 180pt 90pt 180pt},clip]{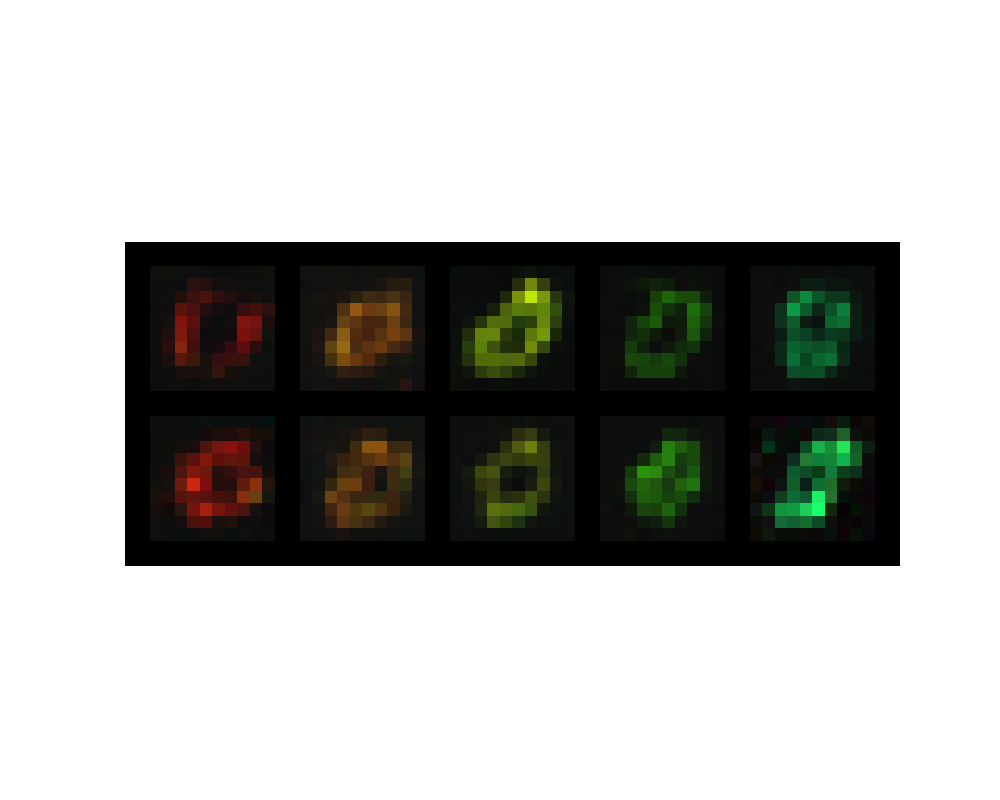} \\
         \midrule 
         \raisebox{1.5\height}{RA} & \includegraphics[width=0.2\linewidth,trim={90pt 180pt 90pt 180pt},clip]{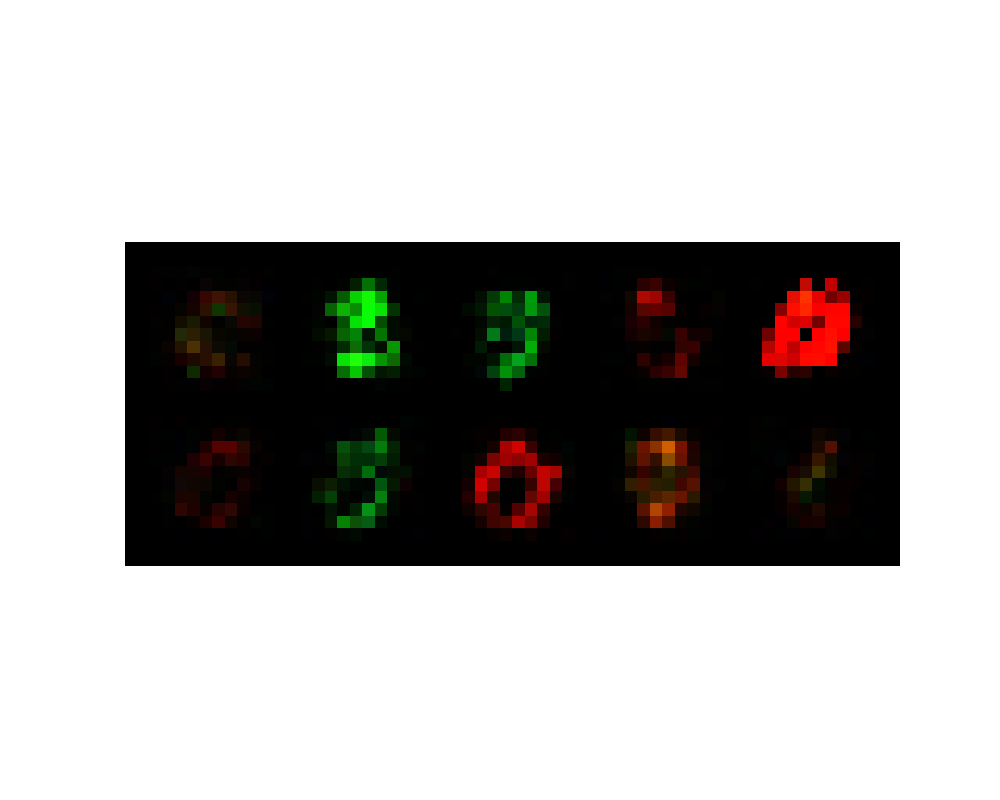} & \includegraphics[width=0.2\linewidth,trim={90pt 180pt 90pt 180pt},clip]{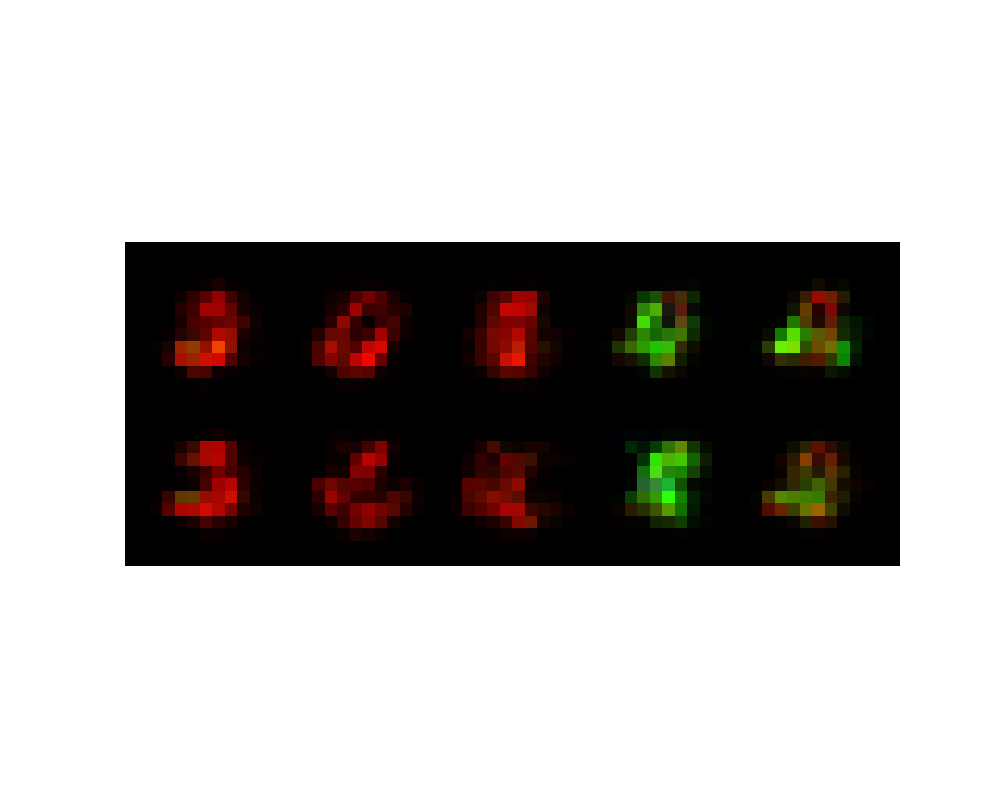} & \includegraphics[width=0.2\linewidth,trim={90pt 180pt 90pt 180pt},clip]{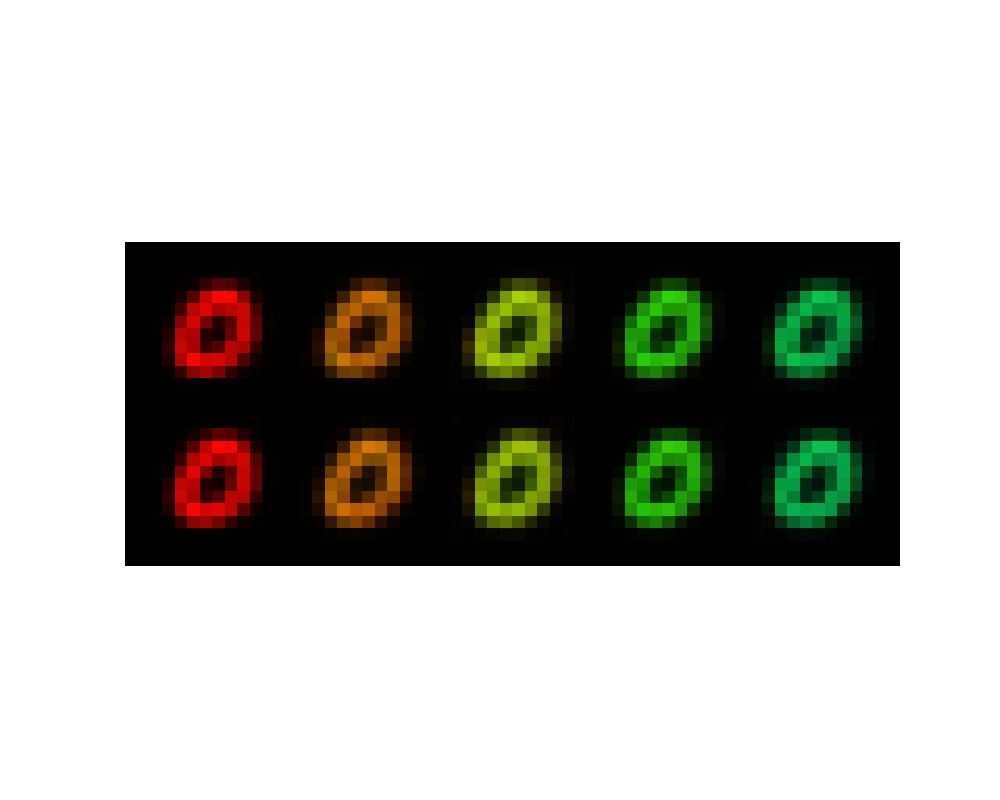} & \includegraphics[width=0.2\linewidth,trim={90pt 180pt 90pt 180pt},clip]{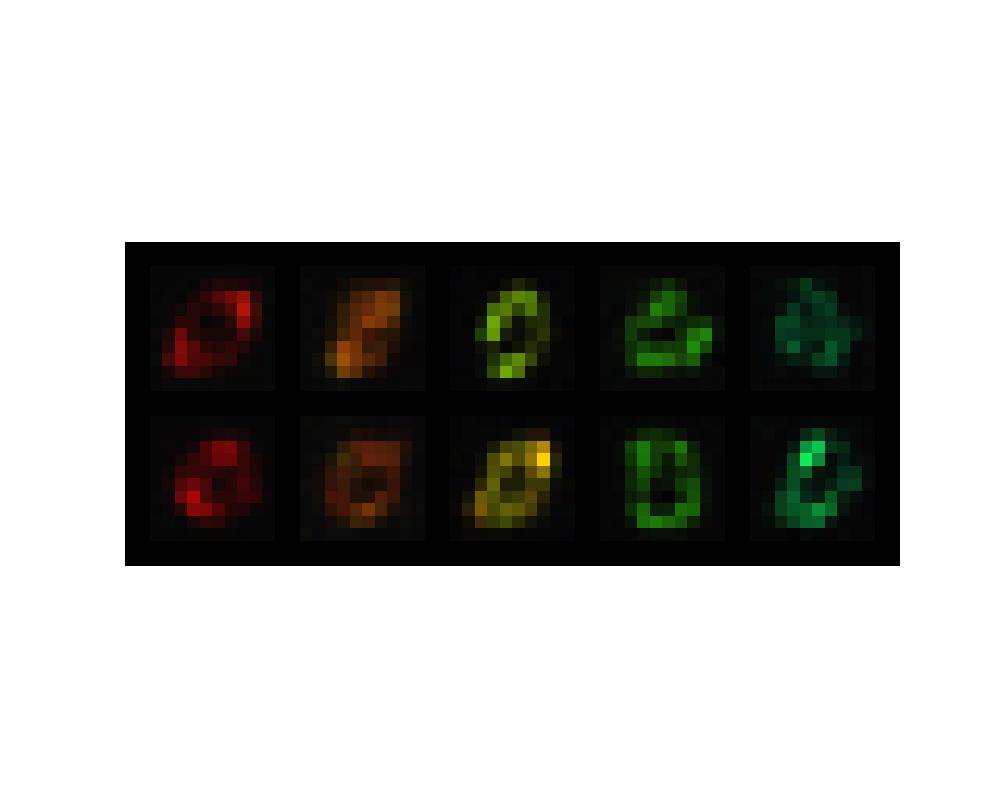} \\
         \midrule 
         \raisebox{1.5\height}{GDR} & \includegraphics[width=0.2\linewidth,trim={90pt 180pt 90pt 180pt},clip]{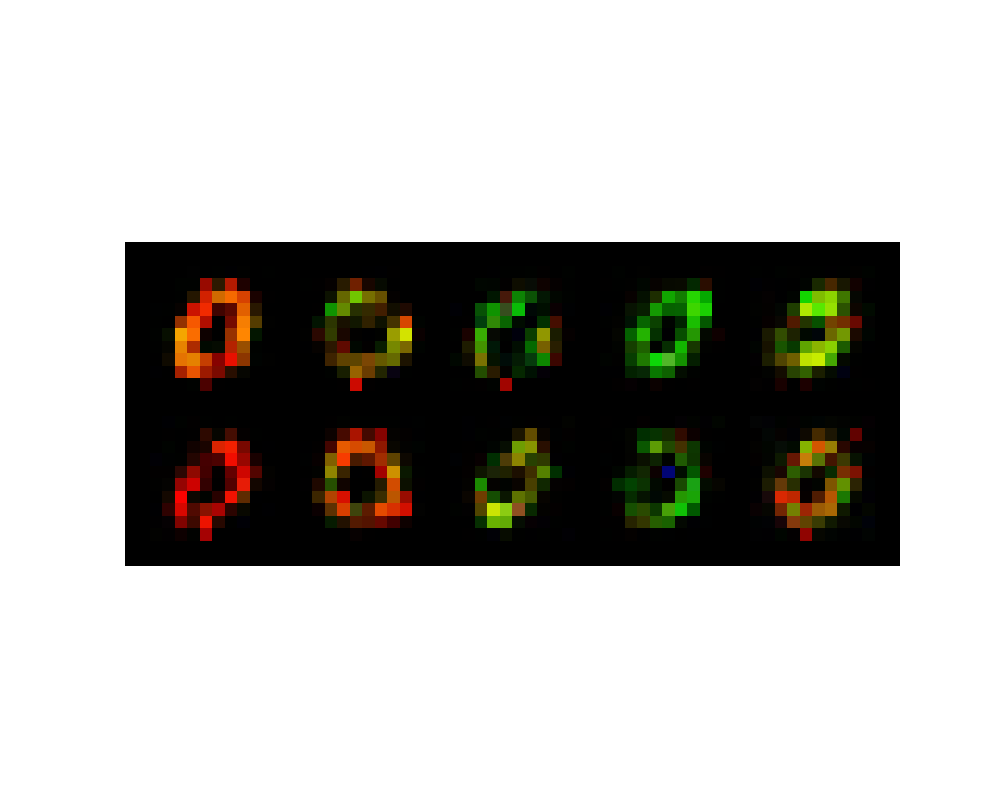} & \includegraphics[width=0.2\linewidth,trim={90pt 180pt 90pt 180pt},clip]{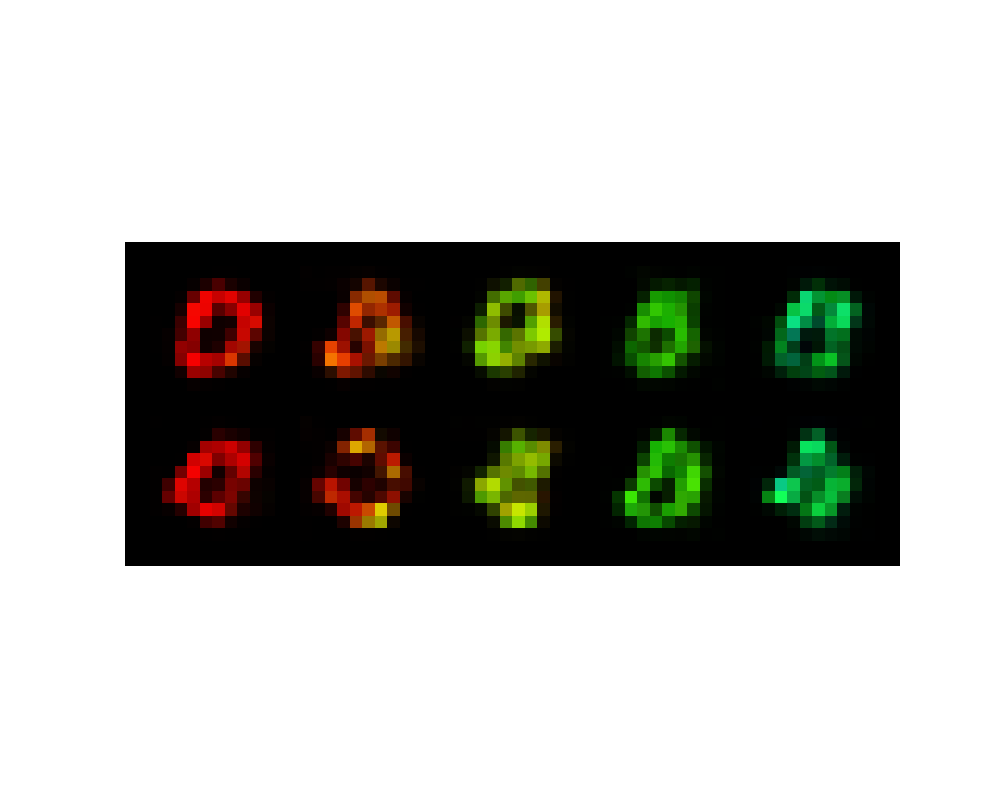} & \includegraphics[width=0.2\linewidth,trim={90pt 180pt 90pt 180pt},clip]{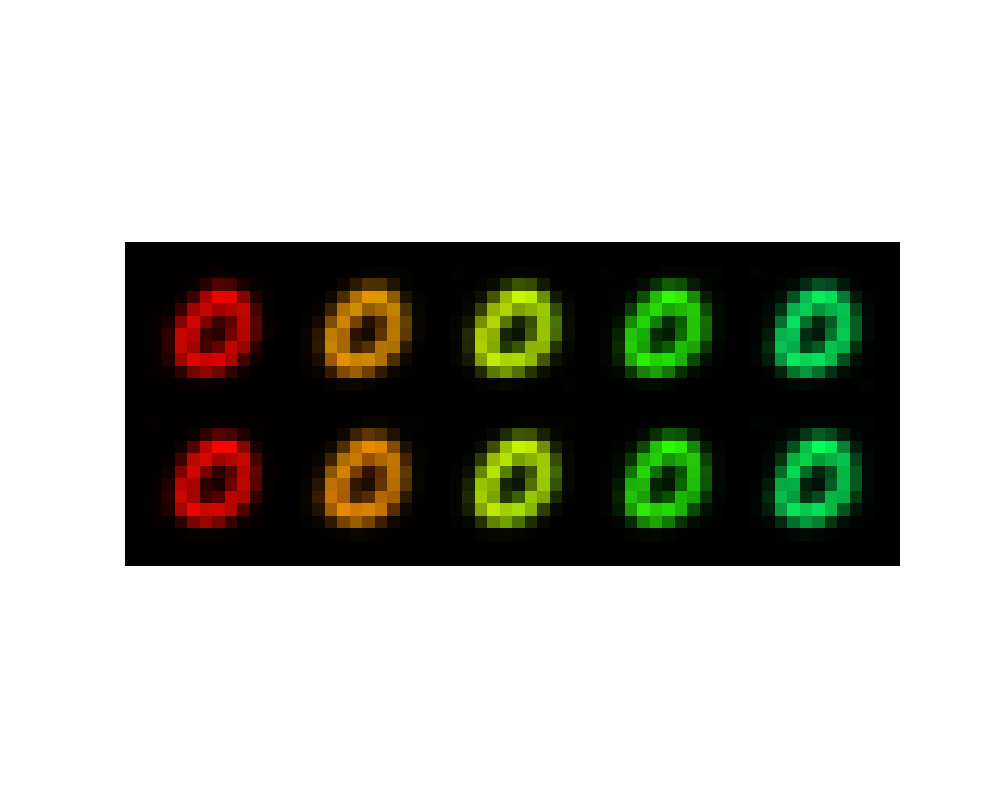} & \includegraphics[width=0.2\linewidth,trim={90pt 180pt 90pt 180pt},clip]{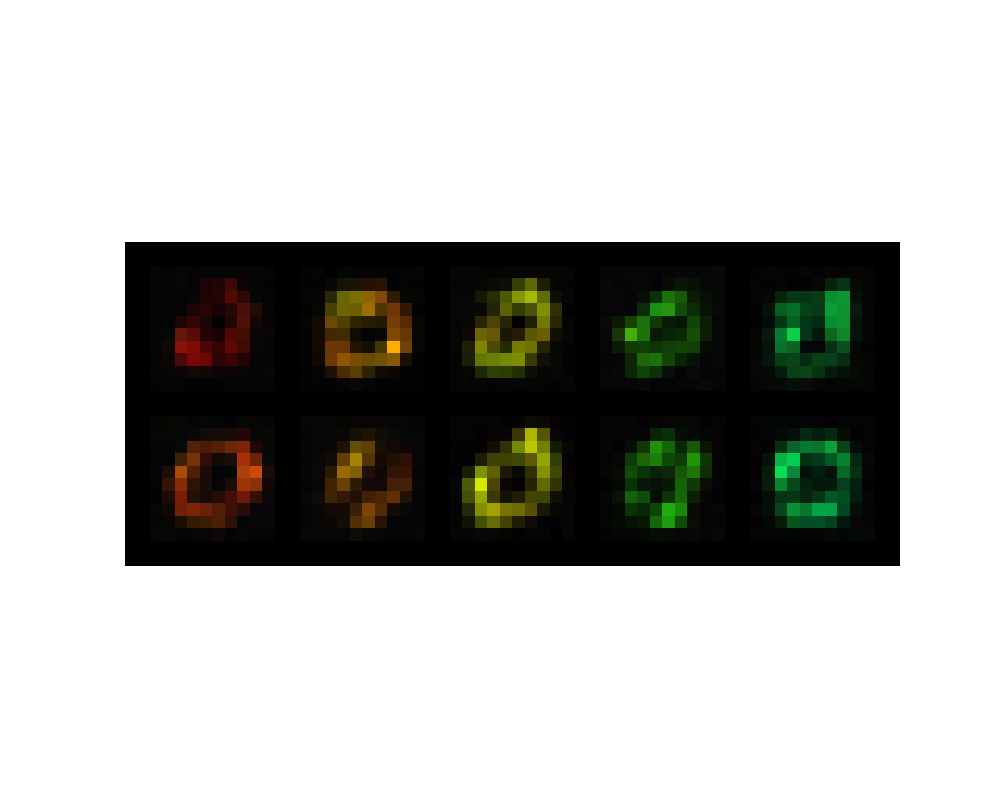} \\
         \midrule 
         \raisebox{1.5\height}{Ground-truth} & \multicolumn{4}{c}{\includegraphics[width=0.2\linewidth,trim={90pt 180pt 90pt 180pt},clip]{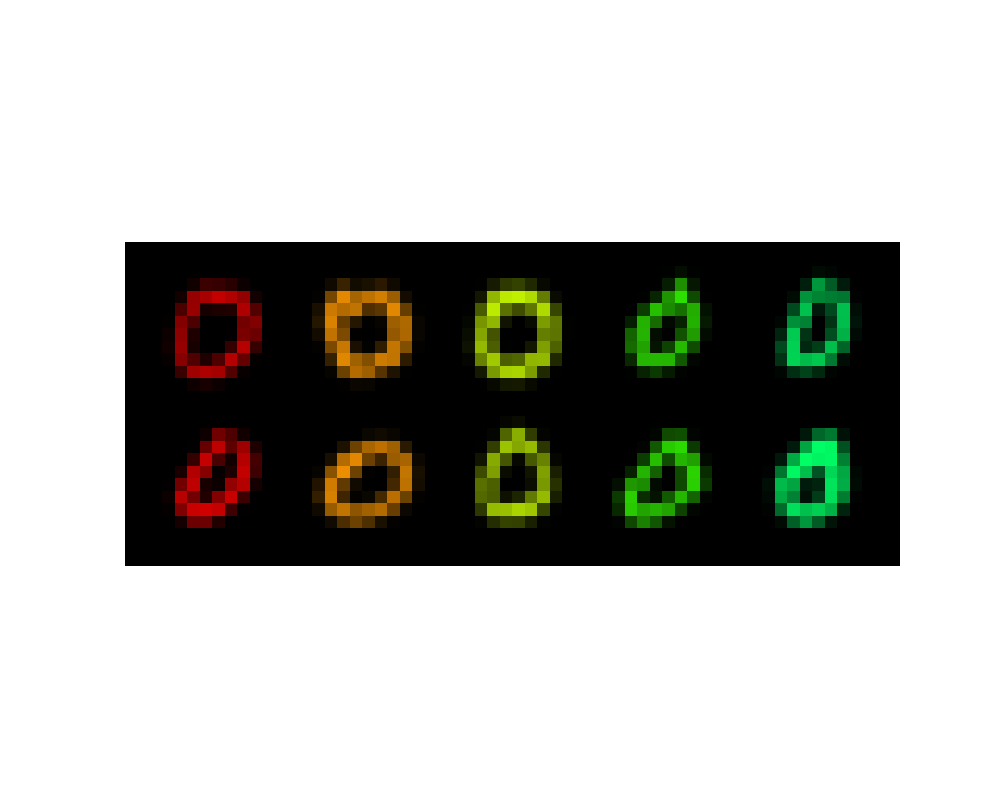}} \\
         \bottomrule
    \end{tabu}
    }
    \vspace{-0.4cm}
    \label{tab:res-col-mnist}
\end{table}

{\textbf{Colored MNIST dataset.} To further evaluate our \GDRlearners on the high-dimensional data, we adapted a colored MNIST dataset \citep{xia2024neural} (= high-dimensional outcomes setting). Namely, we considered $\mathcal{A} = \{0, 1, 2,3, 4\}$ digits as treatments and $\mathcal{X} = \{\text{red}, \text{orange}, \text{yellow}, \text{lightgreen}, \text{green} \}$ colors as covariates. We also down-scaled the images to $10\times10$ pixels. Thus, $d_a = 5; d_x=5$, and $d_y=10\times10\times3$ (= width $\times$ height $\times$ channels). Then, we used $n_{\text{train}} = 30,000$ and $n_{\text{test}}=5,000$ images. 
\textbf{Results.} The qualitative results are in Table~\ref{tab:res-col-mnist}, where we show potential outcome samples ($a =0$) from different generative models and learners. Here, our \GDRlearners (in comparison to other learners) specifically allow to better preserve the shapes of digits. This was expected, as the digits are chosen as the treatment variables. We also provide detailed quantitative results for the same experiments in Appendix~\ref{app:experiments}.  }

\textbf{Conclusion.} In our work, we introduced \GDRlearners, a novel general class of Neyman-orthogonal meta-learners for estimating the conditional distribution of potential outcomes. Unlike the existing methods, our \GDRlearners possess general theoretical properties of quasi-oracle efficiency and double robustness. Furthermore, we introduced several variants of our \GDRlearners on top of state-of-the-art deep generative models, namely, normalizing flows, generative adversarial networks, variational autoencoders, and diffusion models.

\newpage
\subsection*{Ethics statement}

This work is primarily theoretical and does not involve sensitive data, human subjects, or system deployment. As such, we do not anticipate direct ethical risks.

However, since the research relates to learning algorithms and generalization, possible downstream applications could include decision-making systems in sensitive domains (e.g., education, healthcare, or finance). We emphasize that such uses must consider fairness, interpretability, and safety before deployment. Our work does not include experiments on personal data and does not attempt to optimize or deploy models in real-world sensitive contexts.

\subsection*{Reproducibility statement}

We follow ICLR’s reproducibility guidelines. In particular:
\begin{itemize}
    \item \textbf{Theoretical results.} All theorems, lemmas, and propositions in this paper are presented with complete proofs (see Appendix~\ref{app:proofs}). Proofs are written to be self-contained, without reliance on unpublished materials.
    \item \textbf{Empirical evaluation.} We provide details of datasets, hyperparameters, and training setups in the Appendix~\ref{app:implementation}, ensuring reproducibility.
    \item \textbf{Resources.} We released all the code, along with a README describing how to reproduce the results.
\end{itemize}

\subsection*{LLM usage statement} 

We used ChatGPT during the research and writing process in limited ways. Specifically, it was employed to brainstorm alternative proof strategies, to clarify technical arguments during early drafts, and to improve the clarity of explanations in some sections of the paper. All mathematical results, definitions, and proofs were verified independently by the authors. The LLM was not used to generate novel research contributions, datasets, or experiments.

\newpage
\textbf{Acknowledgments.} This paper is supported by the DAAD program “Konrad Zuse Schools of Excellence in Artificial Intelligence”, sponsored by the Federal Ministry of Education and Research. S.F. acknowledges funding via Swiss National Science Foundation Grant 186932. This work has been supported by the German Federal Ministry of Education and Research (Grant: 01IS24082). Additionally, the authors would like to thank Dennis Frauen for his helpful remarks and comments on the content of this paper.

\bibliography{bibliography}
\bibliographystyle{iclr2026_conference}

\appendix
\newpage
\section{Extended Related Work} \label{app:extended-rw}
{\textbf{Semi-parametric efficiency \& Orthogonal statistical learning.} In the context of the treatment effect estimation, augmented inverse propensity of treatment weighting (A-IPTW) methods were developed to enable efficient, semi-parametric estimation of \emph{finite-dimensional target parameters} \citep{newey1994asymptotic,robins2000robust}. Conceptually, the A-IPTW estimators apply a first-order bias correction to plug-in estimators \citep{bickel1993efficient,tsiatis2006semiparametric}. However, in our case, we aim at estimating the \CDPOs, which are \emph{functional, infinite-dimensional target parameters}; and, thus, the standard semi-parametric efficiency theory does not apply here. As a remedy, a so-called  Neyman-orthogonal learning was proposed \citep{laan2003unified,laan2006statistical,chernozhukov2018double,semenova2021debiased,kennedy2023towards,foster2023orthogonal,morzywolek2023general,vansteelandt2025orthogonal}, also known as double machine learning (ML). It resolves the infinite-dimensionality of the target parameter by focusing on debiasing target risk /target score functionals (those are finite-dimensional quantities). In our paper, we specifically tailor a general idea of the Neyman-orthogonality to learning the \CDPOs.    
}

\textbf{Orthogonal learning of marginal POs distributions.} Multiple works suggested efficient estimators/learners for the marginal distributions of the POs. They target either (i)~at some distributional aspects of the marginal POs distributions (\eg, point-wise CDFs/quantiles) \citep{chernozhukov2013inference,firpo2007efficient,byambadalai2025on}; (ii)~at the distributional distances between the former \citep{kim2018causal,martinez2023efficient, fawkes2024doubly}; or (iii)~at the whole distributions of the former  \citep{kim2018causal,kennedy2023semiparametric,melnychuk2023normalizing,martinez2024counterfactual,luedtke2024one}. Out of those, the works of \citet{kennedy2023semiparametric} and \citet{melnychuk2023normalizing} are probably the most similar to our work, as they also aim to find the best approximation of the POs distributions in the class of some parametric probabilistic model. Our work, thus, may be interpreted as a non-trivial extension of  \citep{kennedy2023semiparametric,melnychuk2023normalizing} to the covariate-conditional POs distributions. 

\textbf{Non-parametric estimators of \CDPOs.} Several non-parametric estimators were proposed to estimate the \CDPOs. For example, \citet{muandet2021counterfactual} used distributional kernel mean embeddings, and \citet{alaa2017bayesian,alaa2018limits} employed Bayesian non-parametrics (Gaussian processes). However, both approaches are in fact \emph{plug-in learners} and \emph{scale badly to high dimensions} (due to inability to learn low-dimensional manifolds of high-dimensional data). Therefore, we excluded those methods from the empirical comparison. 

\textbf{Time-varying probabilistic models for POs.} Some works suggested generative models for time-varying potential outcomes distributions. They relied on different probabilistic models: Gaussian processes \citep{schulam2017reliable}, Bayesian neural networks \citep{hess2024bayesian}, and diffusion models/normalizing flows \citep{wu2024counterfactual,mu2025counterfactual}. However, they only restricted on plug-in losses \citep{schulam2017reliable,hess2024bayesian,mu2025counterfactual} and IPTW losses \citep{wu2024counterfactual}. Our paper operates only in the \emph{cross-sectional setting}, and, thus, we also establish an important foundation for future work on the Neyman-orthogonal generative learners in the time-varying potential outcomes setting (\eg, analogously to \citep{frauen2025modelagnostic}).

\newpage
\section{Background materials} \label{app:background}
\subsection{Generative models} \label{app:background-gm}

In the following, we formalize four popular deep generative models that are later used as instantiations to our \GDRlearners: (a)~conditional normalizing flows \citep{rezende2015variational,trippe2018conditional}; (b)~conditional generative adversarial networks \citep{goodfellow2014generative, laria2022transferring},(c)~conditional variational autoencoders \citep{kingma2014auto,oh2022cvae}, and (d)~conditional diffusion models \citep{ho2020denoising,lutati2023ocd}.

\subsubsection{(a) Conditional normalizing flows (CNFs)}
\textbf{Probabilistic model.} Conditional normalizing flows (CNFs) \citep{tabak2010density,rezende2015variational,trippe2018conditional} are flexible probabilistic models with an explicit tractable density. They employ invertible and differentiable transformations $f_a(z \mid v): \mathcal{Z} \to \mathcal{Y}$ of some latent variable $Z \in \mathbb{R}^{d_y}$ with a known density $h_a(z)$. Then, the conditional density of the outcome is given by a change of variables formula:
\begin{equation}
    p_a(y \mid v) = p_a(z = f_a^{-1}(y)) \cdot \abs{\operatorname{det}\bigg[\frac{\diff f_a}{\diff z} \big(f_a^{-1}(y)\big) \bigg]}^{-1}.
\end{equation}
Then, we can fit the CNFs by parameterizing $f_a$ and directly maximizing the log-likelihood of the data. Notably, the marginalization wrt. $\varepsilon_z$ is not needed for the CNFs.

\textbf{Learning as distributional distance minimization.} CNFs aim to maximize the log-likelihood wrt. $p_a$, namely
\begin{equation}
    \max_{p_a \in \mathcal{G}} \mathcal{L}(g_a) = \max_{p_a \in \mathcal{G}}\mathbb{E}\left[\log p_a(Y[a] \mid V) \right],
\end{equation}
where $\mathcal{L}$ is defined in Eq.~\eqref{eq:target-risk-id}, and $g_a(y, z \mid v) = p_a(y \mid v)$.
As noted in \citep{kennedy2023semiparametric,melnychuk2023normalizing}, the maximization of the log-likelihood is equivalent to the Kullback–Leibler (KL) divergence projection of the ground-truth on the chosen model class $\mathcal{G}$. In our context, this is equivalent to the following:
\begin{equation}
    p_a^* = \argmax_{p_a \in \mathcal{G}} \mathcal{L}(p_a) \Longleftrightarrow p_a^* = \argmin_{p_a \in \mathcal{G}} \mathbb{E}_V \operatorname{KLD}(\mathbb{P}(Y[a] \mid V)\,|| \, p_a(Y \mid V )),
\end{equation}
where $\operatorname{KLD} (\cdot || \cdot )$ is the KL divergence.

\subsubsection{(b) Conditional generative adversarial networks (CGANs)}
\textbf{Probabilistic model.} Conditional generative adversarial networks (CGANs) \citep{goodfellow2014generative, laria2022transferring} also apply a differentiable transformation $f_a(z \mid v): \mathcal{Z} \to \mathcal{Y}$ to the predefined latent variable $Z \in \mathbb{R}^{d_z}$ with a known density $h_a(z)$. Here, $f_a$ is called a conditional generator, and it induces an implicit conditional distribution of the POs, $p_a(y \mid v)$. Also, CGANs define an auxiliary model, a conditional discriminator $d_a(y \mid v): \mathcal{Y} \to [0, 1]$, that is used to train the whole model.  

\textbf{Learning as distributional distance minimization.} CGANs are trained in the adversarial manner: While the conditional discriminator tries to distinguish the real samples from \CDPOs, the conditional generator tries to trick the conditional discriminator by generating realistic samples. Thus, the final objective wrt. $f_a$ and $d_a$ is as follows:
\begin{equation}
    \min_{f_a \in \mathcal{G}} \max_{d_a \in \mathcal{G}} \mathcal{L}(g_a) = \min_{f_a \in \mathcal{G}} \max_{d_a \in \mathcal{G}}  \mathbb{E}\left[\underset{Z \sim \varepsilon_z }{\mathbb{E}} \log d_a(Y[a] \mid V) \cdot \big(1 -d_a(f_a(Z \mid V) \mid V)\big)\right], 
\end{equation}
where $\mathcal{L}$ is defined in Eq.~\eqref{eq:target-risk-id}, $\varepsilon_z = h_a(z)$, and $g_a(y, z \mid v) = d_a(y \mid v) \cdot \big(1 -d_a(f_a(z \mid v) \mid v)\big)$. Furthermore, one can show \citep{goodfellow2014generative} that the conditional generator implicitly minimizes the Jensen–Shannon (JS) divergence:
\begin{equation}
    f_a^* = \argmin_{f_a \in \mathcal{G}} \max_{d_a \in \mathcal{G}} \mathcal{L}(g_a) \Longleftrightarrow p_a^* = \argmin_{p_a \in \mathcal{G}} \mathbb{E}_V \operatorname{JSD}(\mathbb{P}(Y[a] \mid V)\,|| \, p_a(Y \mid V )),
\end{equation}
where $\operatorname{JSD} (\cdot || \cdot )$ is the JS divergence.

\subsubsection{(d) Conditional variational autoencoders (CVAEs)}

\textbf{Probabilistic model.} Conditional variational autoencoders (CVAEs) \citep{kingma2014auto,oh2022cvae} also use a latent variable $Z \in \mathbb{R}^{d_z}$ with the density $h_a(z)$. However, instead of defining a transformation between $Z$ and $Y[a]$, they introduce a conditional distribution $p_a(y \mid z, v)$ (a conditional decoder). To sample from $p_a(y \mid v)$, one can first sample from $Z \sim h_a(Z)$ and then from $Y[a] \sim p_a(Y \mid Z, v)$. Furthermore, to fit the conditional decoder, we employ an auxiliary model $q_a(z \mid y, v)$ (a conditional encoder) that serves as a variational approximation of a ground-truth intractable posterior $p_a(z \mid y, v) = p_a(y\mid z, v) \, h_a(z) / \mathbb{P}(Y[a] = y \mid v)$. 

\textbf{Learning as distributional distance minimization.} To fit CVAEs, we aim to maximize an evidence lower bound (ELBO) wrt. $p_a$ and $q_a$:
\begin{equation}
    \max_{p_a, q_a \in \mathcal{G}} \mathcal{L}(g_a) = \max_{p_a, q_a \in \mathcal{G}} \mathbb{E}\left[\underset{Z \sim \varepsilon_z }{\mathbb{E}} \log \frac{p_a(Y[a], Z \mid V)}{q_a(Z \mid Y[a], V)}\right],
\end{equation}
where $\mathcal{L}$ is defined in Eq.~\eqref{eq:target-risk-id}, $p_a(Y[a], Z \mid V)  = p_a(Y[a] \mid Z, V) \, h_a(Z)$, $\varepsilon_z = q_a(Z \mid Y[a], V)$. Thus, $g_a(y, z \mid v)$ equals $p_a(y , z \mid v) / q_a(z \mid y, v)$. Furthermore, as demonstrated by \citet{kingma2019introduction}, by maximizing the ELBO, we minimize the following distributional distance:
\begin{align} \label{eq:kld-ig}
    p_a^* = \argmax_{p_a \in \mathcal{G}} \max_{q_a \in \mathcal{G}}  \mathcal{L}(g_a)  & \Longleftrightarrow p_a^* = \argmin_{p_a \in \mathcal{G}} \mathbb{E}_V \bigg[\operatorname{KLD}(\mathbb{P}(Y[a] \mid V) \,|| \, p_a(Y \mid V )) \\
    & +  \underbrace{\mathbb{E}_{Y[a]}\operatorname{KLD}\big(q_a(Z \mid Y[a], V) \,|| \, p_a(Z \mid Y[a], V)\big)}_{\text{inference gap (IG)}} \bigg], \nonumber
\end{align}
where $p_a(y \mid v) = \int_{\mathcal{Z}} p_a(y \mid z, v) h_a(z) \diff z$,  $\operatorname{KLD} (\cdot || \cdot )$ is the KL divergence, and the inference gap (IG) shrinks as we make a variational family of $q_a$ more expressive.

\subsubsection{(d) Conditional diffusion models (CDMs)} 

\textbf{Probabilistic model.} Conditional diffusion models (CDMs) \citep{ho2020denoising,lutati2023ocd} extend the idea of the CVAEs and construct the whole sequence of the latent variables $Z_{1:T}$ where $T > 1$ is a number of diffusion steps and each $Z_t \in \mathbb{R}^{d_y}$. The CDMs gradually add noise to $Y[a]$ by defining a fixed diffusion process $q_a(z_{1:T} \mid y)$ so that the final $Z_T$ has the predefined latent noise distribution $h_a(z_T)$. Then, the probabilistic model $p_a(y \mid z_{1:T}, v)$ is defined as a conditional denoising process that restores the value of $Y[a]$. We refer to \citet{ma2024diffpo} for further details of the model definition.

\textbf{Learning as distributional distance minimization.} Similarly to CVAEs, CDMs aim to maximize the ELBO, yet only wrt. to $p_a$:
\begin{equation}
    \max_{p_a \mathcal{G}} \mathcal{L}(g_a) = \max_{p_a \in \mathcal{G}} \mathbb{E}\left[\underset{Z \sim \varepsilon_z }{\mathbb{E}} \log \frac{p_a(Y[a] , Z_{1:T}\mid V)}{q_a(Z_{1:T}\mid Y[a])}\right],
\end{equation}
where $\mathcal{L}$ is defined in Eq.~\eqref{eq:target-risk-id}, $p_a(Y[a] , Z_{1:T}\mid V) = h_a(Z_{T}) \prod_{t=1}^T p_a(Z_{t-1} \mid Z_{t}, V)$ with $Z_0 = Y[a]$,  $\varepsilon_z = q_a(Z_{1:T} \mid Y[a])$. Hence, $g_a(y, z \mid v) = p_a(y , z_{1:T} \mid v) / q_a(z_{1:T} \mid y)$. The denoising process of the CDMs then has a similar interpretation as in CVAEs: it minimizes the KL divergence plus the inference gap (see Eq.~\eqref{eq:kld-ig}).

\newpage
\subsection{Orthogonal statistical learning} \label{app:background-orth-learning}

In the following, we provide some key definitions of orthogonal statistical learning \citep{chernozhukov2018double,foster2023orthogonal,morzywolek2023general}. For that, we use some additional notation: $\norm{\cdot}_{L_p}$ denotes the $L_p$-norm with  $\norm{f}_{L_p} = {\mathbb{E}(\abs{f(Z)}^p)}^{1/p}$, $a \lesssim b$ implies that there exists $C \ge 0$ such that $a \le C \cdot b$, and $X_n = o_{\mathbb{P}}(r_n)$ means $X_n/r_n \stackrel{p}{\to} 0$.

\begin{definition}[Neyman-orthogonality \citep{foster2023orthogonal,morzywolek2023general}]
    A risk $\mathcal{L}$ is called \emph{Neyman-orthogonal} if its pathwise cross-derivatives equal to zero, namely,
    \begin{equation} \label{eq:neym-orth-def}
         D_\eta D_g {\mathcal{L}}(g^*, \eta)[g- g^*, \hat{\eta} - \eta] = 0 \quad \text{for all } g \in \mathcal{G} \text{ and } \eta \in \mathcal{H},
    \end{equation}
    where $D_f F(f)[h] = \frac{\diff}{\diff{t}} F (f + th) \vert_{t=0}$ and $D_f^k F(f)[h_1, \dots, h_k] = \frac{\partial^k}{\partial{t_1} \dots \partial{t_k}} F (f + t_1 h_1 + \dots + t_k h_k)  \vert_{t_1=\dots=t_k = 0}$ are pathwise derivatives \citep{foster2023orthogonal}; $g^* = \argmin_{g \in \mathcal{G}} \mathcal{L}(g, \eta)$; and $\eta$ is the ground-truth nuisance function. 
\end{definition}

\begin{definition}[Quasi-oracle efficiency]\label{def:quasi-oracle}
An estimator \(\hat{g} = \argmin_{g \in \mathcal{G}} \mathcal{L}(g, \hat{\eta})\) of \(g^* = \argmin_{g \in \mathcal{G}}\mathcal{L}({g}, {\eta})\) is said to be {quasi-oracle efficient} if the estimator \(\hat{\eta}\) of the nuisance function $\eta$ is allowed to have slow rates of convergence, $o_{\mathbb{P}}(n^{-1/4})$, and the following still holds asymptotically:
\begin{equation} \label{eq:oracle-eff}
    \norm{\hat{g} - g^*}_{L_2}^2 \lesssim \mathcal{L}(\hat{g}, \hat{\eta}) - \mathcal{L}({g}^*, \hat{\eta}) + o_{\mathbb{P}}(n^{-1/2}),
\end{equation}
where \(\mathcal{L}(\hat{g}, \hat{\eta}) - \mathcal{L}({g}^*, \hat{\eta})\) is an optimization error term: the difference between the risks of the estimated target model and the optimal target model where the estimated nuisance functions are used. 
\end{definition}

\begin{definition}[Star hull]\label{def:star}
 A star hull \citep{foster2023orthogonal} for elements $x$ and $x'$ of a vector space $\mathcal{X}$ is defined as $star(\mathcal{X},x,x') = \{tx + (1-t)x' \mid t \in [0, 1] \}$.
\end{definition}

\newpage
\section{{Theoretical results}} \label{app:proofs}
\begin{lemma}[Identification of the target risks] \label{lemma:target-risks-id}
    Under the identifiability assumptions (i)-(iii), the target risks $\mathcal{L}(g_a)$ in Eq.~\eqref{eq:target-risk-def} can be identified as follows:
    \begin{equation}
        \mathcal{L}(g_a)= \mathbb{E} \bigg[ \int_\mathcal{Y} \Big[\underset{Z \sim \varepsilon_z }{\mathbb{E}} \log g_a(y, Z \mid V) \Big] \, \xi_a(y | X) \diff{y} \bigg] = \mathbb{E}\left[\frac{\mathbbm{1}\{A = a\}}{\pi_a(X)} \, \underset{Z \sim \varepsilon_z }{\mathbb{E}} \log g_a(Y, Z \mid V)\right].
    \end{equation}
\end{lemma}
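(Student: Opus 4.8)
The plan is to treat the inner expectation over the auxiliary latent $Z$ as a fixed measurable function of $(y, v)$ that is unaffected by the causal structure, and then reduce both claimed equalities to the standard potential-outcomes identification already recorded in Eq.~\eqref{eq:cdpos:id}. Concretely, I would write $\phi_a(y, v) = \underset{Z\sim\varepsilon_z}{\mathbb{E}}\log g_a(y, Z\mid v)$, so that the target risk is $\mathcal{L}(g_a) = \mathbb{E}[\phi_a(Y[a], V)]$. Since $V\subseteq X$, every conditioning on $X$ also fixes $V$, and $\phi_a$ does not depend on the treatment assignment mechanism; this lets me push all of the causal reasoning onto the conditional law of $Y[a]$ and integrate $\phi_a$ through by linearity.

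For the first equality I would condition on $X$ via the tower property: $\mathcal{L}(g_a) = \mathbb{E}_X\big[\mathbb{E}[\phi_a(Y[a], V)\mid X]\big] = \mathbb{E}_X\big[\int_\mathcal{Y}\phi_a(y, V)\,\mathbb{P}(Y[a]=y\mid X)\diff y\big]$. Then I invoke the $X$-conditional identification $\mathbb{P}(Y[a]=y\mid X) = \xi_a(y\mid X)$, which follows from unconfoundedness (iii) (replacing $\mathbb{P}(Y[a]=y\mid X)$ by $\mathbb{P}(Y[a]=y\mid X, A=a)$) together with consistency (i) (so that $Y[a]=Y$ on $\{A=a\}$, giving $\mathbb{P}(Y=y\mid X, A=a) = \xi_a(y\mid X)$). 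Substituting yields the first form.

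For the second equality I would run the classical inverse-propensity argument in reverse, starting from the right-hand side. Conditioning on $X$ and then on $A$ within each covariate stratum, $\mathbb{E}\big[\tfrac{\mathbbm{1}\{A=a\}}{\pi_a(X)}\phi_a(Y, V)\mid X\big] = \tfrac{1}{\pi_a(X)}\,\mathbb{P}(A=a\mid X)\,\mathbb{E}[\phi_a(Y, V)\mid X, A=a] = \mathbb{E}[\phi_a(Y, V)\mid X, A=a]$, where the cancellation uses $\mathbb{P}(A=a\mid X) = \pi_a(X)$ and is well-defined precisely because strong overlap (ii) keeps $\pi_a(X)$ bounded away from $0$. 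On the event $\{A=a\}$ consistency again gives $\phi_a(Y, V) = \phi_a(Y[a], V)$, and the conditional law of $Y$ given $X, A=a$ is $\xi_a(\cdot\mid X)$; hence this conditional expectation equals $\int_\mathcal{Y}\phi_a(y, V)\,\xi_a(y\mid X)\diff y$. Taking the outer expectation over $X$ reproduces the middle expression, closing the chain of equalities.

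The calculation is essentially routine; the only points requiring care are (a)~justifying that the inner $Z$-expectation factors out as a deterministic integrand, so that Fubini/linearity lets the identification be applied pointwise in $y$, and (b)~tracking exactly where each of the three assumptions enters---unconfoundedness and consistency to identify the law of $Y[a]$, and strong overlap to license the division by $\pi_a(X)$ in the IPW step.
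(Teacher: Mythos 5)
Your proof is correct and follows essentially the same route as the paper's: both equalities are obtained by conditioning on $X$, invoking unconfoundedness and consistency to identify the conditional law of $Y[a]$ with $\xi_a(\cdot\mid X)$, and then performing the standard propensity-ratio manipulation (which you run from right to left and the paper runs from left to right --- an immaterial difference). Your explicit attention to where strong overlap licenses the division by $\pi_a(X)$ is a point the paper leaves implicit, but the substance of the argument is identical.
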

\begin{proof}
    The first equality holds due to the law of total expectation, and the application of (i)~consistency and (iii)~unconfoundedness:
    \begin{align}
         \mathcal{L}(g_a) &= \mathbb{E} \big[ \underset{Z \sim \varepsilon_z }{\mathbb{E}}\log g_a(Y[a], Z \mid V) \big]  = \mathbb{E}_X \big[ \underset{Z \sim \varepsilon_z }{\mathbb{E}}\log g_a(Y[a], Z \mid V) \mid X\big] \\
         & \stackrel{\text{(iii)}}{=} \mathbb{E}_X \big[ \underset{Z \sim \varepsilon_z }{\mathbb{E}}\log g_a(Y[a], Z \mid V) \mid X, A = a\big] \\
         & \stackrel{\text{(i)}}{=} \mathbb{E}_X \big[ \underset{Z \sim \varepsilon_z }{\mathbb{E}}\log g_a(Y, Z \mid V) \mid X, A = a\big] \\
         & = \mathbb{E}_X \bigg[ \int_\mathcal{Y} \Big[\underset{Z \sim \varepsilon_z }{\mathbb{E}} \log g_a(y, Z \mid V) \Big] \, \xi_a(y | X) \diff{y} \bigg].
    \end{align}
    The second equality is then easy to obtain from the first one, by using the laws of conditional probability:
    \begin{align}
        \mathcal{L}(g_a) &= \mathbb{E}_X \bigg[ \int_\mathcal{Y} \Big[\underset{Z \sim \varepsilon_z }{\mathbb{E}} \log g_a(y, Z \mid V) \Big] \, \xi_a(y | X) \diff{y} \bigg] \\
        & = \mathbb{E}_X \bigg[ \int_\mathcal{Y} \Big[\underset{Z \sim \varepsilon_z }{\mathbb{E}} \log g_a(y, Z \mid V) \Big] \, \frac{\mathbb{P}(Y=y, A =a \mid X)}{\pi_a(X)} \diff{y} \bigg] \\
        & = \mathbb{E}\left[\frac{\mathbbm{1}\{A = a\}}{\pi_a(X)} \, \underset{Z \sim \varepsilon_z }{\mathbb{E}} \log g_a(Y, Z \mid V)\right],
    \end{align}
    where the last equality holds due to the laws of total probability.
\end{proof}

\begin{lemma}[One-step bias correction of the RA-learner] \label{lemma:bias-correction}
    To construct a Neyman-orthogonal learner, we perform a one-step bias correction of the RA-learner, namely:
    \begin{equation}
        \hat{\mathcal{L}}_\text{GDR}(g_a, \hat{\eta}) = \hat{\mathcal{L}}_\text{RA}(g_a, \hat{\eta}) + \mathbb{P}_n\Big\{ \mathbb{IF}(\mathcal{L}; (X, A, Y); \hat{\eta}) \Big\},
    \end{equation}
    where $\mathbb{IF}(\mathcal{L}; (X, A, Y); \hat{\eta})$ is an efficient influence function (EIF) \citep{kennedy2024semiparametric} of the target risk and is given by the following:
    \begin{align}
        & \mathbb{IF}(\mathcal{L}; (X, A, Y); \hat{\eta}) =   \frac{\mathbbm{1}\{A = a\}}{\hat{\pi}_a(X)} \underset{Z \sim \varepsilon_z }{\mathbb{E}} \log g_a(Y, Z \mid V) \\
        & \quad  \quad + \bigg(1 - \frac{\mathbbm{1}\{A = a\}} {\hat{\pi}_a(X)}\bigg) \int_\mathcal{Y} \Big[\underset{Z \sim \varepsilon_z }{\mathbb{E}} \log g_a(y, Z \mid V)\Big] \hat{\xi}_a(y\mid X) \diff{y} - \hat{\mathcal{L}}. \nonumber
    \end{align}
\end{lemma}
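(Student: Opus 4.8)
The plan is to prove the lemma in two movements: first establish that the stated $\mathbb{IF}$ is genuinely the efficient influence function (EIF) of the target risk functional, and then verify that the one-step update collapses algebraically to $\hat{\mathcal{L}}_\text{GDR}$.

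First I would recast the target risk as a standard average-of-a-regression functional. Fixing the generative model $g_a$, the map $\ell_a(Y,V) \define \underset{Z\sim\varepsilon_z}{\mathbb{E}}\log g_a(Y, Z\mid V)$ is a fixed, $\mathbb{P}$-independent function of $(Y,V)$, so by the first identity of Lemma~\ref{lemma:target-risks-id} the functional is $\mathcal{L}(g_a) = \mathbb{E}[\mu_a(X)]$ with the conditional regression $\mu_a(x) \define \mathbb{E}[\ell_a(Y,V)\mid X=x, A=a] = \int_\mathcal{Y}\ell_a(y,V)\,\xi_a(y\mid x)\diff y$. The key observation is that $\mathcal{L}(g_a)$ depends on $\mathbb{P}(X,A,Y)$ only through the covariate marginal $\mathbb{P}(X)$ and the outcome conditional $\xi_a$, and not through the propensity $\pi_a$. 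This is exactly the functional whose semiparametric-efficient influence function is the well-known augmented-IPTW (doubly-robust) form, which is the content of the claimed $\mathbb{IF}$.

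Next I would derive the EIF from scratch to confirm the stated form. Taking a regular parametric submodel $\{\mathbb{P}_t\}$ through $\mathbb{P}$ with score $s = s_X(X) + s_{A\mid X}(A\mid X) + s_{Y\mid X,A}(Y\mid X,A)$, I differentiate $\mathcal{L}(g_a)$ at $t=0$: the $\mathbb{P}(X)$-part contributes $\mathbb{E}[\mu_a(X)\,s_X(X)]$, the $\xi_a$-part contributes $\mathbb{E}_X[\mathbb{E}(\ell_a\, s_{Y\mid X,a}\mid X, A=a)]$, and the propensity part vanishes because $\pi_a$ does not enter $\mathcal{L}$. Writing $\varphi \define \tfrac{\mathbbm{1}\{A=a\}}{\pi_a(X)}\big(\ell_a(Y,V)-\mu_a(X)\big) + \mu_a(X) - \mathcal{L}$, a routine check using $\mathbb{E}[\varphi\mid X,A]=\mu_a(X)-\mathcal{L}$ (for both $A=a$ and $A\neq a$) and $\mathbb{E}[\tfrac{\mathbbm{1}\{A=a\}}{\pi_a}(\ell_a-\mu_a)\,s_{Y\mid X,a}\mid X,a]=\mathbb{E}[\ell_a\, s_{Y\mid X,a}\mid X,a]/\pi_a(X)$ shows $\mathbb{E}[\varphi\, s] = \frac{\diff}{\diff t}\mathcal{L}(\mathbb{P}_t)\big|_{t=0}$ for every score $s$; since the model is nonparametric, the tangent space is all of mean-zero $L_2(\mathbb{P})$ and the unique gradient $\varphi$ is the EIF. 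Replacing $\pi_a,\xi_a,\mathcal{L}$ by their estimates $\hat\pi_a,\hat\xi_a,\hat{\mathcal{L}}$ and noting that $\mu_a(X)=\int_\mathcal{Y}\ell_a(y,V)\,\hat\xi_a(y\mid X)\diff y$ is precisely the second-term integrand reproduces the stated $\mathbb{IF}(\mathcal{L};(X,A,Y);\hat\eta)$.

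Finally, the one-step identity is pure bookkeeping. The empirical average of the first two (uncentered) terms of $\mathbb{IF}$ is by inspection exactly $\hat{\mathcal{L}}_\text{GDR}(g_a,\hat\eta)$, so $\mathbb{P}_n\{\mathbb{IF}\} = \hat{\mathcal{L}}_\text{GDR}(g_a,\hat\eta) - \hat{\mathcal{L}}$. Since the centering constant $\hat{\mathcal{L}}$ is the plug-in value being corrected, namely the RA-learner estimate $\hat{\mathcal{L}}_\text{RA}(g_a,\hat\eta)$ — a short calculation confirms $\mathbb{E}[\hat{\mathcal{L}}_\text{RA}]=\mathcal{L}(g_a)$ under the true nuisances, so it is an admissible estimate of the target — the update telescopes: $\hat{\mathcal{L}}_\text{RA} + \mathbb{P}_n\{\mathbb{IF}\} = \hat{\mathcal{L}}_\text{RA} + \hat{\mathcal{L}}_\text{GDR} - \hat{\mathcal{L}}_\text{RA} = \hat{\mathcal{L}}_\text{GDR}$, as claimed. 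The only genuinely substantive step is the EIF derivation, and the single point requiring care there is treating $\ell_a$ as $\mathbb{P}$-independent, so that $g_a$ (including its latent sampler $\varepsilon_z$, which for CVAEs/CDMs depends on $y,v$) plays no role in the pathwise differentiation; everything else is the canonical augmented-IPTW calculation.
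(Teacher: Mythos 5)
Your proposal is correct and reaches the same EIF and the same telescoping identity as the paper, but the derivation of the influence function goes by a genuinely different route. The paper works inside the ``influence-function calculus'' of \citet{kennedy2022semiparametric}: it applies the product/chain rules for $\mathbb{IF}$, invokes the known EIF of the conditional density $\xi_a(y\mid x)$ (which introduces Dirac deltas $\delta\{X-x\}$, $\delta\{Y-y\}$), and then smooths these out by the outer expectation over $X$ before regrouping. You instead recognize $\mathcal{L}(g_a)=\mathbb{E}[\mu_a(X)]$ with $\mu_a(x)=\mathbb{E}[\ell_a(Y,V)\mid X=x,A=a]$ as the canonical average-of-regression functional and verify the AIPW candidate $\varphi$ directly as the pathwise-derivative gradient along parametric submodels, using the score decomposition $s=s_X+s_{A\mid X}+s_{Y\mid X,A}$ and the fact that the nonparametric tangent space is all of mean-zero $L_2(\mathbb{P})$. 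Your route is more self-contained and avoids the distributional (Dirac-delta) manipulations, at the cost of the submodel bookkeeping; the paper's route is shorter once one accepts the cited building blocks. Both correctly isolate the one subtle point --- that $\ell_a$ (including the latent sampler $\varepsilon_z$, even when it depends on $(y,v)$ through $q_a$) is not an attribute of the data-generating distribution and therefore contributes zero to the differentiation --- and both reduce the final identity to the observation that the uncentered part of $\mathbb{P}_n\{\mathbb{IF}\}$ is exactly $\hat{\mathcal{L}}_{\text{GDR}}$, with the centering constant $\hat{\mathcal{L}}$ taken to be the RA plug-in value so that the update telescopes. One small polish: you could state explicitly that the lemma's identity holds as an exact algebraic fact only under the convention $\hat{\mathcal{L}}=\hat{\mathcal{L}}_{\text{RA}}(g_a,\hat{\eta})$, which the paper leaves implicit as well.
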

\begin{proof}
    To derive the EIF for our target risk, we use multiple building blocks from \citet{kennedy2024semiparametric}, namely, the chain rule and the EIF of the conditional densities:
    \begin{align}
        & \mathbb{IF}(\mathcal{L}; (X, A, Y); {\eta}) = \mathbb{IF} \bigg(\mathbb{E}_X \bigg[ \int_\mathcal{Y} \Big[\underset{Z \sim \varepsilon_z }{\mathbb{E}} \log g_a(y, Z \mid V) \Big] \, \xi_a(y | X) \diff{y} \bigg]\bigg) \\
        = & \mathbb{E}_X \bigg[  \mathbb{IF} \bigg(\int_\mathcal{Y} \Big[\underset{Z \sim \varepsilon_z }{\mathbb{E}} \log g_a(y, Z \mid V) \Big] \, \xi_a(y | X) \diff{y}\bigg) \bigg] +  \int_\mathcal{Y} \Big[\underset{Z \sim \varepsilon_z }{\mathbb{E}} \log g_a(y, Z \mid V) \Big]  \, \xi_a(y | X) \diff{y} - \mathcal{L}.
    \end{align}
    Then, we unroll the inner part of the first term further:
    \begin{align}
        & \mathbb{IF} \bigg(\int_\mathcal{Y} \Big[\underset{Z \sim \varepsilon_z }{\mathbb{E}} \log g_a(y, Z \mid V) \Big] \, \xi_a(y | X) \diff{y}\bigg) \\
         = & \int_\mathcal{Y} \mathbb{IF} \bigg(\Big[\underset{Z \sim \varepsilon_z }{\mathbb{E}} \log g_a(y, Z \mid V) \Big] \bigg)\, \xi_a(y | X) \diff{y} + \int_\mathcal{Y} \Big[\underset{Z \sim \varepsilon_z }{\mathbb{E}} \log g_a(y, Z \mid V) \Big] \, \mathbb{IF} \big( \xi_a(y | X) \big)\diff{y} \\
        = & 0 + \int_\mathcal{Y} \Big[\underset{Z \sim \varepsilon_z }{\mathbb{E}} \log g_a(y, Z \mid V) \Big] \, \frac{\delta\{X - x\} \mathbbm{1}\{A = a\}}{\mathbb{P}(X=x, A=a)}\big(\delta\{Y - y\} - \xi_a(y\mid X)\big) \diff{y},
    \end{align}
    where $0$ appears since $g_a$ is not an attribute of the ground-truth DGP, and $\delta\{\cdot\}$ is a Dirac delta function.
    Then, by applying the outer expectation wrt. to the $X$, we can "smooth out" the Dirac delta functions:
    \begin{align}
        & \mathbb{E}_X \bigg[\int_\mathcal{Y} \Big[\underset{Z \sim \varepsilon_z }{\mathbb{E}} \log g_a(y, Z \mid V) \Big] \, \frac{\delta\{X - x\} \mathbbm{1}\{A = a\}}{\mathbb{P}(X=x, A=a)}\big(\delta\{Y - y\} - \xi_a(y\mid X)\big) \diff{y} \bigg] \\
        & \quad = \frac{\mathbbm{1}\{A = a\}}{\pi_a(X)} \bigg(\underset{Z \sim \varepsilon_z }{\mathbb{E}} \log g_a(Y, Z \mid V) - \int_\mathcal{Y} \Big[\underset{Z \sim \varepsilon_z }{\mathbb{E}} \log g_a(y, Z \mid V)\Big] \xi_a(y\mid X) \diff{y}\bigg).
    \end{align}
    Finally, by regrouping the terms, we obtain the final formula:
    \begin{align}
        & \mathbb{IF}(\mathcal{L}; (X, A, Y); {\eta}) =   \frac{\mathbbm{1}\{A = a\}}{\pi_a(X)} \underset{Z \sim \varepsilon_z }{\mathbb{E}} \log g_a(Y, Z \mid V) \\
        & \quad  \quad + \bigg(1 - \frac{\mathbbm{1}\{A = a\}} {\pi_a(X)}\bigg) \int_\mathcal{Y} \Big[\underset{Z \sim \varepsilon_z }{\mathbb{E}} \log g_a(y, Z \mid V)\Big] \xi_a(y\mid X) \diff{y} - {\mathcal{L}}. \nonumber
    \end{align}
\end{proof}

\begin{numtheorem}{1}[Neyman-orthogonality]\label{theor:no-app}
    The risk given by our GDR-learners is Neyman-orthogonal (\ie, first-order insensitive to the nuisance function errors), namely:
    \begin{equation}
        \mathcal{D}_\eta \mathcal{D}_g\mathcal{L}_\text{\emph{GDR}}(g_a, \eta)[g_a - g_a^*, \hat{\eta} - \eta] = 0 \quad \text{ for all $g_a \in \mathcal{G}$},
    \end{equation}
    where $\mathcal{D}_{\cdot} \mathcal{L}(\cdot)[\cdot]$ are path-wise derivatives (see Appendix~\ref{app:background-orth-learning} for definitions).
\end{numtheorem}
\begin{proof}
    Let us denote the IPTW weights as $\frac{\mathbbm{1}\{A = a\}} {\pi_a(X)} = w_a(A, X)$. Also, we define a DR pseudo-distribution $\tilde{\xi}^{\hat{\eta}}_a(X, A, Y) = \hat{w}_a(A, X)(\xi_a(Y \mid X) - \hat{\xi}_a(Y \mid X)) + \hat{\xi}_a(Y \mid X) $. Then, given the DR pseudo-distribution, our \GDRlearners risk simplifies to
    \begin{equation}
        \mathcal{L}_\text{{GDR}}({g}_a^*, {\eta}) = \mathbb{E} \bigg[\int_{\mathcal{Y}}\tilde{\xi}^{{\eta}}_a(X, A, y) \underset{Z \sim \varepsilon_z }{\mathbb{E}} \log g_a^*(y, Z \mid V) \diff y \bigg].
    \end{equation}
    
    We verify Neyman-orthogonality by deriving the pathwise derivatives of $\mathcal{L}_\text{\emph{GDR}}$. First, we derive a derivative wrt. $g_a$:
    \begin{align}
        & \mathcal{D}_g\mathcal{L}_\text{{GDR}}(g_a^*, \eta)[g_a - g_a^*]= \frac{\diff}{\diff t} \bigg[ \mathcal{L}_\text{\emph{GDR}}(g_a^* + t ({g}_a - g_a^*), \eta)\bigg]\bigg|_{t=0}.
     \end{align}
    As we demonstrate in the following, it depends on a generative target model.

    \textbf{(a)}~For \textbf{CNFs},  the first-order pathwise derivative is
     \begingroup\makeatletter\def\f@size{8}\check@mathfonts
    \begin{align}
        &  \mathcal{D}_g\mathcal{L}_\text{{GDR}}(g_a^*, \eta)[g_a - g_a^*] =  \frac{\diff}{\diff t} \mathbb{E}\Bigg[ \int_{\mathcal{Y}}\tilde{\xi}^{{\eta}}_a(X, A, y) \log \Big(p_a^*(y \mid V) + t \big({p}_a(y \mid V) - p_a^*(y \mid V) \big)\Big) \diff y \Bigg]\bigg|_{t=0} \\
        & \quad  =  \mathbb{E}\Bigg[ \int_{\mathcal{Y}}\tilde{\xi}^{{\eta}}_a(X, A, y) \, \frac{{p}_a(y \mid V) - p_a^*(y \mid V)}{p_a^*(y \mid V)} \diff y \Bigg].
    \end{align}
    \endgroup
    Then, we take a cross-derivative wrt. the nuisance functions:
    \begingroup\makeatletter\def\f@size{8}\check@mathfonts
    \begin{align}
        & \mathcal{D}_{\pi_a} \mathcal{D}_g\mathcal{L}_\text{\emph{GDR}}(g_a^*, \eta)[g_a - g_a^*, \hat{\pi}_a - \pi_a] \\
        & \quad =  \frac{\diff}{\diff t} \mathbb{E}\Bigg[ \int_{\mathcal{Y}} \bigg(\frac{\mathbbm{1}\{A = a\}}{\pi_a(X) + t(\hat{\pi}_a(X) - \pi_a(X))} \Big(\xi_a(y \mid X) - {\xi}_a(y \mid X) \Big) + {\xi}_a(y \mid X) \bigg) \frac{p_a(y \mid V) - p_a^*(y \mid V)}{p_a^*(y \mid V)} \diff y \Bigg]\bigg|_{t=0} \nonumber\\
        & \quad = - \mathbb{E}\Bigg[ \int_{\mathcal{Y}} \frac{\hat{\pi}_a(X) - \pi_a(X)}{\pi_a(X)} \Big(\xi_a(y \mid X) - {\xi}_a(y \mid X) \bigg) \frac{p_a(y \mid V) - p_a^*(y \mid V)}{p_a^*(y \mid V)} \diff y \Bigg] = 0.
    \end{align}
    \endgroup

    \begingroup\makeatletter\def\f@size{8}\check@mathfonts
    \begin{align}
        & \mathcal{D}_{\xi_a} \mathcal{D}_g\mathcal{L}_\text{\emph{GDR}}(g_a^*, \eta)[g_a - g_a^*, \hat{\xi}_a - \xi_a]=  \frac{\diff}{\diff t} \mathbb{E}\Bigg[ \int_{\mathcal{Y}} \bigg(\frac{\mathbbm{1}\{A = a\}}{\pi_a(X)} \Big(\xi_a(y \mid X) - {\xi}_a(y \mid X) - t \big( \hat{\xi}_a(y \mid X) - {\xi}_a(y \mid X) \big) \Big) \\
        & \quad \quad + {\xi}_a(y \mid X) + t \big( \hat{\xi}_a(y \mid X) - {\xi}_a(y \mid X) \big) \bigg) \frac{p_a(y \mid V) - p_a^*(y \mid V)}{p_a^*(y \mid V)} \diff y\Bigg]\bigg|_{t=0} \nonumber \\
        & \quad = \mathbb{E}\Bigg[ \int_{\mathcal{Y}} \bigg(- \frac{\pi_a(X)}{\pi_a(X)} \big( \hat{\xi}_a(y \mid X) - {\xi}_a(y \mid X) \big) + \big( \hat{\xi}_a(y \mid X) - {\xi}_a(y \mid X) \big) \bigg)  \frac{p_a(y \mid V) - p_a^*(y \mid V)}{p_a^*(y \mid V)} = 0.
    \end{align}
    \endgroup

    \textbf{(b)}~For \textbf{CGANs}, we first note that the optimal discriminator $\hat{d}_a$ has the following form:
    \begin{equation}
        {d}^*_a(y \mid v) = \frac{\tilde{\xi}^{{\eta}}_a(x, a, y)}{\tilde{\xi}^{{\eta}}_a(x, a, y) + {p}^*_a(y \mid v)},
    \end{equation}
    where ${p}^*_a(y \mid v)$ is an implicit conditional density of the conditional generator (see Appendix~\ref{app:background-gm} for details). Therefore, the first-order pathwise derivative has the following form:
     \begingroup\makeatletter\def\f@size{8}\check@mathfonts
    \begin{align}
        &  \mathcal{D}_g\mathcal{L}_\text{{GDR}}(g_a^*, \eta)[g_a - g_a^*] =  \frac{\diff}{\diff t} \mathbb{E} \bigg[\int_{\mathcal{Y}}\tilde{\xi}^{\hat{\eta}}_a(X, A, y)  \log \frac{\tilde{\xi}^{{\eta}}_a(X, A, y)}{\tilde{\xi}^{{\eta}}_a(X, A, y) + {p}_a^*(y \mid V) + t\big({p}_a(y \mid V) - {p}_a^*(y \mid V)\big)} \diff y  \\
        & \quad \quad+ \int_{\mathcal{Y}}\Big({p}_a^*(y \mid V) + t\big( {p}_a(y \mid V) - {p}_a^*(y \mid V)\big)\Big) \, \log \frac{{p}_a^*(y \mid V) + t\big( {p}_a(y \mid V) - {p}_a^*(y \mid V)\big)}{\tilde{\xi}^{{\eta}}_a(X, A, y) + {p}_a^*(y \mid V) + t\big( {p}_a(y \mid V) - {p}_a^*(y \mid V)\big)} \diff y \bigg] \bigg|_{t=0} \nonumber \\
        & \quad  = \mathbb{E} \bigg[-\int_{\mathcal{Y}}\tilde{\xi}^{{\eta}}_a(X, A, y)  \frac{{p}_a(y \mid V) - {p}_a^*(y \mid V)}{\tilde{\xi}^{{\eta}}_a(X, A, y) + {p}_a^*(y \mid V)} \diff y  + \int_{\mathcal{Y}} \Big({p}_a(y \mid V) - {p}_a^*(y \mid V)\Big) \, \log \frac{{p}_a^*(y \mid V)}{\tilde{\xi}^{{\eta}}_a(X, A, y) + {p}_a^*(y \mid V)} \diff y  \\
        & \quad\quad + \int_{\mathcal{Y}} {p}_a^*(y \mid V) \bigg( \frac{{p}_a(y \mid V) - {p}_a^*(y \mid V)}{{p}_a^*(y \mid V)} - \frac{{p}_a(y \mid V) - {p}_a^*(y \mid V)}{\tilde{\xi}^{{\eta}}_a(X, A, y) + {p}_a^*(y \mid V)}\bigg)\diff y \Bigg] \nonumber \\
        & \quad = \mathbb{E} \bigg[\int_{\mathcal{Y}} \Big({p}_a(y \mid V) - {p}_a^*(y \mid V)\Big) \, \log \frac{{p}_a^*(y \mid V)}{\tilde{\xi}^{{\eta}}_a(X, A, y) + {p}_a^*(y \mid V)} \diff y \bigg].
    \end{align}
    \endgroup

    Then, we take a cross-derivative wrt. the nuisance functions:
    \begingroup\makeatletter\def\f@size{8}\check@mathfonts
    \begin{align}
        & \mathcal{D}_{\pi_a} \mathcal{D}_g\mathcal{L}_\text{\emph{GDR}}(g_a^*, \eta)[g_a - g_a^*, \hat{\pi}_a - \pi_a] \\
        & \quad =  - \frac{\diff}{\diff t} \mathbb{E}\Bigg[ \int_{\mathcal{Y}} \log \bigg(\frac{\mathbbm{1}\{A = a\}}{\pi_a(X) + t(\hat{\pi}_a(X) - \pi_a(X))} \Big(\xi_a(y \mid X) - {\xi}_a(y \mid X) \Big) + {\xi}_a(y \mid X) +p_a^*(y \mid V) \bigg) \big({p_a(y \mid V) - p_a^*(y \mid V)}\big) \diff y \Bigg]\bigg|_{t=0} \nonumber\\
        & \quad = - \mathbb{E}\Bigg[ \int_{\mathcal{Y}} \frac{\frac{\mathbbm{1}\{A = a\}(\hat{\pi}_a(X) - \pi_a(X))}{\pi_a(X)^2} \Big(\xi_a(y \mid X) - {\xi}_a(y \mid X) \Big)}{{\frac{\mathbbm{1}\{A = a\}}{\pi_a(X)}}  \Big(\xi_a(y \mid X) - {\xi}_a(y \mid X)\Big) + {\xi}_a(y \mid X) + p_a^*(y \mid V)} \big({p_a(y \mid V) - p_a^*(y \mid V)}\big) \diff y \Bigg] = 0.
    \end{align}
    \endgroup
    \begingroup\makeatletter\def\f@size{8}\check@mathfonts
    \begin{align}
        & \mathcal{D}_{\xi_a} \mathcal{D}_g\mathcal{L}_\text{\emph{GDR}}(g_a^*, \eta)[g_a - g_a^*, \hat{\xi}_a - \xi_a]= \\
        & \quad =  - \frac{\diff}{\diff t} \mathbb{E}\Bigg[ \int_{\mathcal{Y}} \log \bigg(\frac{\mathbbm{1}\{A = a\}}{\pi_a(X)} \Big(\xi_a(y \mid X) - {\xi}_a(y \mid X) - t \big(\hat{\xi}_a(y \mid X) - {\xi}_a(y \mid X)\big) \Big) \nonumber \\
        & \quad \quad + {\xi}_a(y \mid X) + t\big(\hat{\xi}_a(y \mid X) - {\xi}_a(y \mid X)\big) +p_a^*(y \mid V) \bigg) \big({p_a(y \mid V) - p_a^*(y \mid V)}\big) \diff y \Bigg]\bigg|_{t=0} \nonumber \\
        & \quad = \mathbb{E}\Bigg[ \int_{\mathcal{Y}} \frac{- \frac{\pi_a(X)}{\pi_a(X)} \big( \hat{\xi}_a(y \mid X) - {\xi}_a(y \mid X) \big) + \big( \hat{\xi}_a(y \mid X) - {\xi}_a(y \mid X) \big)}{\frac{\mathbbm{1}\{A = a\}}{\pi_a(X)} \Big(\xi_a(y \mid X) - {\xi}_a(y \mid X) \Big) + {\xi}_a(y \mid X)+ p_a^*(y \mid V)}  \big({p_a(y \mid V) - p_a^*(y \mid V)}\big) = 0.
    \end{align}
    \endgroup

    \textbf{(c)}\&\textbf{(d)}~Now, we derive the first-order pathwise derivative for \textbf{CVAEs} and \textbf{CDMs} together, as they are both based on the maximization of the ELBO. In the case of the CVAEs, we need to take a derivative wrt. $p_a$ and $q_a$ simultaneously: 

    \begingroup\makeatletter\def\f@size{8}\check@mathfonts
    \begin{align}
        &  \mathcal{D}_g\mathcal{L}_\text{{GDR}}(g_a^*, \eta)[g_a - g_a^*] \\
        & \quad =  \frac{\diff}{\diff t} \mathbb{E} \bigg[\int_{\mathcal{Y}}\tilde{\xi}^{{\eta}}_a(X, A, y) \int_{\mathcal{Z}}\log \bigg( \frac{p_a^*(y, z \mid V) + t \big({p}_a(y, z \mid V) - p_a^*(y, z \mid V)\big)}{q_a^*(z \mid y, V) + t \big({q}_a(z \mid y, V) - q_a^*(z \mid y, V) \big)}\bigg) \nonumber \\
        & \quad \quad \cdot \Big( q_a^*(z \mid y, V) + t \big({q}_a(z \mid y, V) - q_a^*(z \mid y, V) \big) \Big)  \diff z \diff y \bigg] \bigg|_{t=0} \nonumber \\
        & \quad = \mathbb{E} \bigg[\int_{\mathcal{Y}} \tilde{\xi}^{{\eta}}_a(X, A, y) \int_{\mathcal{Z}} \big( {p}_a(y, z \mid V) - p_a^*(y, z \mid V)\big) \, \frac{q_a^*(z \mid y, V)}{p_a^*(y, z \mid V)} + \big( {q}_a(z \mid y, V) - q_a^*(z \mid y, V) \big) \log \frac{p_a^*(y, z \mid V)}{q_a^*(z \mid y, V)} \diff z \diff y  \bigg].
    \end{align}
    \endgroup

    Then, we take the cross-derivative wrt. the nuisance functions is also zero (analogously to \textbf{(a)~CNFs}, namely with $\int_{\mathcal{Z}} \big( {p}_a(y, z \mid V) - p_a^*(y, z \mid V)\big) \, \frac{q_a^*(z \mid y, V)}{p_a^*(y, z \mid V)} + \big( {q}_a(z \mid y, V) - q_a^*(z \mid y, V) \big) \log \frac{p_a^*(y, z \mid V)}{q_a^*(z \mid y, V)} \diff z$ instead of $\frac{p_a(y \mid V) - p_a^*(y \mid V)}{p_a^*(y \mid V)}$.
    The formulas for the CDMs are analogous, \ie, with $q_a(z \mid y, V) = q_a(z \mid V)$.

\end{proof}

\begin{numtheorem}{2}[Quasi-oracle efficiency and double robustness] \label{theorem:qo-dr-app}
    Let us denote the IPTW weights as $\frac{\mathbbm{1}\{A = a\}} {\pi_a(X)} = w_a(A, X)$. Also, we define a DR pseudo-distribution $\tilde{\xi}^{\hat{\eta}}_a(X, A, Y) = \hat{w}_a(A, X)(\xi_a(Y \mid X) - \hat{\xi}_a(Y \mid X)) + \hat{\xi}_a(Y \mid X) $.
    Assume that for some $\alpha > 0$ the following convexity conditions hold for different target generative models (a)-(d):
    \begingroup\makeatletter\def\f@size{8}\check@mathfonts
    \begin{align}
        &\mathbb{E}\Bigg[\frac{(\hat{p}_a(Y \mid V) - p_a^*(Y \mid V))^2}{\bar{p}_a(Y \mid V)^2} \cdot \frac{\tilde{\xi}^{\hat{\eta}}_a(X, A, Y)}{\xi_a(Y \mid X)}  \Bigg] \ge \alpha \norm{p^*_a - \hat{p}_a}^2_{L_2} \quad \text{ for (a)~CNFs,}  \label{eq:covex-cnfs}\\
        & \mathbb{E}\Bigg[\frac{(\hat{p}_a(Y \mid V) - p_a^*(Y \mid V))^2}{\bar{p}_a(Y \mid V) \big(\bar{p}_a(Y \mid V) + \tilde{\xi}^{\hat{\eta}}_a(X, A, Y)\big) } \cdot \frac{\tilde{\xi}^{\hat{\eta}}_a(X, A, Y)}{\xi_a(Y \mid X)} \Bigg] \ge \alpha \norm{p^*_a - \hat{p}_a}^2_{L_2} \quad \text{ for (b)~CGANs,} \label{eq:covex-cgans} \\
        & \mathbb{E}\Bigg[ \underset{Z \sim \bar{\varepsilon}_z }{\mathbb{E}} \frac{(\hat{p}_a(Y, Z \mid V) - p_a^*(Y, Z \mid V))^2}{\bar{p}_a(Y, Z \mid V)^2 } \cdot \frac{\tilde{\xi}^{\hat{\eta}}_a(X, A, Y) }{\xi_a(Y \mid X)} + \frac{(\hat{q}_a(Z \mid Y, V) - q_a^*(Z \mid Y, V))^2}{\bar{q}_a(Z \mid Y, V)^2} \cdot \frac{\tilde{\xi}^{\hat{\eta}}_a(X, A, Y) }{\xi_a(Y \mid X)}  \Bigg]  \label{eq:covex-cvaes} \\
        & \quad \quad \ge \alpha \big( \norm{p^*_a - \hat{p}_a}^2_{L_2} + \norm{q^*_a - \hat{q}_a}^2_{L_2} \big) \quad \text{ for (c)-(d)~CVAEs/CDMs.} \nonumber
    \end{align}
    \endgroup

    Then, the squared distance between $g_a^*=\argmin_{g_a \in \mathcal{G}}/\argmax_{g_a \in \mathcal{G}} \mathcal{L}_\text{\emph{GDR}}(g_a, \eta) $ and $\hat{g}_a=\argmin_{g_a \in \mathcal{G}}/\argmax_{g_a \in \mathcal{G}} \mathcal{L}_\text{\emph{GDR}}(g_a, \hat{\eta}) $ can be upper-bounded by the following:    
    \begin{align} \label{eq:qo-dr}
        \norm{g_a^* - \hat{g}_a}_{\mathcal{G}}^2 \lesssim \underbrace{{\mathcal{L}}_\text{\emph{GDR}}(\hat{g}_a, \hat{\eta}) - {\mathcal{L}}_\text{\emph{GDR}}(g_a^*, \hat{\eta})}_\text{(I)} + \underbrace{\norm{\xi_a - \hat{\xi}_a}_{L_4}^2 \cdot \norm{\pi_a - \hat{\pi}_a}_{L_4}^2}_\text{(II)},
    \end{align}
    where $\norm{g_a}_{\mathcal{G}} = \sqrt{\mathbb{E}({p_a(Y \mid V)^2)}}$ for (a)\&(b) CNFs and CGANs, $\norm{g_a}_{\mathcal{G}} = \sqrt{\mathbb{E}({\mathbb{E}_{Z \sim \bar{\varepsilon}_z}}(p_a(Y, Z \mid V)^2 + q_a(Z \mid Y, V)^2))}$ where $\bar{\varepsilon}_z = \bar{q}_a(Z \mid Y, V) \in star(\mathcal{G}, \hat{g}_a,g_a^*)$ for (a)\&(b) CVAEs and CDMs, (I) is an optimization error term, and (II)~is a higher-order nuisance error term. This inequality implies that our \GDRlearners are (a)~\textbf{quasi-oracle efficient} and (b)~\textbf{doubly-robust}.
\end{numtheorem}
\begin{proof}
    Given the DR pseudo-distribution, our \GDRlearners risk simplifies to
    \begingroup\makeatletter\def\f@size{9}\check@mathfonts
    \begin{equation}
        \mathcal{L}_\text{{GDR}}(\hat{g}_a, \hat{\eta}) = \mathbb{E} \bigg[\int_{\mathcal{Y}}\tilde{\xi}^{\hat{\eta}}_a(X, A, y) \underset{Z \sim \varepsilon_z }{\mathbb{E}} \log g_a(y, Z \mid V) \diff y \bigg].
    \end{equation}
    \endgroup
    
    To prove the quasi-oracle efficiency, we apply a functional Taylor expansion to the following target risk:
    \begingroup\makeatletter\def\f@size{9}\check@mathfonts
    \begin{align} \label{eq:dist-taylor}
        &\mathcal{L}_\text{{GDR}}(\hat{g}_a, \hat{\eta}) = \mathcal{L}_\text{{GDR}}({g}_a^*, \hat{\eta}) + \mathcal{D}_g\mathcal{L}_\text{{GDR}}({g}_a^*, \hat{\eta})[\hat{g}_a - g_a^*] + \frac{1}{2} \mathcal{D}_g^2\mathcal{L}_\text{{GDR}}(\bar{g}_a, \eta)[\hat{g}_a - g_a^*, \hat{g}_a - g_a^*],
    \end{align}
    \endgroup
    where $\bar{g}_a \in star(\mathcal{G}, \hat{g}_a, g_a^*)$. Hence, we need to derive the second-order pathwise derivative of $\mathcal{L}_\text{{GDR}}$ wrt. $g_a$. As we demonstrate in the following, it depends on a generative target model.

    \textbf{(a)}~For \textbf{CNFs},  the second-order pathwise derivative is
    \begingroup\makeatletter\def\f@size{9}\check@mathfonts
    \begin{align}
        & \mathcal{D}_g^2\mathcal{L}_\text{{GDR}}(\bar{g}_a, \hat{\eta})[\hat{g}_a - g_a^*, \hat{g}_a - g_a^*]
        = \mathbb{E}\Bigg[\frac{(\hat{p}_a(Y \mid V) - p_a^*(Y \mid V))^2}{\xi_a(Y \mid X)} \cdot \frac{\tilde{\xi}^{\hat{\eta}}_a(X, A, Y)}{\bar{p}_a(Y \mid V)^2}  \Bigg].
    \end{align}
    \endgroup

    \textbf{(b)}~For \textbf{CGANs}, we first note that the optimal discriminator $\hat{d}_a$ has the following form:
    \begingroup\makeatletter\def\f@size{9}\check@mathfonts
    \begin{equation}
        \hat{d}_a(y \mid v) = \frac{\tilde{\xi}^{\hat{\eta}}_a(x, a, y)}{\tilde{\xi}^{\hat{\eta}}_a(x, a, y) + \hat{p}_a(y \mid v)},
    \end{equation}
    \endgroup
    where $\hat{p}_a(y \mid v)$ is an implicit conditional density of the conditional generator (see Appendix~\ref{app:background-gm} for details). Then, the \GDRlearners risk for CGANs is equivalent to
    \begingroup\makeatletter\def\f@size{8}\check@mathfonts
    \begin{align}
        \mathcal{L}_\text{{GDR}}(\hat{g}_a, \hat{\eta}) = \mathbb{E} \bigg[\int_{\mathcal{Y}}\tilde{\xi}^{\hat{\eta}}_a(X, A, y)  \log \frac{\tilde{\xi}^{\hat{\eta}}_a(X, A, y)}{\tilde{\xi}^{\hat{\eta}}_a(X, A, y) + \hat{p}_a(y \mid V)} \diff y  + \int_{\mathcal{Y}}\hat{p}_a(y \mid V) \log \frac{\hat{p}_a(y \mid V)}{\tilde{\xi}^{\hat{\eta}}_a(X, A, y) + \hat{p}_a(y \mid V)} \diff y \bigg].
    \end{align}
    \endgroup
    Finally, we can derive the second-order pathwise derivative:
    \begingroup\makeatletter\def\f@size{8}\check@mathfonts
    \begin{align}
        & \mathcal{D}_g^2\mathcal{L}_\text{{GDR}}(\bar{g}_a, \hat{\eta})[\hat{g}_a - g_a^*, \hat{g}_a - g_a^*]
        = \mathbb{E}\Bigg[\frac{(\hat{p}_a(Y \mid V) - p_a^*(Y \mid V))^2}{\xi_a(Y \mid X)} \cdot \bigg(\frac{\tilde{\xi}^{\hat{\eta}}_a(X, A, Y)}{\bar{p}_a(Y \mid V) \big(\bar{p}_a(Y \mid V) + \tilde{\xi}^{\hat{\eta}}_a(X, A, Y)\big) } \bigg) \Bigg].
    \end{align}
    \endgroup

    \textbf{(c)}\&\textbf{(d)}~Now, we derive the second-order pathwise derivative for \textbf{CVAEs} and \textbf{CDMs} together, as they are both based on the maximization of the ELBO. In the case of the CVAEs, we need to take a derivative wrt. $p_a$ and $q_a$ simultaneously: 
    \begingroup\makeatletter\def\f@size{8}\check@mathfonts
    \begin{align}
        & \mathcal{D}_g^2\mathcal{L}_\text{{GDR}}(\bar{g}_a, \hat{\eta})[\hat{g}_a - g_a^*, \hat{g}_a - g_a^*] \\
        & \quad = \mathbb{E}\Bigg[ \underset{Z \sim \bar{\varepsilon}_z }{\mathbb{E}} \frac{(\hat{p}_a(Y, Z \mid V) - p_a^*(Y, Z \mid V))^2}{\xi_a(Y \mid X)} \cdot \frac{\tilde{\xi}^{\hat{\eta}}_a(X, A, Y) }{\bar{p}_a(Y, Z \mid V)^2 } + \frac{(\hat{q}_a(Z \mid Y, V) - q_a^*(Z \mid Y, V))^2}{\xi_a(Y \mid X)} \cdot \frac{\tilde{\xi}^{\hat{\eta}}_a(X, A, Y) }{\bar{q}_a(Z \mid Y, V)^2 }  \Bigg], \nonumber
    \end{align}
    \endgroup
    where $\bar{\varepsilon}_z = \bar{q}_a(Z \mid Y, V)$. The formula for the CDMs is analogous, but with $q_a(Z \mid Y, V) = q_a(Z \mid V)$.

    Now, under our convexity conditions, the following holds for all the models:
    \begingroup\makeatletter\def\f@size{9}\check@mathfonts
    \begin{equation}
        \mathcal{D}_g^2\mathcal{L}_\text{{GDR}}(\bar{g}_a, \hat{\eta})[\hat{g}_a - g_a^*, \hat{g}_a - g_a^*] \ge \alpha \norm{g^*_a - \hat{g}_a}^2_\mathcal{G},
    \end{equation}
    \endgroup
    where $\alpha > 0$. 

    Therefore, from Eq.~\eqref{eq:dist-taylor} we get the following:
    \begingroup\makeatletter\def\f@size{9}\check@mathfonts
    \begin{align} \label{eq:qo-eff-middle}
        & \frac{\alpha}{2} \norm{g^*_a - \hat{g}_a}^2_\mathcal{G} \le \mathcal{L}_\text{\emph{GDR}}(\hat{g}_a, \hat{\eta}) - \mathcal{L}_\text{{GDR}}({g}_a^*, \hat{\eta}) - \mathcal{D}_g\mathcal{L}_\text{{GDR}}({g}_a^*, \hat{\eta})[\hat{g}_a - g_a^*] \\
        &\quad  =  R_g - \mathcal{D}_g\mathcal{L}_\text{{GDR}}({g}_a^*, \hat{\eta})[\hat{g}_a - g_a^*].
    \end{align}
    \endgroup

    \textbf{(a)}~For \textbf{CNFs}, the last term, $- \mathcal{D}_g\mathcal{L}_\text{{GDR}}({g}_a^*, \eta)[\hat{g}_a - g_a^*]$ can be expanded further as follows:
    \begingroup\makeatletter\def\f@size{8}\check@mathfonts
    \begin{align}
        & - \mathcal{D}_g\mathcal{L}_\text{{GDR}}({g}_a^*, \eta)[\hat{g}_a - g_a^*] = \mathbb{E} \bigg[\int_{\mathcal{Y}}\tilde{\xi}^{\hat{\eta}}_a(X, A, y)  \frac{p_a^*(y \mid V) - \hat{p}_a(y \mid V)}{p_a^*(y \mid V)} \diff y \bigg] \\
        & \quad = \underbrace{\mathbb{E} \bigg[\int_{\mathcal{Y}} (\tilde{\xi}^{\hat{\eta}}_a(X, A, y) -\xi_a(y \mid X) ) \frac{p_a^*(y \mid V) - \hat{p}_a(y \mid V)}{p_a^*(y \mid V)} \diff y \bigg]}_{R_2(\eta, \hat{\eta})} + \underbrace{\mathbb{E} \bigg[\int_{\mathcal{Y}} \xi_a(y \mid X) \frac{p_a^*(y \mid V) - \hat{p}_a(y \mid V)}{p_a^*(y \mid V)} \diff y \bigg]}_{(*)}.
    \end{align}
    \endgroup
    
    \textbf{(b)}~For \textbf{CGANs} the last term, $- \mathcal{D}_g\mathcal{L}_\text{{GDR}}({g}_a^*, \eta)[\hat{g}_a - g_a^*]$ is:
    \begingroup\makeatletter\def\f@size{8}\check@mathfonts
    \begin{align}
        & - \mathcal{D}_g\mathcal{L}_\text{{GDR}}({g}_a^*, \eta)[\hat{g}_a - g_a^*] \\
        & \quad = \underbrace{\mathbb{E} \bigg[\int_{\mathcal{Y}} \bigg( \log \frac{{p}_a^*(y \mid V)}{\tilde{\xi}^{\hat{\eta}}_a(X, A, y) + {p}_a^*(y \mid V)} - \log \frac{{p}_a^*(y \mid V)}{\tilde{\xi}^{{\eta}}_a(X, A, y) + {p}_a^*(y \mid V)} \bigg) \big({p_a^*(y \mid V) - \hat{p}_a(y \mid V)}\big) \diff y \bigg]}_{R_2(\eta, \hat{\eta})} \nonumber \\
        & \quad \quad + \underbrace{\mathbb{E} \bigg[\int_{\mathcal{Y}} \bigg(\log \frac{{p}_a^*(y \mid V)}{\tilde{\xi}^{{\eta}}_a(X, A, y) + {p}_a^*(y \mid V)} \bigg) \big({p_a^*(y \mid V) - \hat{p}_a(y \mid V)}\big) \diff y \bigg]}_{(*)}. \nonumber
    \end{align}
    \endgroup

    \textbf{(c)\&(d)}~For \textbf{CVAEs} and \textbf{CDMs}, analogously to \textbf{(a)~CNFs}, the last term, $- \mathcal{D}_g\mathcal{L}_\text{{GDR}}({g}_a^*, \eta)[\hat{g}_a - g_a^*]$ is:
    \begingroup\makeatletter\def\f@size{8}\check@mathfonts
    \begin{align}
        & - \mathcal{D}_g\mathcal{L}_\text{{GDR}}({g}_a^*, \eta)[\hat{g}_a - g_a^*] \\
        & \quad = \underbrace{\mathbb{E} \bigg[\int_{\mathcal{Y}} (\tilde{\xi}^{\hat{\eta}}_a(X, A, y) -\xi_a(y \mid X) ) \int_{\mathcal{Z}} \big( \hat{p}_a(y, z \mid V) - p_a^*(y, z \mid V)\big) \, \frac{q_a^*(z \mid y, V)}{p_a^*(y, z \mid V)} + \big( \hat{q}_a(z \mid y, V) - q_a^*(z \mid y, V) \big) \log \frac{p_a^*(y, z \mid V)}{q_a^*(z \mid y, V)} \diff z \diff y }_{R_2(\eta, \hat{\eta})}   \nonumber \\
        & \quad \quad + \underbrace{\mathbb{E} \bigg[\int_{\mathcal{Y}} \xi_a(y \mid X) \int_{\mathcal{Z}} \big( \hat{p}_a(y, z \mid V) - p_a^*(y, z \mid V)\big) \, \frac{q_a^*(z \mid y, V)}{p_a^*(y, z \mid V)} + \big( \hat{q}_a(z \mid y, V) - q_a^*(z \mid y, V) \big) \log \frac{p_a^*(y, z \mid V)}{q_a^*(z \mid y, V)} \diff z \diff y \bigg]}_{(*)}. \nonumber
    \end{align}
    \endgroup

    Here, in all cases \textbf{(a)-(d)}, the first term is called a second-order remainder $R_2(\eta, \hat{\eta})$ and the second term $(*)$ equals to $-\mathcal{D}_g\mathcal{L}_\text{{GDR}}({g}_a^*, \eta)[\hat{g}_a - g_a^*]$ (it always stays negative due to the virtue of minimization). 

    \textbf{(a)}~For \textbf{CNFs}, the second-order remainder $R_2(\eta, \hat{\eta})$ is then:
    \begingroup\makeatletter\def\f@size{8}\check@mathfonts
    \begin{align}
        & R_2(\eta, \hat{\eta}) = \mathbb{E} \bigg[ \int_{\mathcal{Y}} \Big(\hat{w}_a(A, X)(\xi_a(y \mid X) - \hat{\xi}_a(y \mid X)) + \hat{\xi}_a(y \mid X) - \xi_a(y \mid X) \Big) \frac{p_a^*(y \mid V) - \hat{p}_a(y \mid V)}{p_a^*(y \mid V)} \diff y \bigg] \\
        & \quad = \mathbb{E} \bigg[ \int_{\mathcal{Y}} \Big((\hat{w}_a(A, X) - 1)(\xi_a(y \mid X) - \hat{\xi}_a(y \mid X))\Big) \frac{p_a^*(y \mid V) - \hat{p}_a(y \mid V)}{p_a^*(y \mid V)} \diff y \bigg].
    \end{align}
    \endgroup
    Hence, by denoting $\mathbb{E}(\int_{\mathcal{Y}}\frac{1}{h(y, X)} \diff y) = C_{h}$ for any function $h(y, x)$ and by applying the Cauchy-Schwarz inequality, we obtain the following:
    \begingroup\makeatletter\def\f@size{9}\check@mathfonts
    \begin{align} \label{eq:r2-cnfs}
        \abs{R_2(\eta, \hat{\eta})} \le \frac{C_{p_a^*} C_{\xi_a}^2}{\varepsilon} \norm{\pi_a - \hat{\pi}_a}_{L_4} \norm{\xi_a - \hat{\xi}_a}_{L_4} \norm{p_a^* - \hat{p}_a}_{\mathcal{G}},
    \end{align}
    \endgroup
    where $\varepsilon > 0$ is a margin of the strong overlap assumption.

    \textbf{(b)}~For \textbf{CGANs}, the second-order remainder $R_2(\eta, \hat{\eta})$ is:
    \begingroup\makeatletter\def\f@size{8}\check@mathfonts
    \begin{align}
        & R_2(\eta, \hat{\eta}) = - \mathbb{E} \bigg[\int_{\mathcal{Y}} \bigg( \log \big({\tilde{\xi}^{\hat{\eta}}_a(X, A, y) + {p}_a^*(y \mid V)}\big) - \log \big({{\xi}_a(y \mid X) + {p}_a^*(y \mid V)}\big) \bigg) \big({p_a^*(y \mid V) - \hat{p}_a(y \mid V)}\big) \diff y \bigg] \\
        & \quad \stackrel{\star}{=} \mathbb{E} \bigg[ \int_{\mathcal{Y}} \frac{\big(\tilde{\xi}^{\hat{\eta}}_a(X, A, y) - {\xi}_a(y \mid X)  \big)}{\tilde{\xi}^{\bar{\eta}}_a(X, A, y)} \big({p_a^*(y \mid V) - \hat{p}_a(y \mid V)}\big) \diff y \bigg]\\
        & \quad = \mathbb{E} \bigg[ \int_{\mathcal{Y}} \frac{(\hat{w}_a(A, X) - 1)(\xi_a(y \mid X) - \hat{\xi}_a(y \mid X))}{\tilde{\xi}^{\bar{\eta}}_a(X, A, y)} \big({p_a^*(y \mid V) - \hat{p}_a(y \mid V)}\big) \diff y \bigg], 
    \end{align}
    \endgroup
    where $\star$ holds by a mean value theorem for some $\bar{\eta} \in star(\mathcal{H}, \hat{\eta}, \eta)$. This can be upper-bounded as:
    \begingroup\makeatletter\def\f@size{9}\check@mathfonts
    \begin{align} \label{eq:r2-cgans}
        \abs{R_2(\eta, \hat{\eta})} \le \frac{C_{\xi_a}^2 C_{\tilde{\xi}^{\bar{\eta}}_a}}{\varepsilon} \norm{\pi_a - \hat{\pi}_a}_{L_4} \norm{\xi_a - \hat{\xi}_a}_{L_4} \norm{p_a^* - \hat{p}_a}_{\mathcal{G}},
    \end{align}
    \endgroup
    where $\varepsilon > 0$ is a margin of the strong overlap assumption.

    \textbf{(c)\&(d)}~Analogously to \textbf{CNFs}, for \textbf{CVAEs} and \textbf{CGANs}, the second-order remainder $R_2(\eta, \hat{\eta})$ is:
    \begingroup\makeatletter\def\f@size{8}\check@mathfonts
    \begin{align}
        & R_2(\eta, \hat{\eta}) = \mathbb{E} \bigg[ \int_{\mathcal{Y}} \Big((\hat{w}_a(A, X) - 1)(\xi_a(y \mid X) - \hat{\xi}_a(y \mid X))\Big) \\
        & \quad \quad \cdot \int_{\mathcal{Z}} \big( \hat{p}_a(y, z \mid V) - p_a^*(y, z \mid V)\big) \, \frac{q_a^*(z \mid y, V)}{p_a^*(y, z \mid V)} + \big( \hat{q}_a(z \mid y, V) - q_a^*(z \mid y, V) \big) \log \frac{p_a^*(y, z \mid V)}{q_a^*(z \mid y, V)} \diff z\diff y \bigg]. \nonumber
    \end{align}
    \endgroup
    Let us now denote $\mathbb{E}(\int_{\mathcal{Y} \times \mathcal{Z}}\frac{1}{h(y, z, X)} \diff z \diff y)$ as $C_{h}$ for any function $h(y, x)$. Then, $\abs{R_2(\eta, \hat{\eta})}$ can upper-bounded by 
    \begingroup\makeatletter\def\f@size{9}\check@mathfonts
    \begin{align} \label{eq:r2-cvaes}
        \abs{R_2(\eta, \hat{\eta})} \le \frac{C_{p_a^*} C_{\xi_a}^2}{\varepsilon} \norm{\pi_a - \hat{\pi}_a}_{L_4} \norm{\xi_a - \hat{\xi}_a}_{L_4} \big(C_{\frac{\bar{q}_a}{q_a^*}}\norm{p_a^* - \hat{p}_a}_{\mathcal{G}} + C_{\bar{q}_a \log \frac{p_a^*}{q_a^*} } \norm{q_a^* - \hat{q}_a}_{\mathcal{G}}\big),
    \end{align}
    \endgroup
    where $\varepsilon > 0$ is a margin of the strong overlap assumption, and $\norm{g_a}_{\mathcal{G}} = \sqrt{\mathbb{E}({\mathbb{E}_{Z \sim \bar{\varepsilon}_z}}(g_a(Y, Z \mid V)^2))}$.

    Therefore, Eq.~\eqref{eq:qo-eff-middle} can be upper-bounded for all models \textbf{(a)-(d)}:
    \begingroup\makeatletter\def\f@size{9}\check@mathfonts
    \begin{align}
        & \frac{\alpha}{2} \norm{g^*_a - \hat{g}_a}^2_\mathcal{G} \le R_g  -\mathcal{D}_g\mathcal{L}_\text{{GDR}}({g}_a^*, \eta)[\hat{g}_a - g_a^*]  +  C_{\star} \norm{\pi_a - \hat{\pi}_a}_{L_4} \norm{\xi_a - \hat{\xi}_a}_{L_4} \norm{g_a^* - \hat{g}_a}_{\mathcal{G}},
    \end{align}
    \endgroup
    where $C_\star$ is a model dependent constant, see Eq.~\eqref{eq:r2-cnfs}, \eqref{eq:r2-cgans}, and \eqref{eq:r2-cvaes}.
    
    Finally, by using an AM-GM inequality and noting that the term $(*) = -\mathcal{D}_g\mathcal{L}_\text{{GDR}}({g}_a^*, \eta)[\hat{g}_a - g_a^*]$ always stays negative (similarly to \citep{morzywolek2023general}), we recover the final inequality in Eq.~\eqref{eq:qo-dr}.

\end{proof}

{\begin{rem}[Mildness of the convexity assumption] \label{rem:mildness}
    The convexity assumptions in Eq.~\eqref{eq:covex-cnfs}, \eqref{eq:covex-cgans}, and \eqref{eq:covex-cgans} are relatively mild. Specifically, first, the densities $\bar{p}_a$, $\bar{q}_a$, and $\xi_a$ have all to be finite (so that their inverse is always greater than or equal to some $\delta > 0$). Then, we need to ensure that:
    \begin{align}
        \frac{\tilde{\xi}^{\hat{\eta}}_a(X, A, Y)}{\xi_a(Y \mid X)} = \frac{(\hat{w}_a(A, X) - 1)(\xi_a(Y \mid X) - \hat{\xi}_a(Y \mid X))}{\xi_a(Y \mid X)} + 1 \ge \delta > 0.
    \end{align}
    The latter holds asymptotically, assuming that (i)~both $\eta$ and $\hat{\eta}$ are H\"older smooth and (ii)~$\norm{\xi_a - \hat{\xi}_a}_{L_4}^2 \cdot \norm{\pi_a - \hat{\pi}_a}_{L_4}^2 \to 0$. Here, (i)~is a regular assumption for the estimability of $\xi$ and $\pi$, and (ii)~is required for quasi-oracle efficiency to hold. 
\end{rem}}

\begin{rem}[IPTW-learner] \label{rem:iptw-vs-dr}
    When $V = X$ and the target model class $\mathcal{G}$ is set to be the same as the model class for $\hat{\xi}_a \in \mathit{\Xi}$, our \GDRlearners simplify to the IPTW-learner. This can be seen from the definition of the \emph{\GDRlearners} risk (Eq.~\eqref{eq:gdr-learner}):
    \begingroup\makeatletter\def\f@size{9}\check@mathfonts
    \begin{align} \label{eq:iptw-dr}
        {\mathcal{L}}_\text{GDR}(g_a = \hat{\xi}_a, \hat{\eta} & = \hat{\pi}_a) = \mathbb{E}\Bigg[\frac{\mathbbm{1}\{A = a\}}{\hat{\pi}_a(X)} \, \underset{Z \sim \varepsilon_z }{\mathbb{E}} \log \hat{\xi}_a(Y, Z \mid X) \,  \\
        & \quad + \, \left(1 - \frac{\mathbbm{1}\{A = a\}}{\hat{\pi}_a(X)} \right)\, \underbrace{\int_\mathcal{Y} \Big[\underset{Z \sim \varepsilon_z }{\mathbb{E}} \log \hat{\xi}_a(y, Z \mid X) \Big] \, \hat{\xi}_a(y | X) \diff{y}}_{(*)} \Bigg]. \nonumber
    \end{align}
    \endgroup
    Here, the term $(*)$ contains the gradient blocking of $\hat{\xi}_a(y \mid x)$ (otherwise, the risk stops being Neyman-orthogonal). Also, because of that, the optimization of the term $(*)$ does not change the $\hat{\xi}_a$, as the optimum of $(*)$ is achieved when $\hat{\xi}_a(y \mid x)$ matches with $\hat{\xi}_a(y, z \mid x)$ (see the interpretation of the target risks as the minimization of the distributional distances in Appendix~\ref{app:background-gm}).

    Therefore, the optimization of the risk in Eq.~\eqref{eq:iptw-dr} is equivalent to the following:
    \begingroup\makeatletter\def\f@size{9}\check@mathfonts
    \begin{align} 
        {\mathcal{L}}_\text{IPTW}(g_a = \hat{\xi}_a, \hat{\eta} = \hat{\pi}_a) = \mathbb{E}\bigg[\frac{\mathbbm{1}\{A = a\}}{\hat{\pi}_a(X)} \, \underset{Z \sim \varepsilon_z }{\mathbb{E}} \log \hat{\xi}_a(Y, Z \mid X) \bigg],
    \end{align}
    \endgroup
    which recovers the original IPTW-learner with $\mathcal{G} = \mathit{\Xi}$.
    
    Furthermore, if the ground-truth \CDPOs $\xi_a$ belong to $\mathit{\Xi}$, this IPTW-learner becomes Neyman-orthogonal and quasi-oracle efficient. To see that, one can easily verify that \mbox{$\mathcal{D}_g\mathcal{L}_\text{{IPTW}}({\xi}_a, \hat{\eta})[\hat{\xi}_a - \xi_a] =0$}.
    
\end{rem}

\newpage
\section{Implementation details and hyperparameters} \label{app:implementation}
\textbf{Architecture.} The architecture of neural instantiations for our \GDRlearners is given in Fig.~\ref{fig:gdr-learners}. Depending on the structure of the potential outcome ((i) tabular or (ii) image), we implemented conditioning for both the nuisance and target conditional generative models differently. Specifically, (i)~for all but the colored MNIST experiments, we used hypernetworks \citep{ha2017hypernetworks} with two fully-connected sub-networks FC$_1$ and FC$_2$. Both sub-networks had $L$ hidden layers with $ d_h$ hidden units and ELU activation function. The second subnetwork FC$_2$ outputs the parameters of a generative model $\theta(X, A)$ (nuisance generative model) or $\theta(V, a)$ (target generative model). {On the other hand, (ii)~for the colored MNIST experiments, we used the first fully-connected sub-network FC$_1$ from (i) together with feature-wise linear modulations \citep{perez2018film}.} We then implement the generative models as follows:
\begin{itemize}
    \item (a)~CNFs. (i)~For all but the colored MNIST experiments, we used neural spline flows \citep{durkan2019neural} with additional auto-regressive transformation \citep{huang2018neural} when $d_y > 1$. {On the other hand, (ii)~for the colored MNIST experiments, we employed conditional masked autoregressive flows (MAFs) \citep{germain2015made,papamakarios2017masked}.} As a base distribution, we use a normal distribution $Z \sim N(0; I_{d_y})$. We further tuned the main flexibility hyperparameter: (i)~the number of knots $n_{\text{knots}}$ for the neural spline flows, and (ii)~number of autoregressive layers $n_{\text{l}}$ for the MAFs, respectively. Additionally, to regularize the log-likelihood, we add noise regularization \citep{rothfuss2019noise}  to the outcomes $\xi_y \sim N(0; \sigma^2_y)$.
    \item (b)~CGANs. (i)~For all but the colored MNIST experiments, the generators and discriminators of our CGANs were both fully-connected networks with one hidden layer with $d_{\text{g/d}}$ hidden units and an ELU activation. {(ii)~For the colored MNIST experiments, we used two (de)convolutional layers with $d_{\text{g/d}}$ output channels preceded or followed by a fully-connected layer (for generators and discriminators, respectively), all with the ELU activations.} For the generator, we sampled $Z \sim N(0; I_{d_z})$, where $d_z=d_y$ in (i), and $d_z$ is tunable in (ii).
    \item (c)~CVAEs. Similarly to CGANs, the encoder and decoder of the CVAEs were the networks with either (i)~fully-connected or (ii)~convolutional layers. For our CVAEs, we set a prior as $Z \sim N(0; I_{d_z})$.
    \item (d)~CDMs. In our experiments, we closely followed an implementation of DiffPO \citep{ma2024diffpo}. Thus, the diffusion process uses conditional normal distributions and $Z_T \sim N(0; I_{d_y})$ (here $T$ is the number of diffusion steps). As the $\varepsilon$-net of the CDMs ($\varepsilon$-net aims to reconstruct the noise added during the diffusion process), we used (i)~a fully-connected network with one hidden layer $d_\varepsilon$ hidden units and the ELU activations in all but the colored MNIST experiments. {At the same time, (ii)~for the colored MNIST experiments, we used a U-Net \citep{ronneberger2015u} as the $\varepsilon$-net with four convolutional layers (each with $d_\varepsilon$ output channels) and the ELU activations. The middle layer of the U-Net had a tunable number of units $n_\varepsilon$.} The dimensionality of time embedding was set to $d_t=20$. 
\end{itemize}

\textbf{Implementation.} We implemented all the baselines with PyTorch and Pyro. To regularize the conditional generative models, we employed noise regularization \citep{rothfuss2019noise} after the first fully-connected sub-network FC$_1$ ($\xi_x \sim N(0; \sigma^2_x)$). Also, for IPTW- and \GDRlearners, we clipped too low propensity scores (lower than (i)~0.1 or (ii)~0.05), which helped to stabilize the training. For target conditional generative models, we additionally use EMA of the model weights \citep{polyak1992acceleration} with a hyperparameter $\lambda = 0.995$. We use the same training data $\mathcal{D}$ for two stages of learning: This does not harm the theoretical properties of our \GDRlearners as the (regularized) NNs belong to the Donsker class of estimators \citep{van2000asymptotic,kennedy2024semiparametric}.

\textbf{Hyperparameters.} For all the baselines, we performed an extensive hyperparameter tuning. Therein, we tuned the flexibility and regularization hyperparameters of different conditional generative models. Further details on the hyperparameter tuning are in Tables~\ref{tab:hyperparams-all} and \ref{tab:hyperparams-colored-mnist}. Also, to ensure reproducibility, we report the tuned hyperparameters in our GitHub\footnote{Code is available at \url{https://github.com/Valentyn1997/gdr-learners}.} as YAML files.

\newpage
\begin{table*}[h]
    \caption{Hyperparameter tuning for different meta-learners and generative models for (i)~synthetic, IHDP, ACIC 2016, and HC-MNIST experiments.}
    \label{tab:hyperparams-all}
    \vspace{-0.4cm}
    \begin{center}
    \scalebox{0.7}{
        \begin{tabu}{l|l|l|r}
            \toprule
            Model & Sub-model & Hyperparameter & Range / Value \\
            \midrule
            \multirow{19}{*}{\begin{tabular}{l} RA-CNFs, \\ GDR-CNFs \end{tabular} } & \multirow{9}{*}{\begin{tabular}{l} nuisance CNFs \\ ($\equalhat$ Plug-in CNF) \\ ($\equalhat$ IPTW-CNFs) \end{tabular}} &  Number of knots ($n_{\text{knots,N}}$) & 5, 10, 20\\
                                  && Intensity of noise regularization ($\sigma^2_{x, \text{N}}$) & 0.0, $0.01^2$, $0.05^2$, $0.1^2$ \\
                                  && Intensity of noise regularization ($\sigma^2_{y, \text{N}}$) & 0.0, $0.01^2$, $0.05^2$, $0.1^2$ \\
                                  && Learning rate ($\eta_\text{N}$) & 0.001, 0.005 \\
                                  && Minibatch size ($b_\text{N}$) & 32, 64 \\
                                  && Tuning strategy & random grid search with 50 runs \\
                                  && Tuning criterion & $\mathcal{L}_\text{NLL}$ \\
                                  && Number of epochs ($n_{e, \text{N}}$) & $E$ \\
                                  && Optimizer & SGD (momentum = 0.9) \\
                                  \cmidrule{2-4}
                                  & \multirow{8}{*}{target CNFs} &  Number of knots ($n_{\text{knots,T}}$) & $ n_{\text{knots,N}}$ \\
                                  && Intensity of noise regularization ($\sigma^2_{x, \text{T}}$) & $\sigma^2_{x, \text{N}}$ \\
                                  && Intensity of noise regularization ($\sigma^2_{y, \text{T}}$) & 0.01 \\
                                  && Learning rate ($\eta_\text{T}$) & 0.001 \\
                                  && Minibatch size ($b_\text{T}$) & 64 \\
                                  && Tuning strategy & w/o tuning \\
                                  && Number of epochs ($n_{e, \text{T}}$) & $E$ \\
                                  && Optimizer & AdamW \\
            \midrule
            \multirow{17}{*}{\begin{tabular}{l} RA-CGANs, \\ GDR-CGANs \end{tabular} } & \multirow{8}{*}{\begin{tabular}{l} nuisance CGANs \\ ($\equalhat$ Plug-in CGANs) \\ ($\equalhat$ IPTW-CGANs) \end{tabular}} &  Generator/discriminators hidden dimension ($d_{\text{g/d,N}}$) & 5, 10, 15, 20, 25, 30, 50\\
                                  && Intensity of noise regularization ($\sigma^2_{x, \text{N}}$) & 0.0, $0.01^2$, $0.05^2$, $0.1^2$ \\
                                  && Learning rate ($\eta_\text{N}$) & 0.005, 0.001, 0.0001, 0.0005 \\
                                  && Minibatch size ($b_\text{N}$) & 32, 64 \\
                                  && Tuning strategy & random grid search with 50 runs \\
                                  && Tuning criterion & $\mathcal{L}_\text{MSE}$ of generated sample\\
                                  && Number of epochs ($n_{e, \text{N}}$) & $E$ \\
                                  && Optimizer & AdamW \\
                                  \cmidrule{2-4}
                                  & \multirow{7}{*}{target CGANs} &  Generator/discriminators hidden dimension ($d_{\text{g/d,T}}$) & $d_{\text{g/d,N}}$ \\
                                  && Intensity of noise regularization ($\sigma^2_{x, \text{T}}$) & $\sigma^2_{x, \text{N}}$ \\
                                  && Learning rate ($\eta_\text{T}$) & 0.00005 \\
                                  && Minibatch size ($b_\text{T}$) & 64 \\
                                  && Tuning strategy & w/o tuning \\
                                  && Number of epochs ($n_{e, \text{T}}$) & $E$ \\
                                  && Optimizer & AdamW \\
            \midrule
            \multirow{19}{*}{\begin{tabular}{l} RA-CVAEs, \\ GDR-CVAEs \end{tabular} } & \multirow{9}{*}{\begin{tabular}{l} nuisance CVAEs \\ ($\equalhat$ Plug-in CVAEs) \\ ($\equalhat$ IPTW-CVAEs) \end{tabular}} &  Latent variable dimension  ($d_{z, \text{N}}$) & 3, 5, 7\\
                            && Encoder/decoder hidden dimension ($d_{\text{e/d,N}}$) & 3, 5, 10\\
                                  && Intensity of noise regularization ($\sigma^2_{x, \text{N}}$) & 0.0, $0.01^2$, $0.05^2$, $0.1^2$ \\
                                  && Learning rate ($\eta_\text{N}$) & 0.01, 0.001, 0.005, 0.0001, 0.0005 \\
                                  && Minibatch size ($b_\text{N}$) & 32, 64 \\
                                  && Tuning strategy & random grid search with 50 runs \\
                                  && Tuning criterion & $\mathcal{L}_\text{ELBO}$ \\
                                  && Number of epochs ($n_{e, \text{N}}$) & $E$ \\
                                  && Optimizer & AdamW \\
                                  \cmidrule{2-4}
                                  & \multirow{8}{*}{target CVAEs} &  Latent variable dimension  ($d_{z, \text{T}}$) & 3 \\
                                  && Encoder/decoder hidden dimension ($d_{\text{e/d,T}}$) & 10\\
                                  && Intensity of noise regularization ($\sigma^2_{x, \text{T}}$) & $\sigma^2_{x, \text{N}}$ \\
                                  && Learning rate ($\eta_\text{T}$) & 0.001 \\
                                  && Minibatch size ($b_\text{T}$) & 64 \\
                                  && Tuning strategy & w/o tuning \\
                                  && Number of epochs ($n_{e, \text{T}}$) & $E$ \\
                                  && Optimizer & AdamW \\
            \midrule
             \multirow{19}{*}{\begin{tabular}{l} RA-CDMs, \\ GDR-CDMs \end{tabular} } & \multirow{9}{*}{\begin{tabular}{l} nuisance CDMs \\ ($\equalhat$ Plug-in CDMs) \\ ($\equalhat$ IPTW-CDMs) \end{tabular}} &  Number of diffusion steps  ($T_{\text{N}}$) & 50, 100 \\
                            && $\varepsilon$-net hidden dimension ($d_{\varepsilon,\text{N}}$) & 10, 15, 20 \\
                                  && Intensity of noise regularization ($\sigma^2_{x, \text{N}}$) & 0.0, $0.01^2$, $0.05^2$, $0.1^2$ \\
                                  && Learning rate ($\eta_\text{N}$) & 0.01, 0.001, 0.005, 0.0001, 0.0005 \\
                                  && Minibatch size ($b_\text{N}$) & 32, 64 \\
                                  && Tuning strategy & random grid search with 50 runs \\
                                  && Tuning criterion & $\mathcal{L}_\text{MSE}$ of generated sample \\
                                  && Number of epochs ($n_{e, \text{N}}$) & 20 \\
                                  && Optimizer & AdamW \\
                                  \cmidrule{2-4}
                                  & \multirow{8}{*}{target CDMs} &  Number of diffusion steps  ($T_{\text{T}}$) & 100 \\
                                  && $\varepsilon$-net output channels ($d_{\varepsilon,\text{T}}$) & 128 \\
                                  && Intensity of noise regularization ($\sigma^2_{x, \text{T}}$) & 0.0 \\
                                  && Learning rate ($\eta_\text{T}$) & 0.0001 \\
                                  && Minibatch size ($b_\text{T}$) & 64 \\
                                  && Tuning strategy & w/o tuning \\
                                  && Number of epochs ($n_{e, \text{T}}$) & 20 \\
                                  && Optimizer & AdamW \\
            \bottomrule
            \multicolumn{4}{l}{$E = 100$ (synthetic data), $= 200$ (IHDP dataset), $= 50$ (ACIC 2016 datasets), $ = 20$ (HC-MNIST dataset)} \\ 
            \multicolumn{4}{l}{$d_h = 15$ (synthetic data), $= 10$ (IHDP dataset), $= 15$ (ACIC 2016 datasets), $ = 64$ (HC-MNIST dataset)} \\ 
        \end{tabu}}
        \vspace{-1cm}
    \end{center}
\end{table*}

\newpage
\begin{table*}[h]
    \caption{{Hyperparameter tuning for different meta-learners and generative models for (ii)~colored MNIST experiments.}}
    \label{tab:hyperparams-colored-mnist}
    \vspace{-0.4cm}
    \begin{center}
    \scalebox{0.7}{
        \color{black}\begin{tabu}{l|l|l|r}
            \toprule
            Model & Sub-model & Hyperparameter & Range / Value \\
            \midrule
            \multirow{19}{*}{\begin{tabular}{l} RA-CNFs, \\ GDR-CNFs \end{tabular} } & \multirow{9}{*}{\begin{tabular}{l} nuisance CNFs \\ ($\equalhat$ Plug-in CNF) \\ ($\equalhat$ IPTW-CNFs) \end{tabular}} & Number of autoregressive layers ($n_{\text{l,N}}$) & 1, 3, 4, 6, 8\\
                                  && Intensity of noise regularization ($\sigma^2_{x, \text{N}}$) & 0.0 \\
                                  && Intensity of noise regularization ($\sigma^2_{y, \text{N}}$) & 0.0, $0.01^2$, $0.05^2$, $0.1^2$ \\
                                  && Learning rate ($\eta_\text{N}$) & 0.001, 0.005 \\
                                  && Minibatch size ($b_\text{N}$) & 32, 64 \\
                                  && Tuning strategy & random grid search with 50 runs \\
                                  && Tuning criterion & $\mathcal{L}_\text{NLL}$ \\
                                  && Number of epochs ($n_{e, \text{N}}$) & 10 \\
                                  && Optimizer & AdamW \\
                                  \cmidrule{2-4}
                                  & \multirow{8}{*}{target CNFs} &  Number of autoregressive layers ($n_{\text{l,T}}$) & 4 \\
                                  && Intensity of noise regularization ($\sigma^2_{x, \text{T}}$) & 0.0 \\
                                  && Intensity of noise regularization ($\sigma^2_{y, \text{T}}$) & 0.01 \\
                                  && Learning rate ($\eta_\text{T}$) & 0.0001 \\
                                  && Minibatch size ($b_\text{T}$) & 64 \\
                                  && Tuning strategy & w/o tuning \\
                                  && Number of epochs ($n_{e, \text{T}}$) & 10 \\
                                  && Optimizer & AdamW \\
            \midrule
            \multirow{19}{*}{\begin{tabular}{l} RA-CGANs, \\ GDR-CGANs \end{tabular} } & \multirow{9}{*}{\begin{tabular}{l} nuisance CGANs \\ ($\equalhat$ Plug-in CGANs) \\ ($\equalhat$ IPTW-CGANs) \end{tabular}} & Latent variable dimension  ($d_{z, \text{N}}$) & 50, 100, 150, 200 \\ 
                                  && Generator/discriminators output channels ($d_{\text{g/d,N}}$) & 64, 128 \\
                                  && Intensity of noise regularization ($\sigma^2_{x, \text{N}}$) & 0.0 \\
                                  && Learning rate ($\eta_\text{N}$) & 0.001, 0.0001, 0.0005 \\
                                  && Minibatch size ($b_\text{N}$) & 32, 64 \\
                                  && Tuning strategy & random grid search with 50 runs \\
                                  && Tuning criterion & $\mathcal{L}_\text{MSE}$ of generated sample \\
                                  && Number of epochs ($n_{e, \text{N}}$) & 10 \\
                                  && Optimizer & AdamW \\
                                  \cmidrule{2-4}
                                  & \multirow{8}{*}{target CGANs} &  Latent variable dimension  ($d_{z, \text{T}}$) & 150 \\ 
                                  && Generator/discriminators output channels ($d_{\text{g/d,T}}$) & 64 \\
                                  && Intensity of noise regularization ($\sigma^2_{x, \text{T}}$) & 0.0 \\
                                  && Learning rate ($\eta_\text{T}$) & 0.00005 \\
                                  && Minibatch size ($b_\text{T}$) & 64 \\
                                  && Tuning strategy & w/o tuning \\
                                  && Number of epochs ($n_{e, \text{T}}$) & 10 \\
                                  && Optimizer & AdamW \\
            \midrule
            \multirow{19}{*}{\begin{tabular}{l} RA-CVAEs, \\ GDR-CVAEs \end{tabular} } & \multirow{9}{*}{\begin{tabular}{l} nuisance CVAEs \\ ($\equalhat$ Plug-in CVAEs) \\ ($\equalhat$ IPTW-CVAEs) \end{tabular}} &  Latent variable dimension  ($d_{z, \text{N}}$) & 50, 100, 150 \\
                            && Encoder/decoder output channels ($d_{\text{e/d,N}}$) & 64, 128 \\
                                  && Intensity of noise regularization ($\sigma^2_{x, \text{N}}$) & 0.0 \\
                                  && Learning rate ($\eta_\text{N}$) & 0.01, 0.001, 0.005, 0.0001, 0.0005 \\
                                  && Minibatch size ($b_\text{N}$) & 32, 64 \\
                                  && Tuning strategy & random grid search with 50 runs \\
                                  && Tuning criterion & $\mathcal{L}_\text{ELBO}$ \\
                                  && Number of epochs ($n_{e, \text{N}}$) & 10 \\
                                  && Optimizer & AdamW \\
                                  \cmidrule{2-4}
                                  & \multirow{8}{*}{target CVAEs} &  Latent variable dimension  ($d_{z, \text{T}}$) & 50 \\
                                  && Encoder/decoder output channels ($d_{\text{e/d,T}}$) & 64 \\
                                  && Intensity of noise regularization ($\sigma^2_{x, \text{T}}$) & 0.0 \\
                                  && Learning rate ($\eta_\text{T}$) & 0.001 \\
                                  && Minibatch size ($b_\text{T}$) & 64 \\
                                  && Tuning strategy & w/o tuning \\
                                  && Number of epochs ($n_{e, \text{T}}$) & 10 \\
                                  && Optimizer & AdamW \\
            \midrule
            \multirow{21}{*}{\begin{tabular}{l} RA-CDMs, \\ GDR-CDMs \end{tabular} } & \multirow{10}{*}{\begin{tabular}{l} nuisance CDMs \\ ($\equalhat$ Plug-in CDMs) \\ ($\equalhat$ IPTW-CDMs) \end{tabular}} &  Number of diffusion steps  ($T_{\text{N}}$) & 50, 100, 150 \\
                                  && Number of units in the middle layer of the U-Net ($d_{\varepsilon,\text{N}}$) & 200, 500, 700 \\
                                  && $\varepsilon$-net output channels ($d_{\varepsilon,\text{N}}$) & 32, 64, 128 \\
                                  && Intensity of noise regularization ($\sigma^2_{x, \text{T}}$) & 0.0 \\
                                  && Learning rate ($\eta_\text{N}$) & 0.01, 0.001, 0.005, 0.0001, 0.0005 \\
                                  && Minibatch size ($b_\text{N}$) & 32, 64 \\
                                  && Tuning strategy & random grid search with 50 runs \\
                                  && Tuning criterion & $\mathcal{L}_\text{MSE}$ of generated sample \\
                                  && Number of epochs ($n_{e, \text{N}}$) & 20 \\
                                  && Optimizer & AdamW \\
                                  \cmidrule{2-4}
                                  & \multirow{9}{*}{target CDMs} &  Number of diffusion steps  ($T_{\text{T}}$) & 300 \\
                                  && Number of units in the middle layer of the U-Net ($d_{\varepsilon,\text{T}}$) & 700 \\
                                  && $\varepsilon$-net output channels ($d_{\varepsilon,\text{T}}$) & 128 \\
                                  && Intensity of noise regularization ($\sigma^2_{x, \text{T}}$) & 0.0 \\
                                  && Learning rate ($\eta_\text{T}$) & 0.0001 \\
                                  && Minibatch size ($b_\text{T}$) & 64 \\
                                  && Tuning strategy & w/o tuning \\
                                  && Number of epochs ($n_{e, \text{T}}$) & 20 \\
                                  && Optimizer & AdamW \\
            \bottomrule
            \multicolumn{4}{l}{$d_h = 5$} \\ 
        \end{tabu}}
        \vspace{-1cm}
    \end{center}
\end{table*}

\newpage
\section{Additional experiments} \label{app:experiments}

{\subsection{Synthetic dataset}
\textbf{Results.} In Fig.~\ref{fig:res-synth-ema}, we provide additional results for our \GDRlearners based on the synthetic data, where we vary the strength of EMA smoothing for the target models, $\lambda \in \{0.5, 0.9, 0.995 \}$ (see Appendix~\ref{app:implementation} for details). We see that the performance of different generative models remains relatively similar, with CDMs profiting from larger EMA smoothing values the most. Therefore, we set  $\lambda = 0.995$ for all the other experiments. 

\begin{figure}[h]
    \centering
    \vspace{-0.1cm}
    \includegraphics[width=\linewidth]{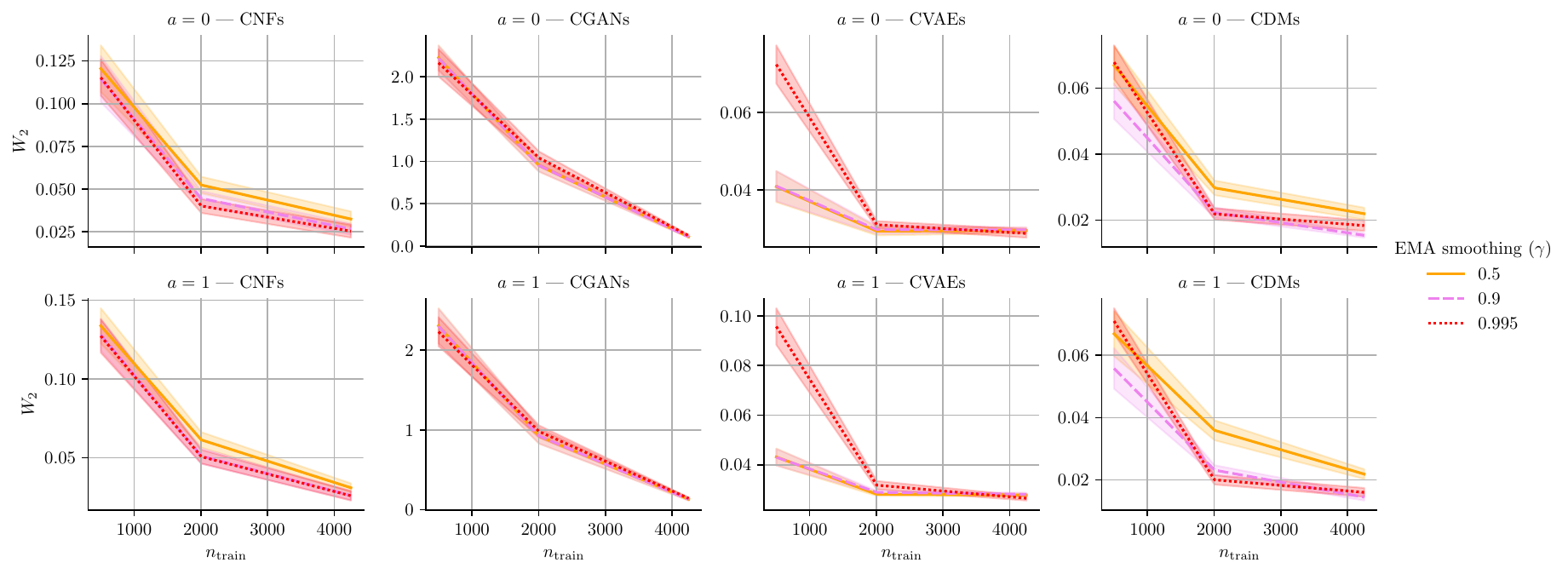}
    \vspace{-0.7cm}
    \caption{{Results for our \GDRlearners on the \textbf{synthetic experiments with varying size of training data} ($n_{\text{train}}$) \textbf{and varying strength of EMA smoothing} ($\lambda$). Reported: mean out-sample $W_2 \, \pm$ se over 20 runs (lower is better).}}
    \label{fig:res-synth-ema}
    \vspace{-0.1cm}
\end{figure}

}

\subsection{IHDP100 dataset}
\textbf{Results.} We report the out-sample results for the IHDP100 dataset \citep{hill2011bayesian,shalit2017estimating} in Table~\ref{tab:res-ihdp100}. As expected, our RA-learners achieve the best performance. This happens due to the severe overlap violations in the IHDP100 dataset \citep{curth2021nonparametric,curth2021really}, and learners that use IPTW weights are theoretically expected to perform worse.

\begin{table}[h]
    \centering
    \vspace{-0.1cm}
    \scalebox{0.7}{\begin{tabu}{l|cccc}
\toprule
 & \multicolumn{4}{c}{$a = 0$} \\
Learner & CNFs & CGANs & CVAEs & CDMs \\
\midrule
Plug-in & \underline{0.046 $\pm$ 0.109} & \textbf{0.314 $\pm$ 0.274} & \textbf{0.061 $\pm$ 0.039} & \underline{0.042 $\pm$ 0.021} \\
IPTW & 0.059 $\pm$ 0.130 & \underline{0.681 $\pm$ 0.385} & 0.132 $\pm$ 0.164 & 0.050 $\pm$ 0.029 \\
RA & \textbf{0.034 $\pm$ 0.049} & 0.870 $\pm$ 0.763 & \underline{0.076 $\pm$ 0.059} & \textbf{0.040 $\pm$ 0.035} \\
GDR & 0.082 $\pm$ 23.402 & 0.876 $\pm$ 0.869 & 0.088 $\pm$ 0.077 & 0.047 $\pm$ 0.032 \\
\bottomrule
\end{tabu}

}
    \vspace{0.1cm}
    \scalebox{0.7}{\begin{tabu}{l|cccc}
\toprule
 & \multicolumn{4}{c}{$a = 1$} \\
Learner & CNFs & CGANs & CVAEs & CDMs \\
\midrule
Plug-in & \underline{0.040 $\pm$ 0.095} & \textbf{0.147 $\pm$ 0.386} & \underline{0.089 $\pm$ 0.092} & 0.025 $\pm$ 0.031 \\
IPTW & 0.085 $\pm$ 0.103 & \underline{0.164 $\pm$ 0.240} & \textbf{0.062 $\pm$ 0.046} & 0.033 $\pm$ 0.054 \\
RA & \textbf{0.028 $\pm$ 0.065} & 0.390 $\pm$ 1.152 & 0.147 $\pm$ 0.100 & \underline{0.024 $\pm$ 0.052} \\
GDR & 0.069 $\pm$ 23.829 & 0.519 $\pm$ 0.976 & 0.110 $\pm$ 0.115 & \textbf{0.021 $\pm$ 0.047} \\
\bottomrule
\multicolumn{5}{l}{Lower $=$ better (best in \textbf{bold}, second best \underline{underlined}) }
\end{tabu}
}
    \vspace{-0.3cm}
    \caption{Results for the IHDP100 dataset. Reported: median out-sample $W_2 \, \pm$ std over 100 train/test splits.}
    \label{tab:res-ihdp100}
\end{table}

\subsection{ACIC 2026 dataset collection}

\textbf{Results.} In Fig.~\ref{fig:res-acic-full}, we display the result for every one of the 77 datasets in the ACIC 2016 collection. There, we see that our \GDRlearners outperform the majority of other learners when the target model class is restricted to linear (= (b)~linear setting).

\newpage
\begin{figure}
    \centering
    \vspace{-0.5cm}
    \includegraphics[width=1.1\linewidth]{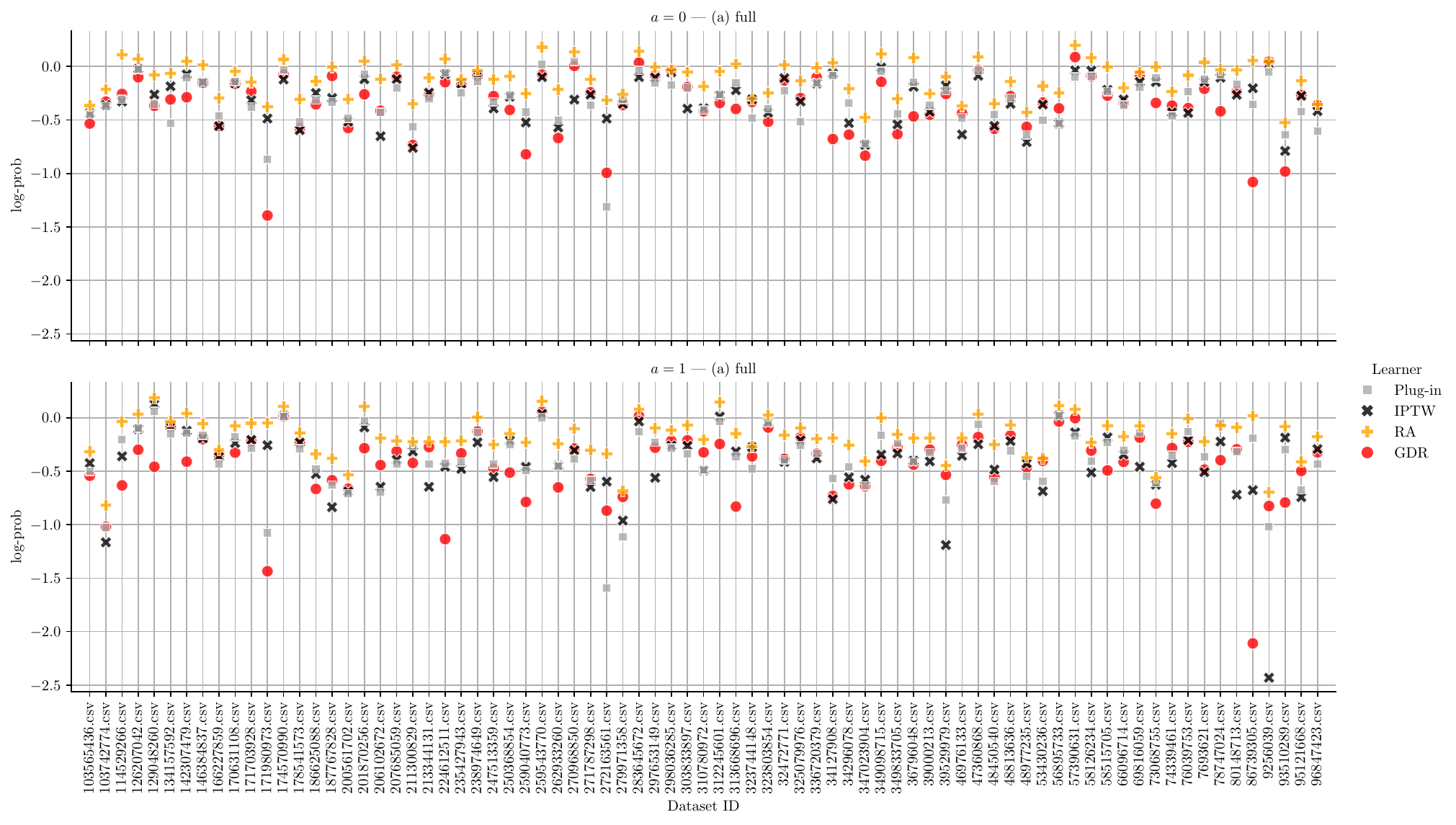} \\
    \includegraphics[width=1.1\linewidth]{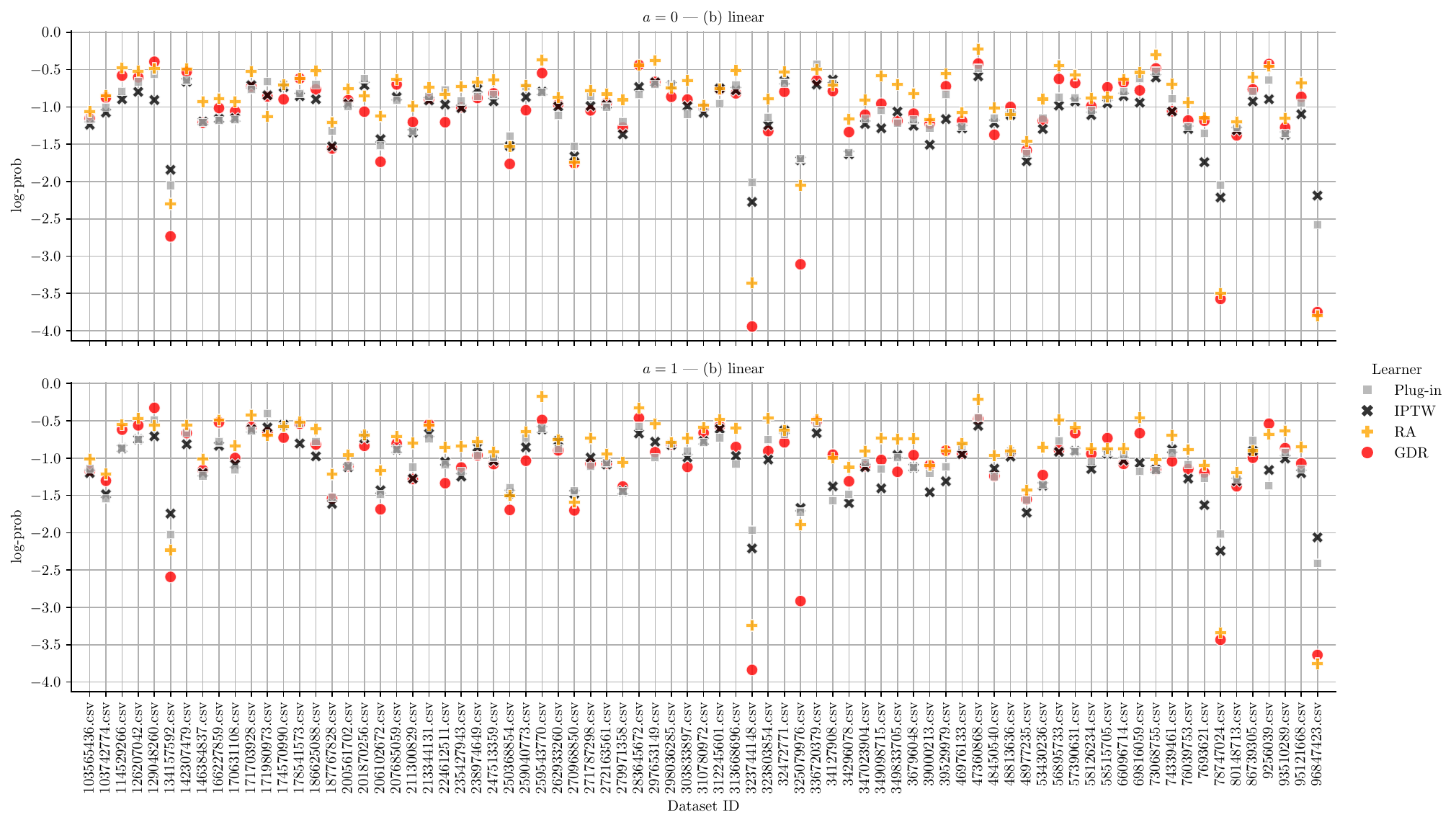}
    \vspace{-0.5cm}
    \caption{Full results for 77 semi-synthetic ACIC 2016 experiments in (a)~full and (b)~linear settings. Reported: out-sample median log-prob over 5 runs.}
    \label{fig:res-acic-full}
    \vspace{-1.3cm}
\end{figure}

{
\subsection{HC-MNIST dataset}

\textbf{Results.} We show the detailed results of Table~\ref{tab:res-hc-mnist} in Fig.~\ref{fig:res-hc-mnist-detail}. Therein, our \GDRlearners have slightly higher variance than the other meta-learners, mainly due to several outlier runs. This can be expected, as the \GDRlearners employ the inverse propensity scores and, thus, are not guaranteed to have a good finite-sample performance. Yet, the \GDRlearners outperform other meta-learners in terms of the median performance, especially for CNFs and CVAEs. 

\begin{figure}[h]
    \centering
    \vspace{-0.1cm}
    \includegraphics[width=\linewidth]{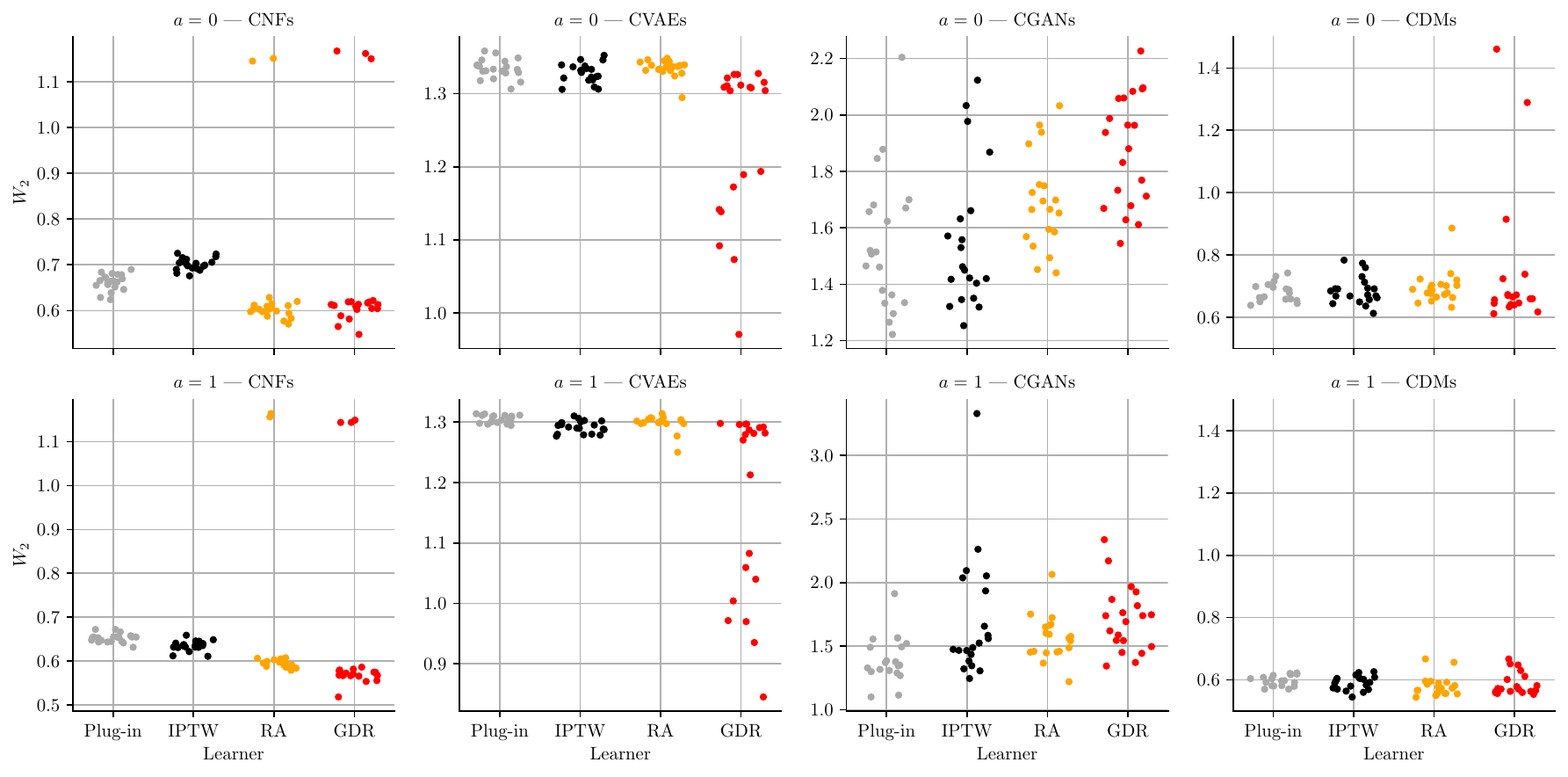}
    \vspace{-0.7cm}
    \caption{{Detailed results for the \textbf{HC-MNIST dataset}. Reported: out-sample $W_2$ for 20 runs. Note that the y-axis for CDMS is truncated and, thus, it omits some outliers for the plug-in learner. }}
    \vspace{-0.3cm}
    \label{fig:res-hc-mnist-detail}
\end{figure}

\subsection{Colored MNIST dataset}

\textbf{Results.} We report the quantitative results of the colored MNIST experiments in Table~\ref{tab:res-colored-mnist}. Therein, our \GDRlearners almost always achieve best / second-best performance in terms of $W_2$ distance among other learners for all the generative models. This demonstrates good scalability of our \GDRlearners to the high-dimensional outcomes setting. Furthermore, we report more detailed results in Fig.~\ref{fig:res-colored-mnist-detail}. Here, interestingly, our \GDRlearners have relatively low variance, unlike in the HC-MNIST experiments (see Fig~\ref{fig:res-hc-mnist-detail}). This result could be expected, as the colored MNIST dataset possesses a relatively good overlap, given the one-dimensional confounder.

{\begin{table}[h]
    \centering
    \vspace{-0.1cm}
    \scalebox{0.7}{\color{black}\begin{tabu}{l|cccc|cccc}
\toprule
 & \multicolumn{4}{c|}{$a = 0$} & \multicolumn{4}{c}{$a = 1$} \\
Learner & CNFs & CGANs & CVAEs & CDMs & CNFs & CGANs & CVAEs & CDMs \\
\midrule
Plug-in & 46.15 $\pm$ 7.88 & \underline{29.98 $\pm$ 7.56} & 53.59 $\pm$ 32.57 & \underline{20.80 $\pm$ 2.57} & 49.15 $\pm$ 12.04 & 19.65 $\pm$ 5.58 & 41.65 $\pm$ 36.65 & \underline{24.95 $\pm$ 6.68} \\
IPTW & 64.07 $\pm$ 26.39 & \underline{29.98 $\pm$ 6.04} & \textbf{32.90 $\pm$ 13.87} & 25.48 $\pm$ 7.04 & 73.84 $\pm$ 29.86 & 22.53 $\pm$ 4.28 & 25.79 $\pm$ 11.08 & 37.43 $\pm$ 11.45 \\
RA & \underline{46.02 $\pm$ 10.53} & 31.84 $\pm$ 4.58 & 38.26 $\pm$ 9.16 & \textbf{20.29 $\pm$ 2.38} & \underline{37.99 $\pm$ 16.46} & \textbf{17.62 $\pm$ 4.49} & \underline{22.00 $\pm$ 9.72} & 26.51 $\pm$ 5.53 \\
GDR & \textbf{39.69 $\pm$ 5.77} & \textbf{29.55 $\pm$ 3.57} & \underline{32.98 $\pm$ 8.20} & 20.94 $\pm$ 1.71 & \textbf{29.03 $\pm$ 9.10} & \underline{17.94 $\pm$ 5.47} & \textbf{18.89 $\pm$ 3.51} & \textbf{24.73 $\pm$ 6.10} \\
\bottomrule
\end{tabu}

}\\
    \scalebox{0.7}{\color{black}\begin{tabu}{l|cccc|cccc}
\toprule
 & \multicolumn{4}{c|}{$a = 2$} & \multicolumn{4}{c}{$a = 3$} \\
Learner & CNFs & CGANs & CVAEs & CDMs & CNFs & CGANs & CVAEs & CDMs \\
\midrule
Plug-in & 42.69 $\pm$ 9.90 & \underline{24.08 $\pm$ 7.57} & 49.02 $\pm$ 32.24 & \underline{18.44 $\pm$ 2.97} & 39.51 $\pm$ 9.40 & 22.67 $\pm$ 7.75 & 47.01 $\pm$ 32.85 & \underline{19.21 $\pm$ 3.27} \\
IPTW & 62.21 $\pm$ 27.47 & 24.24 $\pm$ 5.90 & \underline{28.60 $\pm$ 13.65} & 25.88 $\pm$ 6.94 & 62.11 $\pm$ 28.83 & 23.13 $\pm$ 5.06 & \underline{26.95 $\pm$ 12.74} & 26.36 $\pm$ 8.04 \\
RA & \underline{39.93 $\pm$ 11.49} & 24.56 $\pm$ 5.68 & 33.24 $\pm$ 9.43 & 18.92 $\pm$ 2.58 & \underline{38.11 $\pm$ 12.48} & \underline{21.99 $\pm$ 5.21} & 30.78 $\pm$ 8.91 & 19.59 $\pm$ 3.03 \\
GDR & \textbf{33.74 $\pm$ 6.63} & \textbf{21.33 $\pm$ 3.40} & \textbf{28.28 $\pm$ 7.48} & \textbf{18.23 $\pm$ 1.24} & \textbf{31.98 $\pm$ 7.70} & \textbf{21.24 $\pm$ 2.72} & \textbf{25.94 $\pm$ 7.07} & \textbf{18.39 $\pm$ 1.55} \\
\bottomrule
\end{tabu}

} \\
    \scalebox{0.7}{\color{black}\begin{tabu}{l|cccc}
\toprule
 & \multicolumn{4}{c}{$a = 4$} \\
Learner & CNFs & CGANs & CVAEs & CDMs \\
\midrule
Plug-in & 45.53 $\pm$ 10.10 & 21.86 $\pm$ 6.90 & 45.72 $\pm$ 33.65 & 20.70 $\pm$ 4.17 \\
IPTW & 67.06 $\pm$ 29.25 & 23.26 $\pm$ 4.70 & \underline{27.43 $\pm$ 10.97} & 29.48 $\pm$ 9.27 \\
RA & \underline{40.16 $\pm$ 13.32} & \underline{21.35 $\pm$ 4.25} & 28.20 $\pm$ 8.41 & \textbf{20.44 $\pm$ 3.21} \\
GDR & \textbf{32.55 $\pm$ 7.89} & \textbf{20.55 $\pm$ 3.50} & \textbf{24.21 $\pm$ 4.43} & \underline{20.60 $\pm$ 2.71} \\
\bottomrule
\multicolumn{5}{l}{Lower $=$ better (best in \textbf{bold}, second best \underline{underlined}) }
\end{tabu}
}
    \vspace{-0.3cm}
    \caption{{Quantitative results for the colored MNIST dataset. Reported: mean out-sample $W_2 \, \pm$ std over 10 runs.}}
    \label{tab:res-colored-mnist}
    \vspace{-0.3cm}
\end{table}}
}

{
\subsection{Runtime comparison}

Table~\ref{tab:runtimes} provides the runtime comparison of different meta-learners and generative models. Here, the runtime of the RA- and \GDRlearners is roughly twice as long as that of the plug-in and IPTW-learners. This is expected, as both RA- and \GDRlearners have two learning stages. Still, our \GDRlearners are well scalable.

\begin{figure}[h]
    \centering
    \vspace{-0.1cm}
    \includegraphics[width=\linewidth]{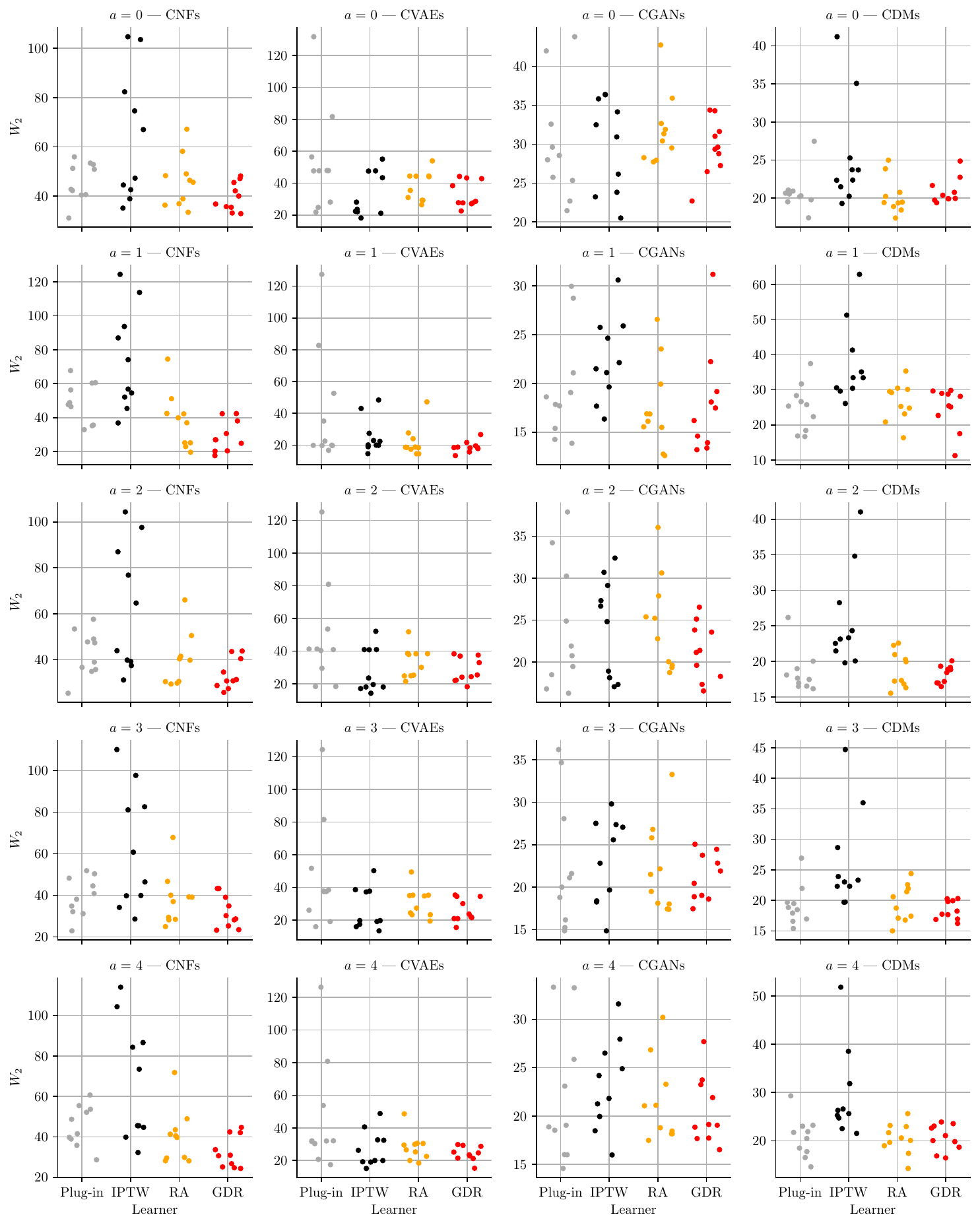}
    \vspace{-0.7cm}
    \caption{{Detailed results for the \textbf{colored MNIST dataset}. Reported: out-sample $W_2$ for 10 runs.}}
    \vspace{-0.3cm}
    \label{fig:res-colored-mnist-detail}
\end{figure}

\begin{table}[h]
    \centering
    \vspace{-0.3cm}
    \scalebox{0.9}{\color{black}\begin{tabu}{l|cccc}
\toprule
Learner & CNFs & CGANs & CVAEs & CDMs \\
\midrule
Plug-in & 1.25 $\pm$ 0.02 & 0.99 $\pm$ 0.01 & 0.71 $\pm$ 0.01 & 0.35 $\pm$ 0.02 \\
IPTW & 1.26 $\pm$ 0.01 & 1.10 $\pm$ 0.01 & 0.72 $\pm$ 0.01 & 0.35 $\pm$ 0.01 \\
RA & 2.46 $\pm$ 0.02 & 1.89 $\pm$ 0.02 & 1.25 $\pm$ 0.01 & 0.69 $\pm$ 0.01 \\
GDR & 2.40 $\pm$ 0.04 & 2.01 $\pm$ 0.02 & 1.25 $\pm$ 0.01 & 0.68 $\pm$ 0.01 \\
\bottomrule
\end{tabu}

}
    \vspace{-0.2cm}
    \caption{{Total runtime (in minutes) for different meta-learners and generative models based on the synthetic experiments with $n_{\text{train}}= 500$. Reported: mean duration $\pm$ std (lower is better). Experiments were carried out on 2 GPUs (NVIDIA A100-PCIE-40GB) with IntelXeon Silver 4316 CPUs @ 2.30GHz.}}
    \label{tab:runtimes}
    \vspace{-0.3cm}
\end{table}

}

\end{document}